\pgfplotsset{compat=newest}
\pgfplotsset{plot coordinates/math parser=false}       
\newcommand{\textcite}{\citet}
\definecolor{lred}{RGB}{200,0,0}
\definecolor{dred}{RGB}{130,0,0} \definecolor{dblu}{RGB}{0,0,130}
\definecolor{dgre}{RGB}{0,130,0} \definecolor{dgra}{RGB}{50,50,50}
\definecolor{mgra}{RGB}{100,100,100}
\definecolor{lgra}{RGB}{220,220,220}
\definecolor{MPG}{RGB}{000,125,122}
\definecolor{ora}{HTML}{FF9933}
\definecolor{AMPurple}{HTML}{663366}
\definecolor{Burgundy}{HTML}{993333}
\definecolor{Coffee}{HTML}{7B6049}
\definecolor{ForestGreen}{HTML}{005826}
\definecolor{Lavender}{HTML}{6E6AB1}
\definecolor{PSLightBlue}{HTML}{7DA7D9}
\newcommand{\g}{\,|\,}
\newcommand{\Var}{\mathbb{V}}
\renewcommand{\Re}{\mathbb{R}}
\newcommand{\N}{\mathcal{N}}
\newcommand{\Trans}{^{\intercal}}
\renewcommand{\vec}{\boldsymbol} 
\newcommand{\mat}{\boldsymbol} 
\newcommand{\inv}[1]{{#1}^{-\!1}}
\renewcommand{\O}{\mathcal{O}} 
\newcommand{\GP}{\mathcal{GP}}
\newcommand{\Id}{\vec{I}}
\newcommand{\tr}[1]{\operatorname{tr}\left[#1\right]}
\newcommand{\ce}{\colonequals}
\tikzset{>=stealth'} 
\tikzstyle{graphnode} = 
\tikzstyle{var}   =[graphnode,fill=white]
\tikzstyle{obs}   =[graphnode,fill=black,text=white]
\tikzstyle{fac}   =[rectangle,draw=black,fill=black!25,minimum size=5pt]
\tikzstyle{facprior} =[rectangle,draw=black,fill=black,text=white,minimum size=5pt]
\tikzstyle{edge}  =[draw=white,double=black,thick,-]
\tikzstyle{prior} =[rectangle, draw=black, fill=black, minimum size=
\tikzstyle{dirprior} = [circle, draw=black, fill=black, minimum
\DeclareSymbolFont{stmry}{U}{stmry}{m}{n}
\DeclareMathSymbol\leftarrowtriangle\mathrel{stmry}{"5E}
\DeclareMathSymbol\rightarrowtriangle\mathrel{stmry}{"5F}
\renewcommand{\gets}{\operatorname*{\leftarrowtriangle}}
\renewcommand{\to}{\operatorname*{\rightarrowtriangle}}
\newcounter{PHcomment}
\newcommand{\superimpose}[2]{
  {\ooalign{$#1\@firstoftwo#2$\cr\hfil$#1\@secondoftwo#2$\hfil\cr}}}
\newcommand{\sk}{\mathpalette\superimpose{{\otimes}{\ominus}}}
\newcommand{\kr}{\otimes}
\newcommand{\vecm}[1]{\operatorname{vec}\left(#1\right)} 
\newcommand{\vecmtrans}[1]{\vecm{#1}\Trans} 
\newcommand{\tomat}[1]{\operatorname{mat}\left(#1\right)} 
\newcommand{\dx}[1]{\ \mathrm{d}#1} 
\newcommand{\pname}[1]{\emph{#1}} 
\newcommand{\eqcomment}[1]{\quad\text{#1}}
\newcommand{\latin}[1]{\emph{#1}}
\newcommand{\ie}{\latin{i.e.~}}
\newcommand{\eg}{\latin{e.g.~}}
\newcommand{\cf}{\latin{c.f.~}}
\newcommand{\todo}[1]{}
\newcommand{\pn}[1]{}
\newcommand{\qtp}[1]{}
\newcommand{\tempdisable}[1]{#1} 
\newcommand{\ksig}{\psi} 
\newcommand{\kw}{\ensuremath{w}} 
\newcommand{\kmu}{\ensuremath{{k}_0}} 
\newcommand{\km}{{k}_M} 
\newcommand{\ksigm}{\ksig_M} 
\newcommand{\kwm}{\ensuremath{w_M}} 
\renewcommand{\S}{\ensuremath{\vec S}} 
\newcommand{\T}{\ensuremath{\vec T_{\vec p}}} 
\newcommand{\Tm}[1]{\ensuremath{\tomat{\T#1}}} 
\newcommand{\Y}{\ensuremath{\vec Y}} 
\newcommand{\Km}{\vec K_M} 
\newcommand{\noise}{\ensuremath{\sigma^2}}
\newcommand{\ls}{\gamma} 
\newcommand{\kmcg}{KMCG}
\newcommand{\SoR}{SoR}
\newcommand{\smse}{SMSE}
\newcommand{\relerr}{\epsilon_f}
\newcommand{\relerrVar}{\epsilon_{var}}
\newcommand{\relerrNLZ}{\epsilon_{ev}}
\newcommand{\baseline}{baseline}
\newcommand{\baselines}{baselines}
\newcommand{\FITC}{FITC}
\newcommand{\VFE}{VFE}
\newcommand{\repetitions}{10} 
\newcommand{\fsize}{0.315} 
\newcommand{\defaultwidthfactor}{0.18} 
\newlength{\tableoffset}
\renewcommand{\epsilon}{\varepsilon}
\newlength\figheight%
\newlength\figwidth%
\newcommand{\repeatable}[2]{%
  \label{#1}\global\@namedef{repeatable@#1}{#2}#2
}
\newcommand{\eqrepeat}[1]{%
  \@ifundefined{repeatable@#1}{NOT FOUND}{\@nameuse{repeatable@#1}}%
  \tag{\ref{#1}}}
\newcommand{\resultLegend}[1]{
{\smash{
\tikzexternaldisable
\definecolor{mycolor1}{rgb}{0.00000,0.47170,0.46040}%
\definecolor{mycolor2}{rgb}{0.49060,0.00000,0.00000}%
\definecolor{mycolor3}{rgb}{0.00000,0.00000,0.50900}%
\raisebox{-.65\figheight}{\ref{#1}}
\tikzexternalenable
}}
}
\newcommand{\legendFull}{\resultLegend{all}}
\newcommand{\legendWithoutCG}{\resultLegend{withoutcg}}
\newcommand{\legendAllCG}{\resultLegend{cgunstable}}
\newcommand{\resultFigureTable}[5][\\CG-steps $P$&CG-steps $P$&CG-steps $P$]{
 \renewcommand{\fsize}{0.3}
\setlength{\figheight}{0.27\textheight}
\setlength{\figwidth}{0.4\textwidth}
\setlength{\tabcolsep}{-4pt}
\hbox{\hspace{-1.5\tableoffset}
\begin{tabular}{ccc}
\adjustbox{valign=t}{\input{tikz/#2_ABALONE_folds_2_folds_visible_1_#3__title_#4__seed_12345.tikz}}&\adjustbox{valign=t}{\input{tikz/#2_PRECIPITATION_folds_2_folds_visible_1_#3__title_#4__seed_12345.tikz}}
&\adjustbox{valign=t}{\input{tikz/#2_PUMADYN_folds_2_folds_visible_1_#3__title_#4__seed_12345.tikz}}\\
\adjustbox{valign=t}{\input{tikz/#2_MPG_folds_2_folds_visible_1_#3__title_#4__seed_12345.tikz}}
& #5
&\adjustbox{valign=t}{\input{tikz/#2_POLETELECOMM_folds_2_folds_visible_1_#3__title_#4__seed_12345.tikz}}\\
\adjustbox{valign=t}{\input{tikz/#2_ELEVATORS_folds_2_folds_visible_1_#3__title_#4__seed_12345.tikz}}&\adjustbox{valign=t}{\input{tikz/#2_AILERONS_folds_2_folds_visible_1_#3__title_#4__seed_12345.tikz}}
&\adjustbox{valign=t}{\input{tikz/#2_TOY_folds_2_folds_visible_1_#3__title_#4__seed_12345.tikz}}
#1
\end{tabular}
}
}
\newcommand{\gridResultFigureTable}[2][CG-steps $P$]{
\let\saveddatasetCaption\datasetCaption
\let\saveddatasetSize\datasetSize
\let\saveddatasetDim\datasetDim
\renewcommand{\datasetCaption}[1]{}
\renewcommand{\datasetSize}[1]{}
\renewcommand{\datasetDim}[1]{}

\hbox{\hspace{-1.5\tableoffset}
\renewcommand{\fsize}{0.3}
\setlength{\figheight}{0.27\textheight}
\setlength{\figwidth}{0.27\textwidth}
\setlength{\tabcolsep}{-4pt}
\begin{tabular}{ccccc}
& \hphantom{$10^{-1}$}$10\times 10$& $1000\times 1000$ & $100\times 100\times 100$ & $30\times 30\times 30\times 30$\\
&\adjustbox{valign=t}{\input{tikz/#2_linspace_distorted_10_10_uniform_test_set_folds_2_folds_visible_1_covSEard__title_relative_error__seed_12345.tikz}}
&\adjustbox{valign=t}{\input{tikz/#2_linspace_distorted_1000_1000_uniform_test_set_folds_2_folds_visible_1_covSEard__title_relative_error__seed_12345.tikz}}
&\adjustbox{valign=t}{\input{tikz/#2_linspace_distorted_100_100_100_uniform_test_set_folds_2_folds_visible_1_covSEard__title_relative_error__seed_12345.tikz}}
&\adjustbox{valign=t}{\input{tikz/#2_linspace_distorted_30_30_30_30_uniform_test_set_folds_2_folds_visible_1_covSEard__title_relative_error__seed_12345.tikz}}
\\ 
&\adjustbox{valign=t}{\input{tikz/#2_linspace_distorted_10_10_uniform_test_set_folds_2_folds_visible_1_covSEard__title_relative_error_of_var__seed_12345.tikz}}
&\adjustbox{valign=t}{\input{tikz/#2_linspace_distorted_1000_1000_uniform_test_set_folds_2_folds_visible_1_covSEard__title_relative_error_of_var__seed_12345.tikz}}
&\adjustbox{valign=t}{\input{tikz/#2_linspace_distorted_100_100_100_uniform_test_set_folds_2_folds_visible_1_covSEard__title_relative_error_of_var__seed_12345.tikz}}
\makebox[0pt]{
\hspace{-\figwidth}
\legendFull
}
&\adjustbox{valign=t}{\input{tikz/#2_linspace_distorted_30_30_30_30_uniform_test_set_folds_2_folds_visible_1_covSEard__title_relative_error_of_var__seed_12345.tikz}}
\\ 
&\adjustbox{valign=t}{\input{tikz/#2_linspace_distorted_10_10_uniform_test_set_folds_2_folds_visible_1_covSEard__title_relative_nlZ_error__seed_12345.tikz}}
&\adjustbox{valign=t}{\input{tikz/#2_linspace_distorted_1000_1000_uniform_test_set_folds_2_folds_visible_1_covSEard__title_relative_nlZ_error__seed_12345.tikz}}
&\adjustbox{valign=t}{\input{tikz/#2_linspace_distorted_100_100_100_uniform_test_set_folds_2_folds_visible_1_covSEard__title_relative_nlZ_error__seed_12345.tikz}}
&\adjustbox{valign=t}{\input{tikz/#2_linspace_distorted_30_30_30_30_uniform_test_set_folds_2_folds_visible_1_covSEard__title_relative_nlZ_error__seed_12345.tikz}}
\\&\hphantom{$\epsilon_f 10^{-2}$}#1&#1&#1&#1
\end{tabular}
}
\let\datasetCaption\saveddatasetCaption
\let\datasetSize\saveddatasetSize
\let\datasetDim\saveddatasetDim
}
\newcommand{\datasetCaption}[1]{#1}
\newcommand{\datasetSize}[1]{\\{\tiny $N=#1$}}
\newcommand{\datasetDim}[1]{, {\tiny $D=#1$}}
\newcommand{\cgImpl}[1]{} 
\newcommand{\FOM}{CG}
\newcommand{\commonCaption}{
The shaded area visualizes minimum and maximum over all \baseline{} runs.
A cross denotes the end of a crashed run.
}
\begin{document}
	
\tikzexternaldisable
\smash{
%
%
\definecolor{mycolor1}{rgb}{0.00000,0.47170,0.46040}%
\definecolor{mycolor2}{rgb}{0.49060,0.00000,0.00000}%
\definecolor{mycolor3}{rgb}{0.00000,0.00000,0.50900}%
\begin{tikzpicture}

\begin{axis}[%
width=0.951\figwidth,
height=\figheight,
at={(0\figwidth,0\figheight)},
scale only axis,
xmin=0,
xmax=1,
ymin=0,
ymax=1,
hide axis,
axis x line*=bottom,
axis y line*=left,
legend style={at={(0.629,0.779)},anchor=south west,legend cell align=left,align=left,draw=white!15!black},
mystyle,
legend to name=all
]
\addplot [color=mycolor1,solid,line width=1.4pt]
  table[row sep=crcr]{%
-1	-1\\
};
\addlegendentry{\FOM{}};

\addplot [color=mycolor2,solid,line width=1.4pt]
  table[row sep=crcr]{%
-1	-1\\
};
\addlegendentry{\kmcg{}\cgImpl{\FOM{}}};

\addplot [color=mycolor3,dotted,line width=1.0pt]
  table[row sep=crcr]{%
-1	-1\\
};
\addlegendentry{VFE};

\addplot [color=mycolor3,dashed,line width=1.0pt]
  table[row sep=crcr]{%
-1	-1\\
};
\addlegendentry{FITC};

\end{axis}
\end{tikzpicture}%
%
%
\definecolor{mycolor1}{rgb}{0.00000,0.00000,0.50900}%
\definecolor{mycolor2}{rgb}{0.49060,0.00000,0.00000}%
\begin{tikzpicture}

\begin{axis}[%
width=0.951\figwidth,
height=\figheight,
at={(0\figwidth,0\figheight)},
scale only axis,
xmin=0,
xmax=1,
ymin=0,
ymax=1,
hide axis,
axis x line*=bottom,
axis y line*=left,
legend style={at={(0.629,0.81)},anchor=south west,legend cell align=left,align=left,draw=white!15!black},
mystyle,
legend to name=withoutcg
]
\addplot [color=mycolor1,dashed,line width=1.0pt]
  table[row sep=crcr]{%
-1	-1\\
};
\addlegendentry{FITC};

\addplot [color=mycolor1,dotted,line width=1.0pt]
  table[row sep=crcr]{%
-1	-1\\
};
\addlegendentry{VFE};

\addplot [color=mycolor2,solid,line width=1.4pt]
  table[row sep=crcr]{%
-1	-1\\
};
\addlegendentry{\kmcg{}\cgImpl{\FOM{}}};

\end{axis}
\end{tikzpicture}%
%
%
\definecolor{mycolor1}{rgb}{0.49060,0.00000,0.00000}%
\definecolor{mycolor2}{rgb}{0.00000,0.47170,0.46040}%
\begin{tikzpicture}

\begin{axis}[%
width=0.951\figwidth,
height=\figheight,
at={(0\figwidth,0\figheight)},
scale only axis,
xmin=0,
xmax=1,
ymin=0,
ymax=1,
hide axis,
axis x line*=bottom,
axis y line*=left,
legend style={at={(0.629,0.779)},anchor=south west,legend cell align=left,align=left,draw=white!15!black},
mystyle,
legend to name=cgunstable
]
\addplot [color=mycolor1,dashdotted,line width=1.0pt]
  table[row sep=crcr]{%
-1	-1\\
};
\addlegendentry{\kmcg{}\cgImpl{CG}};

\addplot [color=mycolor2,dashdotted,line width=1.0pt]
  table[row sep=crcr]{%
-1	-1\\
};
\addlegendentry{CG};

\addplot [color=mycolor2,solid,line width=1.4pt]
  table[row sep=crcr]{%
-1	-1\\
};
\addlegendentry{\FOM{}};

\addplot [color=mycolor1,solid,line width=1.4pt]
  table[row sep=crcr]{%
-1	-1\\
};
\addlegendentry{\kmcg{}\cgImpl{\FOM{}}};

\end{axis}
\end{tikzpicture}%
}
\tikzexternalenable

\title{Conjugate Gradients for Kernel Machines}

\author{\name Simon Bartels \email sbartels@tue.mpg.de \\
       \name Philipp Hennig \email ph@tue.mpg.de \\
       \addr
       Max Planck Institute for Intelligent Systems and University of T\"ubingen\\
       Maria-von-Linden-Str.~6, T\"ubingen, GERMANY}

\editor{Mohammad Emtiyaz Khan}

\maketitle

\pn{
\textbf{the story in short:}\\
We treat the problem as inference.
Our model shows how to construct a low-rank estimator for a symmetric bivariate function.
The typical approach is to project the infeasible operation (matrix inversion) into another space where it becomes feasible (low-rank space?).
The question is where to project to.
For some known approximation methods we show which projections they make.
The novelty is to use CG for projections.
As opposed to above methods, the projection is not fixed in advance but adapted on the fly.
This is what costs $\O(N^2)$.\\
\kmcg{} introduces next to the parameters $N$ and $P$ a third option $M$.
I need to elaborate on the distinction between $N$ and $M$.
}

\pn{
\textbf{How could I meet the reviewer concerns?}
\\\textbf{some problems}
\begin{itemize}
\item The impact of this paper is not very large so putting a lot of effort into this is not worth it.
\end{itemize}
}

\pn{
\textbf{How I see this work}
\begin{itemize}
	\item explorative
	\item goal is to present an idea how to improve on CG
	\item not much theory, as this is very hard and not worth it for this paper
	\item CG requires a low-level implementation which is not worth it for this paper, the real-time experiments are shown just for completeness in the appendix
\end{itemize}
}

\pn{
\begin{itemize}
\item missing stopping criterion in future work? How about I run \kmcg{} on the same problem and incorporate the noise in the likelihood? Also allows to interpret the noise-term for numerical stability.
\begin{itemize}
	\item I should make a table of all algorithm that I should try.
	\item explicit vs. implicit residual
	\item If I use the implicit residual it might be worth it to calculate the $\vec x$ explicit
	\item With and without noise on kernel matrix
\end{itemize}
\item Proofs
\item heading
\end{itemize}
}
\begin{abstract}
Regularized least-squares (kernel-ridge / Gaussian process) regression is a fundamental algorithm of statistics and machine learning. 
Because generic algorithms for the exact solution have cubic complexity in the number of datapoints, large datasets require to resort to approximations.
In this work, the computation of the least-squares prediction is itself treated as a probabilistic inference problem. 
We propose a structured Gaussian regression model on the kernel function that uses projections of the kernel matrix to obtain a low-rank approximation of the kernel and the matrix. 
A central result is an enhanced way to use the method of conjugate gradients for the specific setting of least-squares regression as encountered in machine learning.
Our method improves the approximation of the kernel ridge regressor / Gaussian process posterior mean over vanilla conjugate gradients and, allows computation of the posterior variance and the log marginal likelihood (evidence) without further overhead. 
\end{abstract}

\begin{keywords}
  Gaussian processes, kernel methods, low-rank approximation, conjugate gradients, probabilistic numerics
\end{keywords}

\section{Introduction}

Regularized least-squares is one of the fundamental algorithms in  statistics and machine learning. 
Due to its importance it is known under a variety of names such as kernel ridge regression \citep{hoerl1970ridge}, spline regression (\eg{}\citet{wahba1990spline}), Kriging (\eg{}\citet{matheron1973intrinsic}) and Gaussian process (GP) regression (\eg{}\citet{RasmussenWilliams}).
The common principle is the estimation of a regression function from a reproducing kernel Hilbert space (RKHS) $f:\mathbb{X}\rightarrow\Re$ over some domain $\mathbb{X}$ that minimizes the regularized loss \cite[Eq.~(6.19)]{RasmussenWilliams}
\begin{align}
  \mathcal{L}(f) = \frac{1}{2}\|f\|^2 _k + \frac{1}{2\noise} \sum_{i=1}^N (y_i - f(x_i))^2,
\end{align}
where $(\vec x_i,y_i) \in \mathbb{X}\times\Re$, $i=1,\dots,N$ are observations, $\sigma^2 \in \Re^+$ is a regularization parameter, $k$ is the corresponding kernel and $\|\cdot\|_k$ is the RKHS norm of $f$.

The minimizer of this loss has a closed-form solution that coincides with the posterior mean of the Gaussian process $p(f\g \vec X, \vec y)=\GP(f; \bar{f}, \bar{c})$ under a zero-mean prior $p(f)=\GP(f;0,k)$ 
and likelihood $p(\vec{y}\g \vec{f}(\vec{X})) = \N(\vec{y};\vec{f}(\vec{X}),\noise\Id)$ \citep{kimeldorf1970correspondence,wahba1990spline,RasmussenWilliams}: \begin{align}
\repeatable{eq:mean}{
  \bar{f}(\vec x_*)&=\vec k_* \Trans(\vec K+\noise \vec I)^{-1}\vec y} ,
\\  \label{eq:var}
  \bar{c}(\vec x_*, \vec x_{**})&=k(\vec x_*, \vec x_{**})-\vec k_*\Trans(\vec K+\noise \vec I)^{-1}\vec k_{**}
\end{align}
where $\vec K_{ij}=k(x_i,x_j)$, and $\vec k_{*,i} = k(\vec x_*,\vec x_i)$. 

\begin{figure} 
 \setlength{\figwidth}{0.9\textwidth}
 \setlength{\figheight}{0.2\textwidth}
 \input{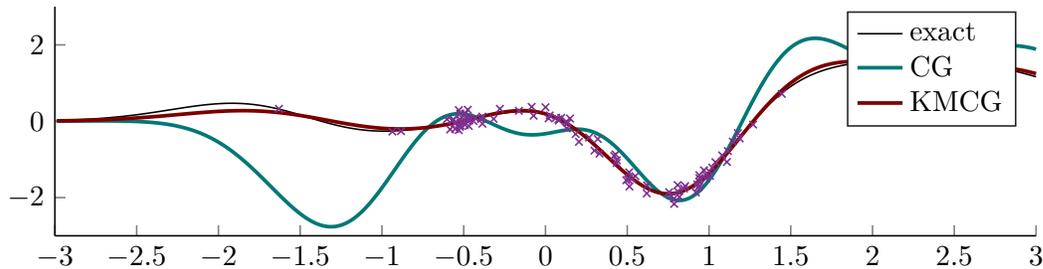}
 \caption{
 Our algorithm \kmcg{} in comparison to CG on a toy setup. 
The dataset consists of one hundred data-points where the targets are a draw from a zero-mean Gaussian process with squared exponential kernel (Eq.~\eqref{eq:sq_exp} with $\vec \Lambda=0.25$ and $\theta_f=2$).
The thin, black line is the posterior mean of that Gaussian process (Eq.~\ref{eq:mean}).
The light-green line is the mean prediction produced by conjugate gradients after $P=7$ steps and the dark-red line is the mean prediction of \kmcg{} (where the number of inducing inputs $M=N$).
}
 \label{fig:toy_example}
\end{figure}

For datasets up to about $N\sim 5 \cdot 10^4$ observations, the standard approach to solve Equations \eqref{eq:mean} and \eqref{eq:var} is to compute a Cholesky decomposition \citep{Cholesky} of $\vec K+\noise \vec I$ at a cubic cost $\mathcal{O}(N^3)$.
For larger datasets, a number of approximate algorithms have been proposed that yield an approximation $\hat{f}$ to $\bar{f}$ in linear time \citep{Zhu.1998, Csato.2002, Snelson.2007, Walder.2008, Rahimi.2009, titsias09:_variat_learn_induc_variab_spars_gauss_proces, LazaroGredilla.2010, Yan.2010, Le.2013, Solin.2014, wilson2015kissGP, hensman18variationalfourier}.
Comparative empirical studies like those of \citet{chalupka2013comparison} or \citet{quinonero2005unifying} indicate that some of these methods can provide good approximations in a reasonable amount of time, although there is no conclusive `best practice' among these choices.

Not included in the list above are iterative linear solvers, such as the method of conjugate gradients (CG) \citep{hestenes1952methods}.
These algorithms construct an approximate solution to systems of linear equations $\vec A\vec x=\vec b$ using repeated matrix-vector multiplications (MVMs). 
In general, each MVM with $\vec K$ has quadratic costs $\O(N^2)$ which is one reason why the machine learning community prefers the methods above.
Furthermore, a linear solver needs to run again for new test inputs when computing the posterior uncertainty (Eq.~\ref{eq:var}) and Gaussian process regression often requires the evaluation of the log marginal likelihood:
\begin{align}
\label{eq:llh}
\ln p(\vec y)&=-\frac{1}{2}\vec y\Trans (\vec K + \noise \vec I)^{-1}\vec y+\frac{1}{2}\ln|2\pi (\vec K+\noise\vec I)|^{-1}.
\end{align}
Conjugate gradients can be used to estimate $|\vec K|$ \citep{filippone2015ulisse}, yet also requiring several runs.

Below, we present a way of using CG specifically tailored to Equations \eqref{eq:mean} to \eqref{eq:llh} which we dub \pname{kernel machine conjugate gradients} (\kmcg{}).
Our approach follows the notion of probabilistic numerics (PN) \citep{HenOsbGirRSPA2015} which phrases approximation as inference.
A common idea of PN formulations is to replace a deterministic yet intractable operation by Bayesian inference where, by design, prior and likelihood admit analytic estimation of the intractable solution.
In our case, the `intractable' operation is the inversion of very large matrices (\ie{}of size $N\times N$ such that $N^3$ is intractable), and the design criterion for the prior is that the posterior mean over the matrix has to admit efficient inversion, which we achieve through the matrix inversion lemma. 
Instead of providing an approximation solely to the vector $(\vec K+\noise \vec I)^{-1}\vec y$, our approach uses the MVMs performed by CG to learn an approximation directly to the function $k$.

The following section proposes a model-template that can be used to learn low-rank approximations to kernel functions. 
The subsequent section shows how conjugate gradients can be applied into that template.
A discussion on how our approach relates to existing work is presented thereafter, in Section \ref{sec:related_work}.

\section{Model}
\label{sec:model}
To approximate Equations \eqref{eq:mean} to \eqref{eq:llh}, we will approximate the kernel and, to this end, present a probabilistic estimation rule for $k$.
The idea is to treat the kernel as unknown and to choose prior and likelihood such that the posterior mean $\km$ is efficient to evaluate and yields a kernel of finite rank.
Substituting for this finite-rank kernel in Equations \eqref{eq:mean} to \eqref{eq:llh} then allows to compute these expressions faster.
The following sections describe finite-rank kernel, our prior, possible likelihoods and resulting posteriors.
Figure \ref{fig:story} shows a schematic summary of this section.
\setlength{\figwidth}{\defaultwidthfactor\textwidth}
\setlength{\figheight}{0.8\figwidth}

\afterpage{
\thispagestyle{empty} 
\begin{figure}[hp!]
\caption{
Schematic summary of our proposed kernel approximation method. 
}
\label{fig:story}

\centering
\begin{mdframed}
\rotatebox[origin=c]{90}{
\textbf{ground-truth} in $\Re^2$
}
%
\begin{subfigure}{0.95\textwidth}
\centering
\setlength{\figwidth}{\defaultwidthfactor\textwidth}
\setlength{\figheight}{0.8\figwidth}
%
%
\begin{tikzpicture}

\begin{axis}[%
width=0.75\figwidth,
height=\figheight,
at={(0\figwidth,0\figheight)},
scale only axis,
point meta min=-2.0000,
point meta max=2.0000,
axis on top,
xmin=-3.0060,
xmax=3.0060,
xtick={-3,  0,  3},
xlabel={$x_1$},
ymin=-3.0060,
ymax=3.0060,
ytick={-3,  0,  3},
ylabel={$x_2$},
axis background/.style={fill=white},
legend style={legend cell align=left,align=left,draw=white!15!black},
mystyle, 
colorbar style={ ytick={-2, 0, 2}, yticklabels={-2.0, 0.0, 2.0}},
colormap={mymap}{[1pt] rgb(0pt)=(0.490852,0.000494315,0.000494315); rgb(1010pt)=(0.999496,0.999011,0.999011); rgb(1011pt)=(0.999506,0.999739,0.999733); rgb(2022pt)=(0,0.4717,0.4604)},
colorbar
]
\addplot [forget plot] graphics [xmin=-3.0060,xmax=3.0060,ymin=-3.0060,ymax=3.0060] {./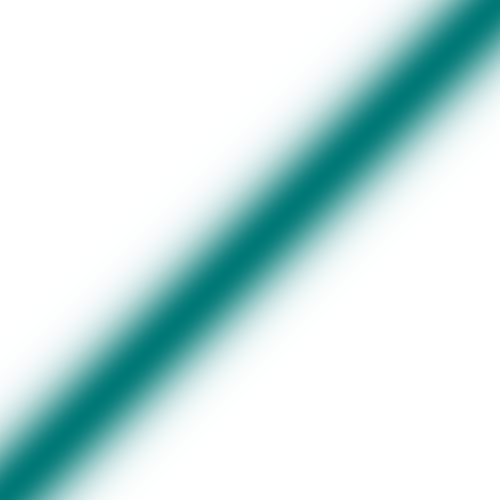};
\end{axis}
\end{tikzpicture}%
\input{tikz/fig_uncert_progression_ground_truth__data_grid__title_slice__step_0__seed_2.tikz}
\caption{The kernel $k$, here a squared exponential (Eq.~\ref{eq:sq_exp}) is assumed to be an unknown function.}
\label{fig:story_prior}
\end{subfigure}
\end{mdframed}
\begin{mdframed}
\rotatebox[origin=c]{90}{
\textbf{prior} in $\Re^2$
}
\begin{subfigure}{0.95\textwidth}
\centering
\begin{tabular}{cccc}
%
%
\begin{tikzpicture}

\begin{axis}[%
width=0.75\figwidth,
height=\figheight,
at={(0\figwidth,0\figheight)},
scale only axis,
point meta min=-2.0000,
point meta max=2.0000,
axis on top,
xmin=-3.0060,
xmax=3.0060,
xtick={-3,  0,  3},
ymin=-3.0060,
ymax=3.0060,
ytick={-3,  0,  3},
axis background/.style={fill=white},
legend style={legend cell align=left,align=left,draw=white!15!black},
mystyle
]
\addplot [forget plot] graphics [xmin=-3.0060,xmax=3.0060,ymin=-3.0060,ymax=3.0060] {./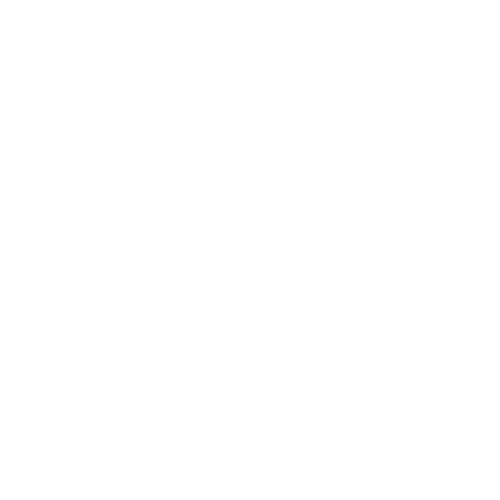};
\end{axis}
\end{tikzpicture}%
&
%
%
\begin{tikzpicture}

\begin{axis}[%
width=0.703\figwidth,
height=\figheight,
at={(0\figwidth,0\figheight)},
scale only axis,
point meta min=-2.0000,
point meta max=2.0000,
axis on top,
xmin=-3.0060,
xmax=3.0060,
xtick={-3,  0,  3},
ymin=-3.0060,
ymax=3.0060,
ytick={\empty},
axis background/.style={fill=white},
mystyle
]
\addplot [forget plot] graphics [xmin=-3.0060,xmax=3.0060,ymin=-3.0060,ymax=3.0060] {./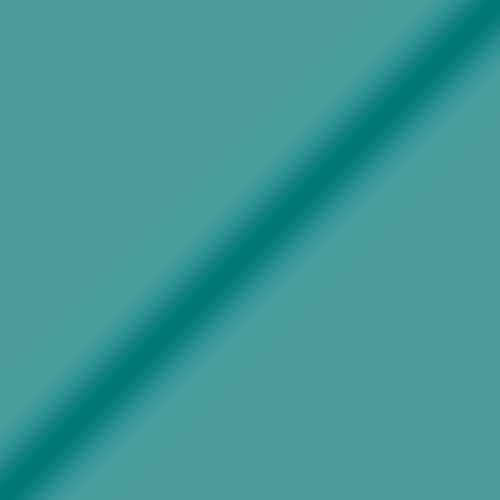};
\end{axis}
\end{tikzpicture}%
&
%
%
\begin{tikzpicture}

\begin{axis}[%
width=0.703\figwidth,
height=\figheight,
at={(0\figwidth,0\figheight)},
scale only axis,
point meta min=-2.0000,
point meta max=2.0000,
axis on top,
xmin=-3.0060,
xmax=3.0060,
xtick={-3,  0,  3},
ymin=-3.0060,
ymax=3.0060,
ytick={\empty},
axis background/.style={fill=white},
legend style={legend cell align=left,align=left,draw=white!15!black},
mystyle
]
\addplot [forget plot] graphics [xmin=-3.0060,xmax=3.0060,ymin=-3.0060,ymax=3.0060] {./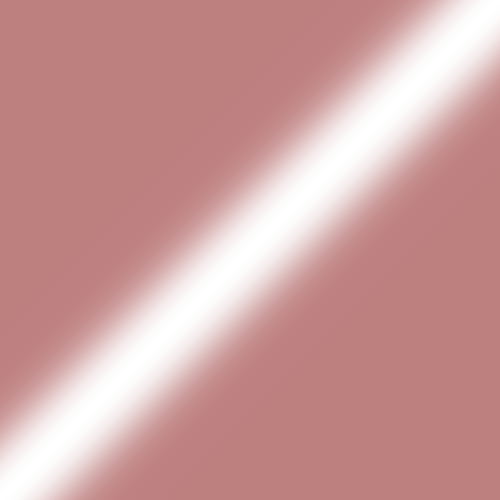};
\end{axis}
\end{tikzpicture}%
&
%
%
\begin{tikzpicture}

\begin{axis}[%
width=0.703\figwidth,
height=\figheight,
at={(0\figwidth,0\figheight)},
scale only axis,
point meta min=-2.0000,
point meta max=2.0000,
axis on top,
xmin=-3.0060,
xmax=3.0060,
xtick={-3,  0,  3},
ymin=-3.0060,
ymax=3.0060,
ytick={\empty},
axis background/.style={fill=white},
legend style={legend cell align=left,align=left,draw=white!15!black},
mystyle
]
\addplot [forget plot] graphics [xmin=-3.0060,xmax=3.0060,ymin=-3.0060,ymax=3.0060] {./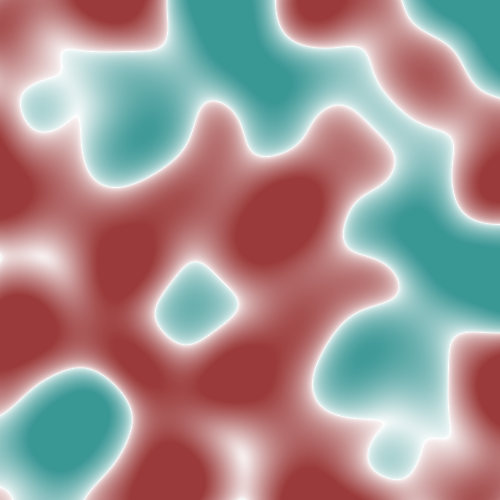};
\end{axis}
\end{tikzpicture}
\\$\kmu$ & $\sqrt{\ksig}$ & $\frac{|k-\kmu|}{\sqrt{\ksig}}-1$ & sample\\~
\end{tabular}
\caption{Section \ref{sec:prior} 
describes a Kronecker-structured Gaussian process prior over the kernel.
Above pictures show from left-to-right: prior mean (zero), prior standard deviation, the absolute error divided by the standard deviation minus one and a sample from this prior.}
\end{subfigure}
\end{mdframed}
\begin{mdframed}
\rotatebox[origin=c]{90}{
{\textbf{likelihood} on $\mathbb{N}^2$} 
}
\begin{subfigure}{0.95\textwidth}
\centering
%
%
\definecolor{mycolor1}{rgb}{0.45187,0.71042,0.70423}%
\definecolor{mycolor2}{rgb}{0.98278,0.99090,0.99071}%
\begin{tikzpicture}

\begin{axis}[%
width=\figwidth,
height=0.356\figheight,
at={(0\figwidth,0\figheight)},
scale only axis,
point meta min=-2.0000,
point meta max=2.0000,
axis on top,
xmin=0.5000,
xmax=20.5000,
xtick={\empty},
y dir=reverse,
ymin=0.5000,
ymax=5.5000,
ytick={\empty},
axis background/.style={fill=white},
legend style={legend cell align=left,align=left,draw=white!15!black},
mystyle
]
	\fill [mycolor2] (axis cs:0.5000,0.5000) rectangle (axis cs:1.5000,1.5000);
	\fill [mycolor2] (axis cs:0.5000,1.5000) rectangle (axis cs:1.5000,2.5000);
	\fill [mycolor2] (axis cs:0.5000,2.5000) rectangle (axis cs:1.5000,3.5000);
	\fill [mycolor2] (axis cs:0.5000,3.5000) rectangle (axis cs:1.5000,4.5000);
	\fill [mycolor1] (axis cs:0.5000,4.5000) rectangle (axis cs:1.5000,5.5000);
	\fill [mycolor2] (axis cs:1.5000,0.5000) rectangle (axis cs:2.5000,1.5000);
	\fill [mycolor2] (axis cs:1.5000,1.5000) rectangle (axis cs:2.5000,2.5000);
	\fill [mycolor2] (axis cs:1.5000,2.5000) rectangle (axis cs:2.5000,3.5000);
	\fill [mycolor2] (axis cs:1.5000,3.5000) rectangle (axis cs:2.5000,4.5000);
	\fill [mycolor2] (axis cs:1.5000,4.5000) rectangle (axis cs:2.5000,5.5000);
	\fill [mycolor1] (axis cs:2.5000,0.5000) rectangle (axis cs:3.5000,1.5000);
	\fill [mycolor2] (axis cs:2.5000,1.5000) rectangle (axis cs:3.5000,2.5000);
	\fill [mycolor2] (axis cs:2.5000,2.5000) rectangle (axis cs:3.5000,3.5000);
	\fill [mycolor2] (axis cs:2.5000,3.5000) rectangle (axis cs:3.5000,4.5000);
	\fill [mycolor2] (axis cs:2.5000,4.5000) rectangle (axis cs:3.5000,5.5000);
	\fill [mycolor2] (axis cs:3.5000,0.5000) rectangle (axis cs:4.5000,1.5000);
	\fill [mycolor2] (axis cs:3.5000,1.5000) rectangle (axis cs:4.5000,2.5000);
	\fill [mycolor2] (axis cs:3.5000,2.5000) rectangle (axis cs:4.5000,3.5000);
	\fill [mycolor1] (axis cs:3.5000,3.5000) rectangle (axis cs:4.5000,4.5000);
	\fill [mycolor2] (axis cs:3.5000,4.5000) rectangle (axis cs:4.5000,5.5000);
	\fill [mycolor2] (axis cs:4.5000,0.5000) rectangle (axis cs:5.5000,1.5000);
	\fill [mycolor2] (axis cs:4.5000,1.5000) rectangle (axis cs:5.5000,2.5000);
	\fill [mycolor2] (axis cs:4.5000,2.5000) rectangle (axis cs:5.5000,3.5000);
	\fill [mycolor2] (axis cs:4.5000,3.5000) rectangle (axis cs:5.5000,4.5000);
	\fill [mycolor2] (axis cs:4.5000,4.5000) rectangle (axis cs:5.5000,5.5000);
	\fill [mycolor2] (axis cs:5.5000,0.5000) rectangle (axis cs:6.5000,1.5000);
	\fill [mycolor2] (axis cs:5.5000,1.5000) rectangle (axis cs:6.5000,2.5000);
	\fill [mycolor1] (axis cs:5.5000,2.5000) rectangle (axis cs:6.5000,3.5000);
	\fill [mycolor2] (axis cs:5.5000,3.5000) rectangle (axis cs:6.5000,4.5000);
	\fill [mycolor2] (axis cs:5.5000,4.5000) rectangle (axis cs:6.5000,5.5000);
	\fill [mycolor2] (axis cs:6.5000,0.5000) rectangle (axis cs:7.5000,1.5000);
	\fill [mycolor2] (axis cs:6.5000,1.5000) rectangle (axis cs:7.5000,2.5000);
	\fill [mycolor2] (axis cs:6.5000,2.5000) rectangle (axis cs:7.5000,3.5000);
	\fill [mycolor2] (axis cs:6.5000,3.5000) rectangle (axis cs:7.5000,4.5000);
	\fill [mycolor2] (axis cs:6.5000,4.5000) rectangle (axis cs:7.5000,5.5000);
	\fill [mycolor2] (axis cs:7.5000,0.5000) rectangle (axis cs:8.5000,1.5000);
	\fill [mycolor2] (axis cs:7.5000,1.5000) rectangle (axis cs:8.5000,2.5000);
	\fill [mycolor2] (axis cs:7.5000,2.5000) rectangle (axis cs:8.5000,3.5000);
	\fill [mycolor2] (axis cs:7.5000,3.5000) rectangle (axis cs:8.5000,4.5000);
	\fill [mycolor2] (axis cs:7.5000,4.5000) rectangle (axis cs:8.5000,5.5000);
	\fill [mycolor2] (axis cs:8.5000,0.5000) rectangle (axis cs:9.5000,1.5000);
	\fill [mycolor2] (axis cs:8.5000,1.5000) rectangle (axis cs:9.5000,2.5000);
	\fill [mycolor2] (axis cs:8.5000,2.5000) rectangle (axis cs:9.5000,3.5000);
	\fill [mycolor2] (axis cs:8.5000,3.5000) rectangle (axis cs:9.5000,4.5000);
	\fill [mycolor2] (axis cs:8.5000,4.5000) rectangle (axis cs:9.5000,5.5000);
	\fill [mycolor2] (axis cs:9.5000,0.5000) rectangle (axis cs:10.5000,1.5000);
	\fill [mycolor2] (axis cs:9.5000,1.5000) rectangle (axis cs:10.5000,2.5000);
	\fill [mycolor2] (axis cs:9.5000,2.5000) rectangle (axis cs:10.5000,3.5000);
	\fill [mycolor2] (axis cs:9.5000,3.5000) rectangle (axis cs:10.5000,4.5000);
	\fill [mycolor2] (axis cs:9.5000,4.5000) rectangle (axis cs:10.5000,5.5000);
	\fill [mycolor2] (axis cs:10.5000,0.5000) rectangle (axis cs:11.5000,1.5000);
	\fill [mycolor2] (axis cs:10.5000,1.5000) rectangle (axis cs:11.5000,2.5000);
	\fill [mycolor2] (axis cs:10.5000,2.5000) rectangle (axis cs:11.5000,3.5000);
	\fill [mycolor2] (axis cs:10.5000,3.5000) rectangle (axis cs:11.5000,4.5000);
	\fill [mycolor2] (axis cs:10.5000,4.5000) rectangle (axis cs:11.5000,5.5000);
	\fill [mycolor2] (axis cs:11.5000,0.5000) rectangle (axis cs:12.5000,1.5000);
	\fill [mycolor2] (axis cs:11.5000,1.5000) rectangle (axis cs:12.5000,2.5000);
	\fill [mycolor2] (axis cs:11.5000,2.5000) rectangle (axis cs:12.5000,3.5000);
	\fill [mycolor2] (axis cs:11.5000,3.5000) rectangle (axis cs:12.5000,4.5000);
	\fill [mycolor2] (axis cs:11.5000,4.5000) rectangle (axis cs:12.5000,5.5000);
	\fill [mycolor2] (axis cs:12.5000,0.5000) rectangle (axis cs:13.5000,1.5000);
	\fill [mycolor2] (axis cs:12.5000,1.5000) rectangle (axis cs:13.5000,2.5000);
	\fill [mycolor2] (axis cs:12.5000,2.5000) rectangle (axis cs:13.5000,3.5000);
	\fill [mycolor2] (axis cs:12.5000,3.5000) rectangle (axis cs:13.5000,4.5000);
	\fill [mycolor2] (axis cs:12.5000,4.5000) rectangle (axis cs:13.5000,5.5000);
	\fill [mycolor2] (axis cs:13.5000,0.5000) rectangle (axis cs:14.5000,1.5000);
	\fill [mycolor2] (axis cs:13.5000,1.5000) rectangle (axis cs:14.5000,2.5000);
	\fill [mycolor2] (axis cs:13.5000,2.5000) rectangle (axis cs:14.5000,3.5000);
	\fill [mycolor2] (axis cs:13.5000,3.5000) rectangle (axis cs:14.5000,4.5000);
	\fill [mycolor2] (axis cs:13.5000,4.5000) rectangle (axis cs:14.5000,5.5000);
	\fill [mycolor2] (axis cs:14.5000,0.5000) rectangle (axis cs:15.5000,1.5000);
	\fill [mycolor2] (axis cs:14.5000,1.5000) rectangle (axis cs:15.5000,2.5000);
	\fill [mycolor2] (axis cs:14.5000,2.5000) rectangle (axis cs:15.5000,3.5000);
	\fill [mycolor2] (axis cs:14.5000,3.5000) rectangle (axis cs:15.5000,4.5000);
	\fill [mycolor2] (axis cs:14.5000,4.5000) rectangle (axis cs:15.5000,5.5000);
	\fill [mycolor2] (axis cs:15.5000,0.5000) rectangle (axis cs:16.5000,1.5000);
	\fill [mycolor2] (axis cs:15.5000,1.5000) rectangle (axis cs:16.5000,2.5000);
	\fill [mycolor2] (axis cs:15.5000,2.5000) rectangle (axis cs:16.5000,3.5000);
	\fill [mycolor2] (axis cs:15.5000,3.5000) rectangle (axis cs:16.5000,4.5000);
	\fill [mycolor2] (axis cs:15.5000,4.5000) rectangle (axis cs:16.5000,5.5000);
	\fill [mycolor2] (axis cs:16.5000,0.5000) rectangle (axis cs:17.5000,1.5000);
	\fill [mycolor2] (axis cs:16.5000,1.5000) rectangle (axis cs:17.5000,2.5000);
	\fill [mycolor2] (axis cs:16.5000,2.5000) rectangle (axis cs:17.5000,3.5000);
	\fill [mycolor2] (axis cs:16.5000,3.5000) rectangle (axis cs:17.5000,4.5000);
	\fill [mycolor2] (axis cs:16.5000,4.5000) rectangle (axis cs:17.5000,5.5000);
	\fill [mycolor2] (axis cs:17.5000,0.5000) rectangle (axis cs:18.5000,1.5000);
	\fill [mycolor2] (axis cs:17.5000,1.5000) rectangle (axis cs:18.5000,2.5000);
	\fill [mycolor2] (axis cs:17.5000,2.5000) rectangle (axis cs:18.5000,3.5000);
	\fill [mycolor2] (axis cs:17.5000,3.5000) rectangle (axis cs:18.5000,4.5000);
	\fill [mycolor2] (axis cs:17.5000,4.5000) rectangle (axis cs:18.5000,5.5000);
	\fill [mycolor2] (axis cs:18.5000,0.5000) rectangle (axis cs:19.5000,1.5000);
	\fill [mycolor1] (axis cs:18.5000,1.5000) rectangle (axis cs:19.5000,2.5000);
	\fill [mycolor2] (axis cs:18.5000,2.5000) rectangle (axis cs:19.5000,3.5000);
	\fill [mycolor2] (axis cs:18.5000,3.5000) rectangle (axis cs:19.5000,4.5000);
	\fill [mycolor2] (axis cs:18.5000,4.5000) rectangle (axis cs:19.5000,5.5000);
	\fill [mycolor2] (axis cs:19.5000,0.5000) rectangle (axis cs:20.5000,1.5000);
	\fill [mycolor2] (axis cs:19.5000,1.5000) rectangle (axis cs:20.5000,2.5000);
	\fill [mycolor2] (axis cs:19.5000,2.5000) rectangle (axis cs:20.5000,3.5000);
	\fill [mycolor2] (axis cs:19.5000,3.5000) rectangle (axis cs:20.5000,4.5000);
	\fill [mycolor2] (axis cs:19.5000,4.5000) rectangle (axis cs:20.5000,5.5000);
\end{axis}
\end{tikzpicture}%
\input{tikz/ground_truth_dtc__data_grid__title_true_kernel_matrix__step_0__seed_2.tikz}
%
%
\definecolor{mycolor1}{rgb}{0.98278,0.99090,0.99071}%
\definecolor{mycolor2}{rgb}{0.45187,0.71042,0.70423}%
\begin{tikzpicture}

\begin{axis}[%
width=0.176\figwidth,
height=\figheight,
at={(0\figwidth,0\figheight)},
scale only axis,
point meta min=-2.0000,
point meta max=2.0000,
axis on top,
xmin=0.5000,
xmax=5.5000,
xtick={\empty},
y dir=reverse,
ymin=0.5000,
ymax=20.5000,
ytick={\empty},
axis background/.style={fill=white},
legend style={legend cell align=left,align=left,draw=white!15!black},
mystyle
]
	\fill [mycolor1] (axis cs:0.5000,0.5000) rectangle (axis cs:1.5000,1.5000);
	\fill [mycolor1] (axis cs:0.5000,1.5000) rectangle (axis cs:1.5000,2.5000);
	\fill [mycolor2] (axis cs:0.5000,2.5000) rectangle (axis cs:1.5000,3.5000);
	\fill [mycolor1] (axis cs:0.5000,3.5000) rectangle (axis cs:1.5000,4.5000);
	\fill [mycolor1] (axis cs:0.5000,4.5000) rectangle (axis cs:1.5000,5.5000);
	\fill [mycolor1] (axis cs:0.5000,5.5000) rectangle (axis cs:1.5000,6.5000);
	\fill [mycolor1] (axis cs:0.5000,6.5000) rectangle (axis cs:1.5000,7.5000);
	\fill [mycolor1] (axis cs:0.5000,7.5000) rectangle (axis cs:1.5000,8.5000);
	\fill [mycolor1] (axis cs:0.5000,8.5000) rectangle (axis cs:1.5000,9.5000);
	\fill [mycolor1] (axis cs:0.5000,9.5000) rectangle (axis cs:1.5000,10.5000);
	\fill [mycolor1] (axis cs:0.5000,10.5000) rectangle (axis cs:1.5000,11.5000);
	\fill [mycolor1] (axis cs:0.5000,11.5000) rectangle (axis cs:1.5000,12.5000);
	\fill [mycolor1] (axis cs:0.5000,12.5000) rectangle (axis cs:1.5000,13.5000);
	\fill [mycolor1] (axis cs:0.5000,13.5000) rectangle (axis cs:1.5000,14.5000);
	\fill [mycolor1] (axis cs:0.5000,14.5000) rectangle (axis cs:1.5000,15.5000);
	\fill [mycolor1] (axis cs:0.5000,15.5000) rectangle (axis cs:1.5000,16.5000);
	\fill [mycolor1] (axis cs:0.5000,16.5000) rectangle (axis cs:1.5000,17.5000);
	\fill [mycolor1] (axis cs:0.5000,17.5000) rectangle (axis cs:1.5000,18.5000);
	\fill [mycolor1] (axis cs:0.5000,18.5000) rectangle (axis cs:1.5000,19.5000);
	\fill [mycolor1] (axis cs:0.5000,19.5000) rectangle (axis cs:1.5000,20.5000);
	\fill [mycolor1] (axis cs:1.5000,0.5000) rectangle (axis cs:2.5000,1.5000);
	\fill [mycolor1] (axis cs:1.5000,1.5000) rectangle (axis cs:2.5000,2.5000);
	\fill [mycolor1] (axis cs:1.5000,2.5000) rectangle (axis cs:2.5000,3.5000);
	\fill [mycolor1] (axis cs:1.5000,3.5000) rectangle (axis cs:2.5000,4.5000);
	\fill [mycolor1] (axis cs:1.5000,4.5000) rectangle (axis cs:2.5000,5.5000);
	\fill [mycolor1] (axis cs:1.5000,5.5000) rectangle (axis cs:2.5000,6.5000);
	\fill [mycolor1] (axis cs:1.5000,6.5000) rectangle (axis cs:2.5000,7.5000);
	\fill [mycolor1] (axis cs:1.5000,7.5000) rectangle (axis cs:2.5000,8.5000);
	\fill [mycolor1] (axis cs:1.5000,8.5000) rectangle (axis cs:2.5000,9.5000);
	\fill [mycolor1] (axis cs:1.5000,9.5000) rectangle (axis cs:2.5000,10.5000);
	\fill [mycolor1] (axis cs:1.5000,10.5000) rectangle (axis cs:2.5000,11.5000);
	\fill [mycolor1] (axis cs:1.5000,11.5000) rectangle (axis cs:2.5000,12.5000);
	\fill [mycolor1] (axis cs:1.5000,12.5000) rectangle (axis cs:2.5000,13.5000);
	\fill [mycolor1] (axis cs:1.5000,13.5000) rectangle (axis cs:2.5000,14.5000);
	\fill [mycolor1] (axis cs:1.5000,14.5000) rectangle (axis cs:2.5000,15.5000);
	\fill [mycolor1] (axis cs:1.5000,15.5000) rectangle (axis cs:2.5000,16.5000);
	\fill [mycolor1] (axis cs:1.5000,16.5000) rectangle (axis cs:2.5000,17.5000);
	\fill [mycolor1] (axis cs:1.5000,17.5000) rectangle (axis cs:2.5000,18.5000);
	\fill [mycolor2] (axis cs:1.5000,18.5000) rectangle (axis cs:2.5000,19.5000);
	\fill [mycolor1] (axis cs:1.5000,19.5000) rectangle (axis cs:2.5000,20.5000);
	\fill [mycolor1] (axis cs:2.5000,0.5000) rectangle (axis cs:3.5000,1.5000);
	\fill [mycolor1] (axis cs:2.5000,1.5000) rectangle (axis cs:3.5000,2.5000);
	\fill [mycolor1] (axis cs:2.5000,2.5000) rectangle (axis cs:3.5000,3.5000);
	\fill [mycolor1] (axis cs:2.5000,3.5000) rectangle (axis cs:3.5000,4.5000);
	\fill [mycolor1] (axis cs:2.5000,4.5000) rectangle (axis cs:3.5000,5.5000);
	\fill [mycolor2] (axis cs:2.5000,5.5000) rectangle (axis cs:3.5000,6.5000);
	\fill [mycolor1] (axis cs:2.5000,6.5000) rectangle (axis cs:3.5000,7.5000);
	\fill [mycolor1] (axis cs:2.5000,7.5000) rectangle (axis cs:3.5000,8.5000);
	\fill [mycolor1] (axis cs:2.5000,8.5000) rectangle (axis cs:3.5000,9.5000);
	\fill [mycolor1] (axis cs:2.5000,9.5000) rectangle (axis cs:3.5000,10.5000);
	\fill [mycolor1] (axis cs:2.5000,10.5000) rectangle (axis cs:3.5000,11.5000);
	\fill [mycolor1] (axis cs:2.5000,11.5000) rectangle (axis cs:3.5000,12.5000);
	\fill [mycolor1] (axis cs:2.5000,12.5000) rectangle (axis cs:3.5000,13.5000);
	\fill [mycolor1] (axis cs:2.5000,13.5000) rectangle (axis cs:3.5000,14.5000);
	\fill [mycolor1] (axis cs:2.5000,14.5000) rectangle (axis cs:3.5000,15.5000);
	\fill [mycolor1] (axis cs:2.5000,15.5000) rectangle (axis cs:3.5000,16.5000);
	\fill [mycolor1] (axis cs:2.5000,16.5000) rectangle (axis cs:3.5000,17.5000);
	\fill [mycolor1] (axis cs:2.5000,17.5000) rectangle (axis cs:3.5000,18.5000);
	\fill [mycolor1] (axis cs:2.5000,18.5000) rectangle (axis cs:3.5000,19.5000);
	\fill [mycolor1] (axis cs:2.5000,19.5000) rectangle (axis cs:3.5000,20.5000);
	\fill [mycolor1] (axis cs:3.5000,0.5000) rectangle (axis cs:4.5000,1.5000);
	\fill [mycolor1] (axis cs:3.5000,1.5000) rectangle (axis cs:4.5000,2.5000);
	\fill [mycolor1] (axis cs:3.5000,2.5000) rectangle (axis cs:4.5000,3.5000);
	\fill [mycolor2] (axis cs:3.5000,3.5000) rectangle (axis cs:4.5000,4.5000);
	\fill [mycolor1] (axis cs:3.5000,4.5000) rectangle (axis cs:4.5000,5.5000);
	\fill [mycolor1] (axis cs:3.5000,5.5000) rectangle (axis cs:4.5000,6.5000);
	\fill [mycolor1] (axis cs:3.5000,6.5000) rectangle (axis cs:4.5000,7.5000);
	\fill [mycolor1] (axis cs:3.5000,7.5000) rectangle (axis cs:4.5000,8.5000);
	\fill [mycolor1] (axis cs:3.5000,8.5000) rectangle (axis cs:4.5000,9.5000);
	\fill [mycolor1] (axis cs:3.5000,9.5000) rectangle (axis cs:4.5000,10.5000);
	\fill [mycolor1] (axis cs:3.5000,10.5000) rectangle (axis cs:4.5000,11.5000);
	\fill [mycolor1] (axis cs:3.5000,11.5000) rectangle (axis cs:4.5000,12.5000);
	\fill [mycolor1] (axis cs:3.5000,12.5000) rectangle (axis cs:4.5000,13.5000);
	\fill [mycolor1] (axis cs:3.5000,13.5000) rectangle (axis cs:4.5000,14.5000);
	\fill [mycolor1] (axis cs:3.5000,14.5000) rectangle (axis cs:4.5000,15.5000);
	\fill [mycolor1] (axis cs:3.5000,15.5000) rectangle (axis cs:4.5000,16.5000);
	\fill [mycolor1] (axis cs:3.5000,16.5000) rectangle (axis cs:4.5000,17.5000);
	\fill [mycolor1] (axis cs:3.5000,17.5000) rectangle (axis cs:4.5000,18.5000);
	\fill [mycolor1] (axis cs:3.5000,18.5000) rectangle (axis cs:4.5000,19.5000);
	\fill [mycolor1] (axis cs:3.5000,19.5000) rectangle (axis cs:4.5000,20.5000);
	\fill [mycolor2] (axis cs:4.5000,0.5000) rectangle (axis cs:5.5000,1.5000);
	\fill [mycolor1] (axis cs:4.5000,1.5000) rectangle (axis cs:5.5000,2.5000);
	\fill [mycolor1] (axis cs:4.5000,2.5000) rectangle (axis cs:5.5000,3.5000);
	\fill [mycolor1] (axis cs:4.5000,3.5000) rectangle (axis cs:5.5000,4.5000);
	\fill [mycolor1] (axis cs:4.5000,4.5000) rectangle (axis cs:5.5000,5.5000);
	\fill [mycolor1] (axis cs:4.5000,5.5000) rectangle (axis cs:5.5000,6.5000);
	\fill [mycolor1] (axis cs:4.5000,6.5000) rectangle (axis cs:5.5000,7.5000);
	\fill [mycolor1] (axis cs:4.5000,7.5000) rectangle (axis cs:5.5000,8.5000);
	\fill [mycolor1] (axis cs:4.5000,8.5000) rectangle (axis cs:5.5000,9.5000);
	\fill [mycolor1] (axis cs:4.5000,9.5000) rectangle (axis cs:5.5000,10.5000);
	\fill [mycolor1] (axis cs:4.5000,10.5000) rectangle (axis cs:5.5000,11.5000);
	\fill [mycolor1] (axis cs:4.5000,11.5000) rectangle (axis cs:5.5000,12.5000);
	\fill [mycolor1] (axis cs:4.5000,12.5000) rectangle (axis cs:5.5000,13.5000);
	\fill [mycolor1] (axis cs:4.5000,13.5000) rectangle (axis cs:5.5000,14.5000);
	\fill [mycolor1] (axis cs:4.5000,14.5000) rectangle (axis cs:5.5000,15.5000);
	\fill [mycolor1] (axis cs:4.5000,15.5000) rectangle (axis cs:5.5000,16.5000);
	\fill [mycolor1] (axis cs:4.5000,16.5000) rectangle (axis cs:5.5000,17.5000);
	\fill [mycolor1] (axis cs:4.5000,17.5000) rectangle (axis cs:5.5000,18.5000);
	\fill [mycolor1] (axis cs:4.5000,18.5000) rectangle (axis cs:5.5000,19.5000);
	\fill [mycolor1] (axis cs:4.5000,19.5000) rectangle (axis cs:5.5000,20.5000);
\end{axis}
\end{tikzpicture}%
\raisebox{0.45\figheight}{\Huge $\vec =$}
\setlength{\figwidth}{0.4\figwidth}
\setlength{\figheight}{0.8\figwidth}
\raisebox{0.9\figheight}{
%
%
\definecolor{mycolor1}{rgb}{0.25866,0.60835,0.59997}%
\definecolor{mycolor2}{rgb}{0.98278,0.99090,0.99071}%
\definecolor{mycolor3}{rgb}{0.41269,0.68972,0.68309}%
\definecolor{mycolor4}{rgb}{0.29854,0.62942,0.62149}%
\definecolor{mycolor5}{rgb}{0.22540,0.59078,0.58203}%
\definecolor{mycolor6}{rgb}{0.23387,0.59526,0.58660}%
\definecolor{mycolor7}{rgb}{0.93791,0.96720,0.96649}%
\definecolor{mycolor8}{rgb}{0.88193,0.93763,0.93629}%
\definecolor{mycolor9}{rgb}{0.95444,0.97593,0.97541}%
\begin{tikzpicture}

\begin{axis}[%
width=0.703\figwidth,
height=\figheight,
at={(0\figwidth,0\figheight)},
scale only axis,
point meta min=-2.0000,
point meta max=2.0000,
axis on top,
xmin=0.5000,
xmax=5.5000,
xtick={\empty},
y dir=reverse,
ymin=0.5000,
ymax=5.5000,
ytick={\empty},
axis background/.style={fill=white},
legend style={legend cell align=left,align=left,draw=white!15!black},
mystyle
]
	\fill [mycolor5] (axis cs:0.5000,0.5000) rectangle (axis cs:1.5000,1.5000);
	\fill [mycolor9] (axis cs:0.5000,1.5000) rectangle (axis cs:1.5000,2.5000);
	\fill [mycolor4] (axis cs:0.5000,2.5000) rectangle (axis cs:1.5000,3.5000);
	\fill [mycolor6] (axis cs:0.5000,3.5000) rectangle (axis cs:1.5000,4.5000);
	\fill [mycolor1] (axis cs:0.5000,4.5000) rectangle (axis cs:1.5000,5.5000);
	\fill [mycolor9] (axis cs:1.5000,0.5000) rectangle (axis cs:2.5000,1.5000);
	\fill [mycolor5] (axis cs:1.5000,1.5000) rectangle (axis cs:2.5000,2.5000);
	\fill [mycolor8] (axis cs:1.5000,2.5000) rectangle (axis cs:2.5000,3.5000);
	\fill [mycolor7] (axis cs:1.5000,3.5000) rectangle (axis cs:2.5000,4.5000);
	\fill [mycolor2] (axis cs:1.5000,4.5000) rectangle (axis cs:2.5000,5.5000);
	\fill [mycolor4] (axis cs:2.5000,0.5000) rectangle (axis cs:3.5000,1.5000);
	\fill [mycolor8] (axis cs:2.5000,1.5000) rectangle (axis cs:3.5000,2.5000);
	\fill [mycolor5] (axis cs:2.5000,2.5000) rectangle (axis cs:3.5000,3.5000);
	\fill [mycolor1] (axis cs:2.5000,3.5000) rectangle (axis cs:3.5000,4.5000);
	\fill [mycolor3] (axis cs:2.5000,4.5000) rectangle (axis cs:3.5000,5.5000);
	\fill [mycolor6] (axis cs:3.5000,0.5000) rectangle (axis cs:4.5000,1.5000);
	\fill [mycolor7] (axis cs:3.5000,1.5000) rectangle (axis cs:4.5000,2.5000);
	\fill [mycolor1] (axis cs:3.5000,2.5000) rectangle (axis cs:4.5000,3.5000);
	\fill [mycolor5] (axis cs:3.5000,3.5000) rectangle (axis cs:4.5000,4.5000);
	\fill [mycolor4] (axis cs:3.5000,4.5000) rectangle (axis cs:4.5000,5.5000);
	\fill [mycolor1] (axis cs:4.5000,0.5000) rectangle (axis cs:5.5000,1.5000);
	\fill [mycolor2] (axis cs:4.5000,1.5000) rectangle (axis cs:5.5000,2.5000);
	\fill [mycolor3] (axis cs:4.5000,2.5000) rectangle (axis cs:5.5000,3.5000);
	\fill [mycolor4] (axis cs:4.5000,3.5000) rectangle (axis cs:5.5000,4.5000);
	\fill [mycolor5] (axis cs:4.5000,4.5000) rectangle (axis cs:5.5000,5.5000);
\end{axis}
\end{tikzpicture}
\caption{Observations of $k$ stem from matrix-vector multiplications with the kernel matrix $\vec K$ (Section \ref{sec:likelihood}), sketched using random columns of the identity matrix.}
\end{subfigure}
\end{mdframed}
\begin{mdframed}
\rotatebox[origin=c]{90}{\textbf{posterior} in $\Re^2$}
\begin{subfigure}{0.95\textwidth}
\centering
\setlength{\figwidth}{\defaultwidthfactor\textwidth}
\setlength{\figheight}{0.8\figwidth}
\begin{tabular}{cccc}
%
%
\begin{tikzpicture}

\begin{axis}[%
width=0.75\figwidth,
height=\figheight,
at={(0\figwidth,0\figheight)},
scale only axis,
point meta min=-2.0000,
point meta max=2.0000,
axis on top,
xmin=-3.0060,
xmax=3.0060,
xtick={-3,  0,  3},
ymin=-3.0060,
ymax=3.0060,
ytick={-3,  0,  3},
axis background/.style={fill=white},
legend style={legend cell align=left,align=left,draw=white!15!black},
mystyle
]
\addplot [forget plot] graphics [xmin=-3.0060,xmax=3.0060,ymin=-3.0060,ymax=3.0060] {./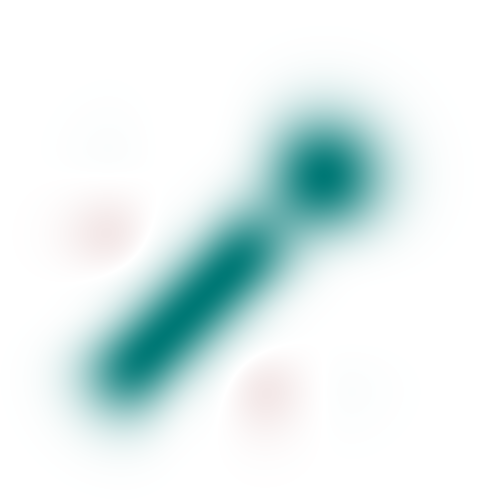};
\end{axis}
\end{tikzpicture}%
&
%
%
\begin{tikzpicture}

\begin{axis}[%
width=0.703\figwidth,
height=\figheight,
at={(0\figwidth,0\figheight)},
scale only axis,
point meta min=-2.0000,
point meta max=2.0000,
axis on top,
xmin=-3.0060,
xmax=3.0060,
xtick={-3,  0,  3},
ymin=-3.0060,
ymax=3.0060,
ytick={\empty},
axis background/.style={fill=white},
legend style={legend cell align=left,align=left,draw=white!15!black},
mystyle
]
\addplot [forget plot] graphics [xmin=-3.0060,xmax=3.0060,ymin=-3.0060,ymax=3.0060] {./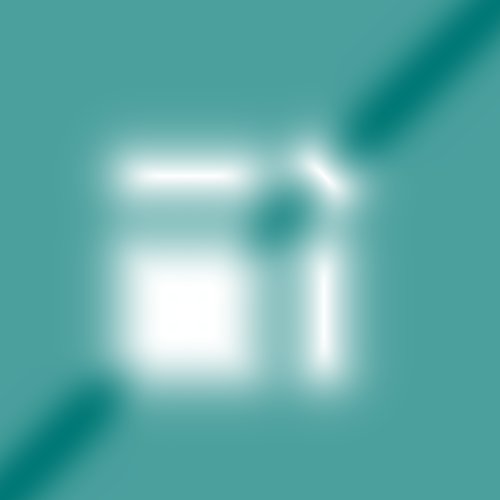};
\end{axis}
\end{tikzpicture}%
&
%
%
\begin{tikzpicture}

\begin{axis}[%
width=0.703\figwidth,
height=\figheight,
at={(0\figwidth,0\figheight)},
scale only axis,
point meta min=-2.0000,
point meta max=2.0000,
axis on top,
xmin=-3.0060,
xmax=3.0060,
xtick={-3,  0,  3},
ymin=-3.0060,
ymax=3.0060,
ytick={\empty},
axis background/.style={fill=white},
legend style={legend cell align=left,align=left,draw=white!15!black},
mystyle
]
\addplot [forget plot] graphics [xmin=-3.0060,xmax=3.0060,ymin=-3.0060,ymax=3.0060] {./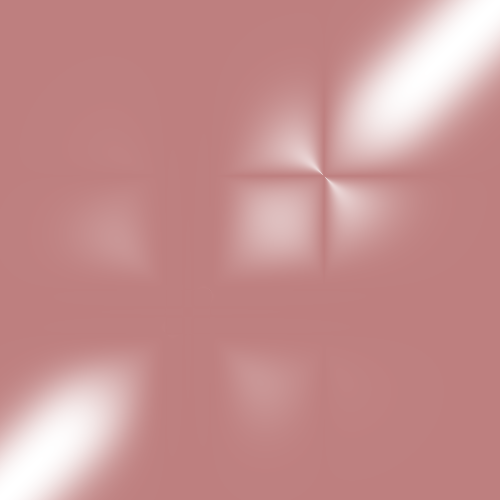};
\end{axis}
\end{tikzpicture}%
&
%
%
\begin{tikzpicture}

\begin{axis}[%
width=0.703\figwidth,
height=\figheight,
at={(0\figwidth,0\figheight)},
scale only axis,
point meta min=-2.0000,
point meta max=2.0000,
axis on top,
xmin=-3.0060,
xmax=3.0060,
xtick={-3,  0,  3},
ymin=-3.0060,
ymax=3.0060,
ytick={\empty},
axis background/.style={fill=white},
legend style={legend cell align=left,align=left,draw=white!15!black},
mystyle
]
\addplot [forget plot] graphics [xmin=-3.0060,xmax=3.0060,ymin=-3.0060,ymax=3.0060] {./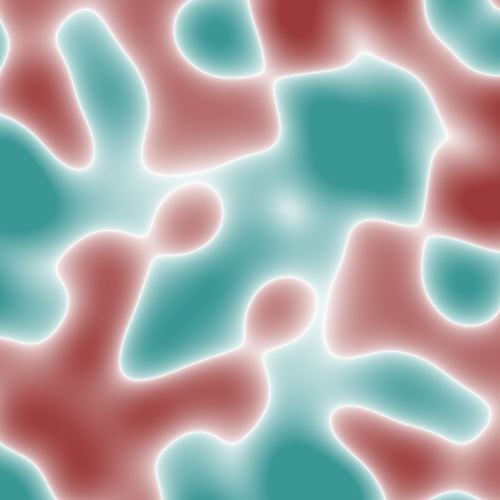};
\end{axis}
\end{tikzpicture}%
\\$\km$ & $\sqrt{\ksigm}$ & $\frac{|k-\km|}{\sqrt{\ksigm}}-1$ & sample
\end{tabular}
\caption{The posterior is again Gaussian (Section \ref{sec:posterior}) and similar to Figure \ref{fig:story_prior} the pictures show from left-to-right: mean, standard deviation, relative error and a sample.
By design, the posterior mean $\km$ is an approximation of finite rank which allows to efficiently solve the original least-squares problem (Section \ref{sec:degenerate_kernel_approximation}).}
\end{subfigure}
\end{mdframed}
\vfill
\end{figure}
\clearpage
}

\subsection{{Finite-rank Kernel}}
\label{sec:degenerate_kernel_approximation}
%
%
%
An $M$-rank approximation to a kernel is a factorization of the form
\begin{align}
\label{eq:deg_kern}
k(\vec x, \vec z)\approx \vec \phi(\vec x)^* \vec \Sigma^{-1} \vec \phi(\vec z)
\end{align}
where $\vec \phi(\vec x):\mathbb{X}\rightarrow \mathbb{C}^M$, $\vec \phi^*$ denotes the conjugate transpose, and $\vec{\Sigma}$ is an $M\times M$ Hermitian and positive definite matrix. 
Given such an expansion one can use the matrix-inversion, and matrix-determinant lemmata to approximate Equations \eqref{eq:mean} to \eqref{eq:llh} with the expressions below
\begin{align}
\label{eq:mean_2}
\overline{f}(\vec x_*)&\approx \vec \phi(\vec x_*)^* \left(\vec \Phi \vec \Phi^* + \noise \vec \Sigma \right)^{-1}\vec \Phi \vec y
\\\overline{c}(\vec x_*, \vec z_{*})&\approx \noise \vec \phi(\vec x_*)^* \left(\vec \Phi \vec \Phi^* + \noise \vec \Sigma \right)^{-1}\vec \phi(\vec z_{*}) \label{eq:var_2}
\\\ln p(\vec y)&\approx -\frac{1}{2}\vec y\Trans\vec\Phi^*\left(\vec \Phi \vec \Phi^* + \noise \vec \Sigma\right)^{-1}\vec \Phi \vec y-\frac{1}{2}\ln \left|\left(\frac{1}{\noise}\vec \Phi \vec \Phi^*+\vec \Sigma \right)\right| -\frac{N}{2}\ln(2\pi \noise) 
\label{eq:llh_2}
\end{align}
where $\vec \phi(\vec x_*)_j=\phi_j(\vec x_*)$ and $\vec \Phi_{ij}=\phi_i(\vec X_j)$.
Typically $M\ll N$ and therefore the computational costs to evaluate Equations \eqref{eq:mean_2} to \eqref{eq:llh_2} reduce from $\O(N^3)$ to $\O(NM^2)$, \ie{}linear in $N$.
The dominant factor is the matrix-matrix product $\vec \Phi\vec \Phi^*$. 

An example for a finite-rank kernel that will become important later, is the \pname{Subset of Regressors} (\SoR{}) approximation \citep{quinonero2005unifying}
\begin{align}
k_{SoR}(\vec x, \vec z)&=k(\vec x, \vec X_U)k(\vec X_U, \vec X_U)^{-1}k(\vec X_U, \vec z) \label{eq:k_sor}
\end{align}
where $\vec X_U$ is a set of $M$ so called inducing inputs.
The method proposed in this work (\kmcg{}) is related to \SoR{}.
Readers familiar with \SoR{} will be aware of the associated flaws, and methods to remedy them \citep{quinonero2005unifying, titsias09:_variat_learn_induc_variab_spars_gauss_proces}.
%
\label{sec:hybrid_extensions}
For stationary kernels and tests points far away from the data, the predictive uncertainty (Eq.~\ref{eq:var_2}) goes to zero.
The \pname{Deterministic Training Conditional (DTC)} approximation alleviates this issue by using the exact kernel for the prior uncertainty over the test inputs 
 \citep{quinonero2005unifying}.
In effect this is a substitution of \cref{eq:var_2} for \cref{eq:var_DTC} below.
\begin{align}
\overline{c}(\vec x_*, \vec z_{*})\approx& k(\vec x_*, \vec x_{**})-\vec \phi(\vec x_*)^* \left(\vec \Phi \vec \Phi^* + \noise \vec I\right)^{-1}\vec \phi(\vec z_{*})\label{eq:var_DTC}
\end{align}
We will apply the same substitution for our method \kmcg{}.

\subsection{Prior}
\label{sec:prior}
Consider a Gaussian process prior over bivariate functions
\begin{align}
\repeatable{eq:prior}{k&\sim\GP(\kmu, \ls \ksig)}
\end{align}
where $\ksig: \mathbb{X}^2\times \mathbb{X}^2\rightarrow \Re$ is a covariance function over kernel and $\ls\in\Re^+$ is a scaling parameter.
Since the posterior mean is meant to be a substitution for the exact kernel, this is an exchange of one least-squares problem for another.
Without further assumptions, calculating the posterior over $k$ is more expensive than computing the equations of interest (Equations \ref{eq:mean} to \ref{eq:llh}).
Efficient inference is rendered possible by imposing the following structure on $\ksig$
\begin{align}
\repeatable{eq:w_requirement}{
\ksig(k(\vec a, \vec b), k(\vec c, \vec d))&\ce\frac{1}{2}\kw(\vec a, \vec c)\kw(\vec b, \vec d)+\frac{1}{2}\kw(\vec a, \vec d)\kw(\vec b, \vec c)
}
\end{align}
for $\vec a, \vec b, \vec c, \vec d \in \mathbb{X}$ 
and where $\kw$ is a covariance function on the domain $\mathbb{X}$.
Consider the first addend.
It states that the similarity between $k(\vec a, \vec b)$ and $k(\vec c, \vec d)$ depends on the similarity of $\vec a$ and $\vec c$, and $\vec b$ and $\vec d$--a natural assumption for kernel matrices.
The second addend is a symmetrization of the first.
Observe that each addend is a product kernel of two pairs of inputs and recall that a product kernel produces Kronecker product matrices. 
The sum of the two products leads to covariance matrices that have a \emph{symmetric} Kronecker product form, \ie{}$\forall \vec A, \vec B\in\Re^{N\times N}: \ksig(\vec A, \vec B)=\vec A\sk\vec B\in\Re^{N^2\times N^2}$ (see Appendix \ref{app:kronecker}).
This will allow a sufficiently efficient evaluation of the posterior. 
\cref{fig:story} visualizes the variance and shows samples from this prior for the toy setup from \cref{fig:toy_example}.

This choice of prior offers a trade-off between efficient tractable inference and the desire to encode as much prior structural information about the kernel as possible.
One desirable property to encode is symmetry, and indeed, matrix-valued functions sampled from this prior distribution are symmetric (\cf{}\cref{fig:story} for examples, \cref{app:sampling} for formal proof).
Kernel functions are also positive definite. 
Unfortunately, since the positive definite cone is not a linear sub-space of the vector-space of real matrices, this property can not be encoded in a Gaussian prior, in closed form.\footnote{\eg{}\cite{Hennig_LinearSolvers2015} discusses this problem and possible solutions.}
However, it \emph{is} possible to guarantee positive-definiteness of the posterior mean point estimate through the specific choice of prior parameters $\kmu=0$ (proof in \cref{thm:spd_mean}, p.~\pageref{thm:spd_mean}).
For this reason, we adopt this choice for the remainder.
There are other properties of certain kernels that would be desirable to encode, but which are not feasible within the chosen framework without sacrificing fast computability.
For example, stationarity of the kernel can not be represented by a prior with Kronecker structure in the covariance since $\vec a$ and $\vec b$ (and symmetrically $\vec c$ and $\vec d$) do not appear together as arguments to $\kw$.

The question remains how to choose $\kw$.
Recall that $\kw$ should reflect the similarity between $k(\vec a, \vec b)$ and $k(\vec c, \vec d)$ which depends on the similarity of $\vec a$ and $\vec c$, and $\vec b$ and $\vec d$.
To measure the relationship between inputs is exactly the purpose of the kernel $k$ and we therefore set $$\kw\ce k$$ for the remainder.
Even if $k$ fails to capture similarity between inputs, as choice for $\kw$ it still captures the similarity between the kernel values.
Furthermore, samples from the approximate kernel will be a function of $\kw$ and lastly, this choice is convenient computationally as expressions simplify.

\subsection{Likelihood}
\label{sec:likelihood}
Having specified a prior over $k$, we will now be concerned with how to obtain observations.
Iterative solvers like conjugate gradients proceed by collecting a sequence of \emph{linear} projections of the (kernel) matrix to be inverted, in the form of matrix-vector products. 
In fact, this general structure also describes the setting of non-adaptive approaches like inducing point methods, which can be interpreted as collecting multiplications of the kernel matrix with a set of \emph{pre-specified} and \emph{sparse} vectors (namely the unit selection vectors $e_{\vec x_{u_i}}$). 
We can use these matrix-vector products for learning a low-rank version of the kernel by introducing the linear operator 
\begin{align}
\repeatable{eq:observation_operator}{
\T: & (\mathbb{X}\times \mathbb{X})^\Re \rightarrow \Re^{P^2}, k \mapsto  \vecm{\left[
\iint\! k(\vec x, \vec z)p_i(\vec x)p_j(\vec z)\dx{\vec x}\dx{\vec z}
\right]_{ij}}
}
\end{align}
where $i,j=1...P$, $\vec p=[p_1, ..., p_P]$ are densities or distributions and $\vecm{\vec A}$ is a column vector created by stacking the rows of $\vec A$.
\begin{example}[Matrix-vector multiplication]
\label{ex:dtc}
Define $\T$ with 
\begin{align}
\repeatable{eq:lk_dtc}{p_i(\vec x)&\ce\sum_{j=1}^M s_{ij} \delta(\vec x - \vec x_{u_j})}. 
\end{align}
Then the evaluation of $\T k$ reduces to a matrix vector product, that is $\Tm{k}=\vec S\Trans k(\vec X_U, \vec X_U)\vec S$ where $\vec S_{ij}=s_{ij}$, $\vec X_U=[\vec x_{u_1}, ..., \vec x_{u_M}]$ and $\tomat{~}$ transforms a $P^2$ vector into a $P\times P$ matrix, s.t.~$\tomat{\vecm{\vec A}}=\vec A$.
\end{example}
The $\vec x_{u_j}$ can be datapoints or arbitrary elements of the domain $\mathbb{X}$.
The choice $\S_{ij}\ce\delta_{ij}$ leads to the \pname{Subset of Regressors} approximation (\cref{thm:dtc}, p.~\pageref{thm:dtc}).
\begin{example}[Integrals with Eigenfunctions]
\label{ex:rks}
Let $\phi_i$ $i=1, ..., P$ be orthogonal Eigenfunctions of $k$ with respect to a density $\nu$ on $\mathbb{X}$, \ie{}
\begin{align}
\int k(\vec x, \vec z)\phi_i(\vec z)\nu(\vec z)\dx{\vec z}&=\lambda_i\phi_i(\vec x)
\\\int \phi_i(\vec z)\phi_j(\vec z)\nu(\vec z)\dx{\vec z}&=\delta_{ij}
\end{align}
where $\lambda_i\in \Re$ and $\delta_{ij}$ is the Kronecker indicator function (compare \citet[p. 96]{RasmussenWilliams}). 
Then for
\begin{align}
p_i(\vec x)&\ce\phi_i(\vec x)\nu(\vec x) \label{eq:lk_rks}
\end{align}
the observations $[\Tm k]_{ij}=\delta_{ij} \lambda_i$ are spectral values of the kernel.
\end{example}
In essence, this example shows another possibility to express prior knowledge over the kernel.
This likelihood leads to the \emph{Projected Bayes Regressor} \citep{Trecate.1999} (\cref{thm:pbr}, p.~\pageref{thm:pbr}), which is a historical, deterministic precursor to the more widely known random Fourier feature expansion of \cite{Rahimi.2008}.
\subsection{Posterior and Subsumed Approximation Methods}
\label{sec:posterior}
The observation operator $\T$ is a linear projection, and hence transforms the Gaussian prior into an also Gaussian posterior.
Given the prior (Eq.~\ref{eq:prior}) and any likelihood of the previous section, the posterior is Gaussian with: 
\begin{align}
p(k\mid \Y, \T)&=\N(\km, \kwm) 
\\\km&=\kmu+(\T \ksig)\Trans(\T(\T\ksig)\Trans)^{-1}(\vecm{\Y}-\T\kmu) \label{eq:km}
\\\ksigm&=\ksig-(\T\ksig)\Trans(\T(\T\ksig)\Trans)^{-1}\T\ksig  \label{eq:ksigm}
\end{align}
The concrete posterior depends on the choice of $\T$.
The following propositions presents approximation methods that have a view as GP inference with low-rank kernel and how they arise in our framework.

\begin{restatable}[Subset of Regressors]{proposition}{DTCprop}
\label{thm:dtc}
Consider the prior of \cref{eq:prior} with $\kmu\ce 0$ and $w\ce k$ and the likelihood defined in \cref{ex:dtc} with $s_{ij}=\delta_{ij}$. 
Then the posterior mean $\km$ is equivalent to that of \SoR{}:
$$\km(\vec x, \vec z)=k_{\SoR{}}=k(\vec x, \vec X_U)k(\vec X_U, \vec X_U)^{-1}k(\vec X_U, \vec z)$$
where $\vec X_U$ are inducing inputs, not necessarily part of $\vec X$.
\end{restatable}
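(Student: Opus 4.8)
The plan is to specialize the generic posterior-mean formula \eqref{eq:km} to the choices at hand and then collapse it using the symmetric-Kronecker algebra of \cref{app:kronecker}. First I would set $\kmu\ce 0$, $\kw\ce k$, and take the likelihood of \cref{ex:dtc} with $\vec S = \vec I_M$, so that $P=M$ and the observation is $\Y = \Tm{k} = k(\vec X_U,\vec X_U)\ec\vec K_{UU}$; write $\vec u_{\vec c}\ce k(\vec X_U,\vec c)\in\Re^M$ for the vector of cross-covariances. With $\kmu=0$, \eqref{eq:km} becomes $\km = (\T\ksig)\Trans\bigl(\T(\T\ksig)\Trans\bigr)^{+}\vecm{\vec K_{UU}}$, where I write $(\cdot)^{+}$ for the pseudoinverse: as recorded in \cref{app:kronecker} the symmetric Kronecker product annihilates antisymmetric matrices, so the inverse in \eqref{eq:km} must be read on the symmetric subspace — and every object appearing here ($\vecm{\Y}$, and $\T\ksig$ below) does lie in that subspace, so this is harmless.

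Next I would evaluate the two projected quantities explicitly from \eqref{eq:w_requirement} with $\kw=k$. Applying $\T$ to $\ksig$ in its \emph{first} pair of arguments and using \cref{ex:dtc} with $\vec S=\vec I_M$ gives, for a test pair $(\vec c,\vec d)$,
\[
  \tomat{(\T\ksig)(\vec c,\vec d)} = \bigl[\ksig\bigl(k(\vec x_{u_a},\vec x_{u_b}),\,k(\vec c,\vec d)\bigr)\bigr]_{ab} = \tfrac12\bigl(\vec u_{\vec c}\vec u_{\vec d}\Trans + \vec u_{\vec d}\vec u_{\vec c}\Trans\bigr).
\]
Applying $\T$ once more, now in the \emph{second} pair of arguments, turns $\T(\T\ksig)\Trans$ into the matrix with $(ab),(a'b')$ entry $\ksig\bigl(k(\vec x_{u_a},\vec x_{u_b}),k(\vec x_{u_{a'}},\vec x_{u_{b'}})\bigr)$, which by the Kronecker form of the prior covariance is exactly $\ksig(\vec K_{UU},\vec K_{UU}) = \vec K_{UU}\sk\vec K_{UU}$.

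Finally I would invoke the identity of \cref{app:kronecker} that $(\vec A\sk\vec A)\vecm{\vec X}=\vecm{\vec A\vec X\vec A}$ for symmetric $\vec X$, which yields $\bigl(\vec K_{UU}\sk\vec K_{UU}\bigr)^{+}\vecm{\vec K_{UU}}=\vecm{\vec K_{UU}^{-1}}$, and then contract, using $\vecm{\vec A}\Trans\vecm{\vec B}=\tr{\vec A\Trans\vec B}$ and the symmetry of $\vec K_{UU}^{-1}$:
\[
  \km(\vec c,\vec d) = \Bigl\langle \tfrac12\bigl(\vec u_{\vec c}\vec u_{\vec d}\Trans + \vec u_{\vec d}\vec u_{\vec c}\Trans\bigr),\, \vec K_{UU}^{-1}\Bigr\rangle_F = \vec u_{\vec c}\Trans\vec K_{UU}^{-1}\vec u_{\vec d} = k(\vec c,\vec X_U)\,k(\vec X_U,\vec X_U)^{-1}\,k(\vec X_U,\vec d),
\]
which is precisely $k_{\SoR{}}$. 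I expect the only real obstacle to be bookkeeping: keeping straight which pair of arguments each application of $\T$ acts on, tracking the row-stacking convention of $\vecm{\cdot}$, and the mild technical point that $\T(\T\ksig)\Trans$ is invertible only on the symmetric subspace — which causes no trouble once one checks, as above, that $\vecm{\Y}$ and $\T\ksig$ live there.
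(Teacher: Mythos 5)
Your proposal is correct and follows essentially the same route as the paper: it evaluates $\T\ksig$ and $\T(\T\ksig)\Trans=\vec K_{UU}\sk\vec K_{UU}$ exactly as in Lemma~\ref{lem:t} (specialized to $\vec S=\vec I_M$), inverts on the symmetric subspace via \eqref{eq:sk_inverse}, and contracts with the trace identity \eqref{eq:vec_trace}. The only difference is that the paper first proves the more general Proposition~\ref{thm:kernel_post} for arbitrary $\vec S$, $\kmu$ and $\kw$ and then specializes, whereas you plug in $\vec S=\vec I_M$ and $\kmu=0$ from the start, which shortens the bookkeeping without changing the argument.
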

The proof is part of \cref{app:dtc}.
An example of this posterior distribution is shown in Figure \ref{fig:story}.
The related method, \pname{Fully Independent Conditional} (FIC) has a very similar kernel, $k_{FIC}=k_{\SoR{}}(\vec x, \vec z)+\delta(\vec x - \vec z)(k(\vec x, \vec x)-k_{\SoR{}}(\vec x, \vec x))$.
The structure indicates that this kernel should fit as well into our framework.
One option that comes to mind, is to model the diagonal elements as certain, using a prior with mean $\kmu(k(\vec a, \vec b))\ce\delta(\vec a - \vec b)k(\vec a, \vec b)$ and covariance function $\ksig'(\vec a, \vec b; \vec c, \vec d)\ce (1-\delta(\vec a - \vec b))\ksig(\vec a, \vec b; \vec c, \vec d)(1-\delta(\vec c - \vec d))$. 
The posterior mean, however, is in general not the \pname{FIC} kernel, as for off-diagonal elements, the prediction differs to due to the certainty over the diagonal elements. 
Furthermore, the modification of the covariance function annuls the convenient algebraic properties of the associated covariance matrices and hence, this prior is dismissed as potential \pname{FIC} competitor.
Another strategy could be to add the diagonal elements as observations.
However, this is not possible with the operator as defined in Eq.~\eqref{eq:observation_operator} as it requires the mapping to a finite-dimensional vector.
Also restricting the observation to test- and training points does not lead \pname{FIC}.
It remains an open question whether \pname{FIC} fits into our proposed kernel approximation scheme.
Empirically, we found that replacing heuristically the approximate diagonal with the exact leads to worse performance.

\setlength{\figwidth}{\defaultwidthfactor\textwidth}
\setlength{\figheight}{0.8\figwidth}
\begin{figure}
\tempdisable{
\centering
\begin{tabular}{cccc}
$\km$ & $\sqrt{\ksigm}$ & $\frac{|k-\km|}{\sqrt{\ksigm}}-1$ & sample \\ 
%
%
\begin{tikzpicture}

\begin{axis}[%
width=0.75\figwidth,
height=\figheight,
at={(0\figwidth,0\figheight)},
scale only axis,
point meta min=-2.0000,
point meta max=2.0000,
axis on top,
xmin=-3.0060,
xmax=3.0060,
xtick={-3,  0,  3},
ymin=-3.0060,
ymax=3.0060,
ytick={-3,  0,  3},
axis background/.style={fill=white},
mystyle
]
\addplot [forget plot] graphics [xmin=-3.0060,xmax=3.0060,ymin=-3.0060,ymax=3.0060] {./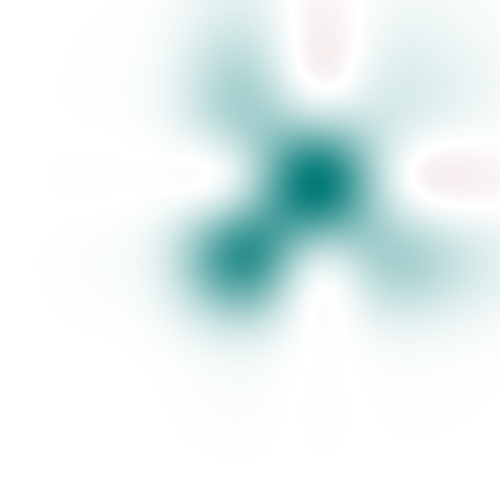};
\end{axis}
\end{tikzpicture}%
 &
%
%
\begin{tikzpicture}

\begin{axis}[%
width=0.75\figwidth,
height=\figheight,
at={(0\figwidth,0\figheight)},
scale only axis,
point meta min=-2.0000,
point meta max=2.0000,
axis on top,
xmin=-3.0060,
xmax=3.0060,
xtick={-3,  0,  3},
ymin=-3.0060,
ymax=3.0060,
ytick={\empty},
axis background/.style={fill=white},
mystyle
]
\addplot [forget plot] graphics [xmin=-3.0060,xmax=3.0060,ymin=-3.0060,ymax=3.0060] {./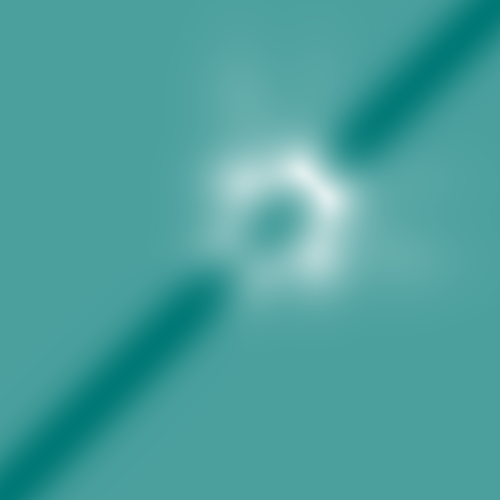};
\end{axis}
\end{tikzpicture}%
 &
%
%
\begin{tikzpicture}

\begin{axis}[%
width=0.75\figwidth,
height=\figheight,
at={(0\figwidth,0\figheight)},
scale only axis,
point meta min=-2.0000,
point meta max=2.0000,
axis on top,
xmin=-3.0060,
xmax=3.0060,
xtick={-3,  0,  3},
ymin=-3.0060,
ymax=3.0060,
ytick={\empty},
axis background/.style={fill=white},
mystyle
]
\addplot [forget plot] graphics [xmin=-3.0060,xmax=3.0060,ymin=-3.0060,ymax=3.0060] {./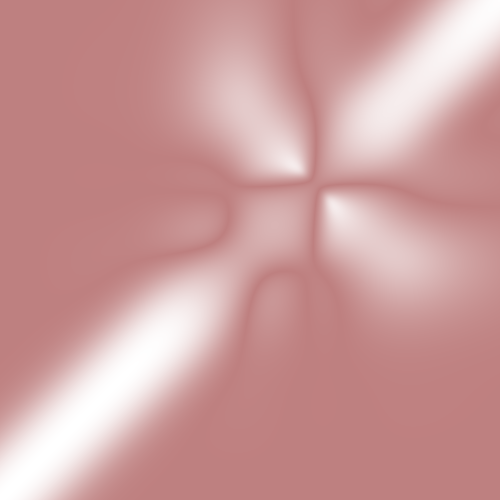};
\end{axis}
\end{tikzpicture}%
 &
%
%
\begin{tikzpicture}

\begin{axis}[%
width=0.75\figwidth,
height=\figheight,
at={(0\figwidth,0\figheight)},
scale only axis,
point meta min=-2.0000,
point meta max=2.0000,
axis on top,
xmin=-3.0060,
xmax=3.0060,
xtick={-3,  0,  3},
ymin=-3.0060,
ymax=3.0060,
ytick={\empty},
axis background/.style={fill=white},
mystyle
]
\addplot [forget plot] graphics [xmin=-3.0060,xmax=3.0060,ymin=-3.0060,ymax=3.0060] {./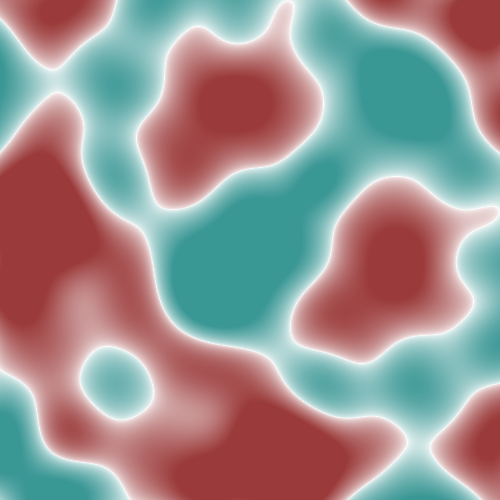};
\end{axis}
\end{tikzpicture}%
\\
%
%
\begin{tikzpicture}

\begin{axis}[%
width=0.75\figwidth,
height=\figheight,
at={(0\figwidth,0\figheight)},
scale only axis,
point meta min=-2.0000,
point meta max=2.0000,
axis on top,
xmin=-3.0060,
xmax=3.0060,
xtick={-3,  0,  3},
ymin=-3.0060,
ymax=3.0060,
ytick={-3,  0,  3},
axis background/.style={fill=white},
mystyle
]
\addplot [forget plot] graphics [xmin=-3.0060,xmax=3.0060,ymin=-3.0060,ymax=3.0060] {./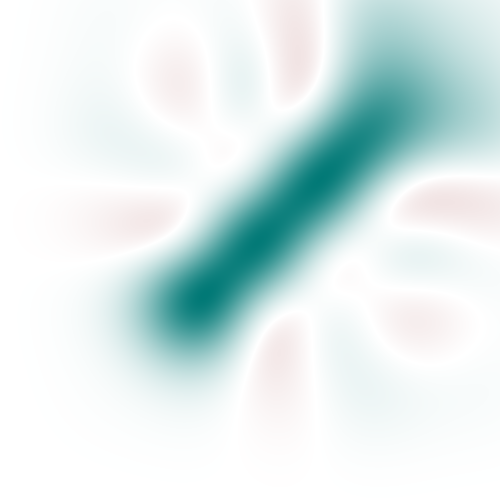};
\end{axis}
\end{tikzpicture}%
 &
%
%
\begin{tikzpicture}

\begin{axis}[%
width=0.75\figwidth,
height=\figheight,
at={(0\figwidth,0\figheight)},
scale only axis,
point meta min=-2.0000,
point meta max=2.0000,
axis on top,
xmin=-3.0060,
xmax=3.0060,
xtick={-3,  0,  3},
ymin=-3.0060,
ymax=3.0060,
ytick={\empty},
axis background/.style={fill=white},
mystyle
]
\addplot [forget plot] graphics [xmin=-3.0060,xmax=3.0060,ymin=-3.0060,ymax=3.0060] {./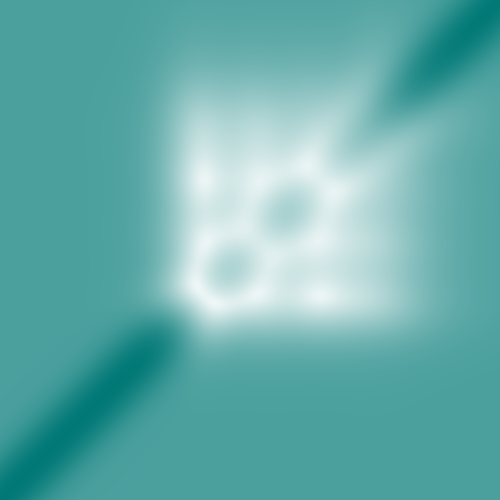};
\end{axis}
\end{tikzpicture}%
 &
%
%
\begin{tikzpicture}

\begin{axis}[%
width=0.75\figwidth,
height=\figheight,
at={(0\figwidth,0\figheight)},
scale only axis,
point meta min=-2.0000,
point meta max=2.0000,
axis on top,
xmin=-3.0060,
xmax=3.0060,
xtick={-3,  0,  3},
ymin=-3.0060,
ymax=3.0060,
ytick={\empty},
axis background/.style={fill=white},
mystyle
]
\addplot [forget plot] graphics [xmin=-3.0060,xmax=3.0060,ymin=-3.0060,ymax=3.0060] {./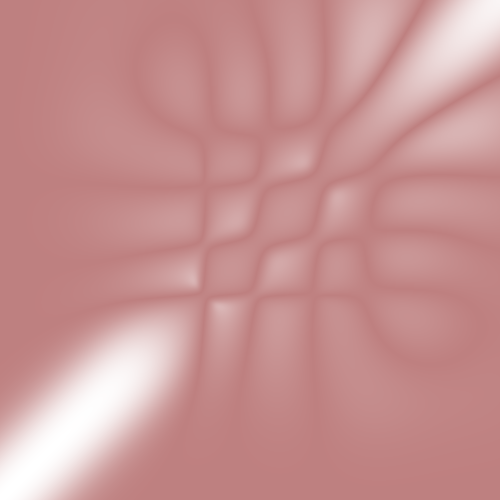};
\end{axis}
\end{tikzpicture}%
 &
%
%
\begin{tikzpicture}

\begin{axis}[%
width=0.75\figwidth,
height=\figheight,
at={(0\figwidth,0\figheight)},
scale only axis,
point meta min=-2.0000,
point meta max=2.0000,
axis on top,
xmin=-3.0060,
xmax=3.0060,
xtick={-3,  0,  3},
ymin=-3.0060,
ymax=3.0060,
ytick={\empty},
axis background/.style={fill=white},
mystyle
]
\addplot [forget plot] graphics [xmin=-3.0060,xmax=3.0060,ymin=-3.0060,ymax=3.0060] {./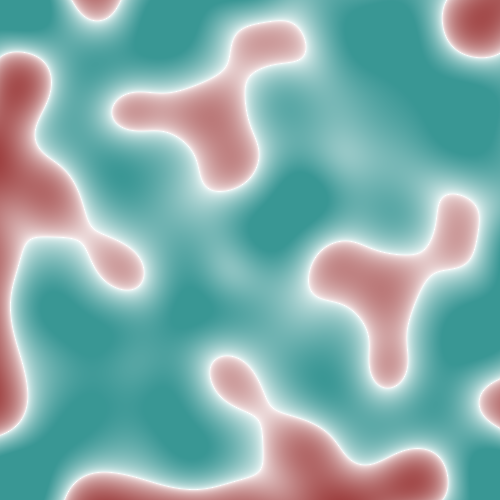};
\end{axis}
\end{tikzpicture}%
\\
%
%
\begin{tikzpicture}

\begin{axis}[%
width=0.75\figwidth,
height=\figheight,
at={(0\figwidth,0\figheight)},
scale only axis,
point meta min=-2.0000,
point meta max=2.0000,
axis on top,
xmin=-3.0060,
xmax=3.0060,
xtick={-3,  0,  3},
ymin=-3.0060,
ymax=3.0060,
ytick={-3,  0,  3},
axis background/.style={fill=white},
mystyle
]
\addplot [forget plot] graphics [xmin=-3.0060,xmax=3.0060,ymin=-3.0060,ymax=3.0060] {./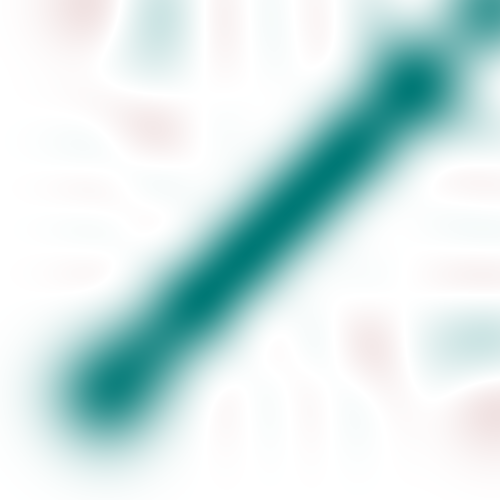};
\end{axis}
\end{tikzpicture}%
 &
%
%
\begin{tikzpicture}

\begin{axis}[%
width=0.75\figwidth,
height=\figheight,
at={(0\figwidth,0\figheight)},
scale only axis,
point meta min=-2.0000,
point meta max=2.0000,
axis on top,
xmin=-3.0060,
xmax=3.0060,
xtick={-3,  0,  3},
ymin=-3.0060,
ymax=3.0060,
ytick={\empty},
axis background/.style={fill=white},
mystyle
]
\addplot [forget plot] graphics [xmin=-3.0060,xmax=3.0060,ymin=-3.0060,ymax=3.0060] {./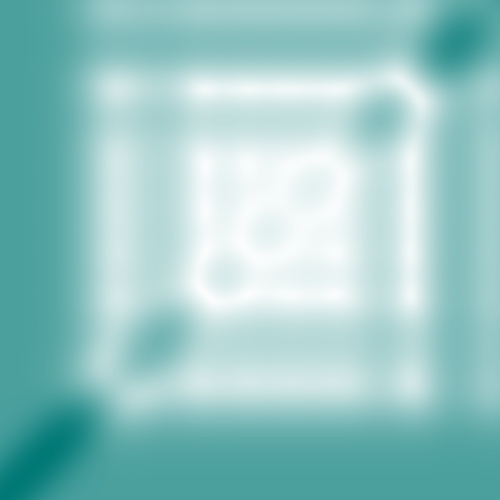};
\end{axis}
\end{tikzpicture}%
 &
%
%
\begin{tikzpicture}

\begin{axis}[%
width=0.75\figwidth,
height=\figheight,
at={(0\figwidth,0\figheight)},
scale only axis,
point meta min=-2.0000,
point meta max=2.0000,
axis on top,
xmin=-3.0060,
xmax=3.0060,
xtick={-3,  0,  3},
ymin=-3.0060,
ymax=3.0060,
ytick={\empty},
axis background/.style={fill=white},
mystyle
]
\addplot [forget plot] graphics [xmin=-3.0060,xmax=3.0060,ymin=-3.0060,ymax=3.0060] {./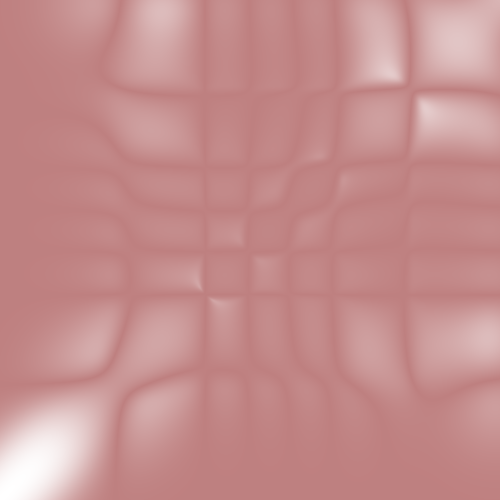};
\end{axis}
\end{tikzpicture}%
 &
%
%
\begin{tikzpicture}

\begin{axis}[%
width=0.75\figwidth,
height=\figheight,
at={(0\figwidth,0\figheight)},
scale only axis,
point meta min=-2.0000,
point meta max=2.0000,
axis on top,
xmin=-3.0060,
xmax=3.0060,
xtick={-3,  0,  3},
ymin=-3.0060,
ymax=3.0060,
ytick={\empty},
axis background/.style={fill=white},
mystyle
]
\addplot [forget plot] graphics [xmin=-3.0060,xmax=3.0060,ymin=-3.0060,ymax=3.0060] {./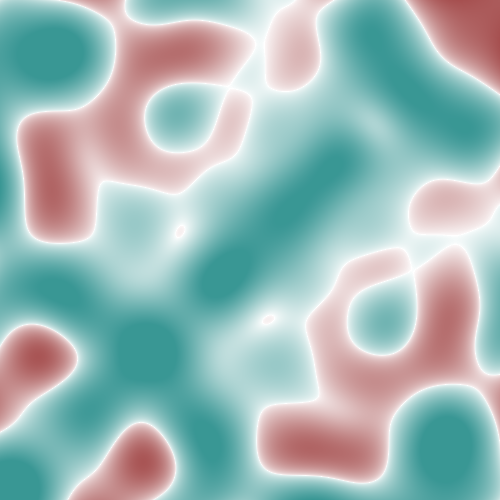};
\end{axis}
\end{tikzpicture}%
 \end{tabular}
 \hphantom{$-3$}
%
%
\begin{tikzpicture}

\begin{axis}[%
width=0.951\figwidth,
height=\figheight,
at={(0\figwidth,0\figheight)},
scale only axis,
point meta min=0.0000,
point meta max=1.0000,
xmin=-2.0000,
xmax=2.0000,
ymin=-2.0000,
ymax=2.0000,
hide axis,
mystyle,
colormap={mymap}{[1pt] rgb(0pt)=(0.605518,0.225595,0.225595); rgb(1011pt)=(1,1,1); rgb(2023pt)=(0.225403,0.590781,0.582028)},
colorbar horizontal,
colorbar style={at={(0.5,0.03)},anchor=south,xtick={0, 0.5, 1}, xticklabels={-2.0, 0.0, 2.0},xticklabel pos=upper,width=0.97*\pgfkeysvalueof{/pgfplots/parent axis width}}
]

\addplot[area legend,solid,draw=black,fill=black,forget plot]
table[row sep=crcr] {%
x	y\\
0.0000	0.0000\\
}--cycle;
\end{axis}
\end{tikzpicture}%
 }
\caption{progression of the posterior (Eq.~\ref{eq:kmcg_kernel}) for \kmcg{} on the toy example from Figure \ref{fig:toy_example} for $P=2, 4$ and $8$ conjugate gradients steps.
The columns show from left to right: mean, standard deviation, standardized error
(white refers to perfect calibration, green to overconfidence and red to underconfidence) and a sample.
}
\label{fig:kmcg_progression}
\end{figure}

\begin{restatable}[Projected Bayes Regressor]{proposition}{PBRprop}
\label{thm:pbr}
Consider the prior of \cref{eq:prior} with $\kmu\ce 0$ and $w\ce k$ and the likelihood defined in \cref{ex:rks}.
Let $\lambda_1$ to $\lambda_P$ be the largest eigenvalues of the kernel $k$ (w.r.t~to the Mercer expansion) and assume the inputs $\vec x_1, ..., \vec x_N$ are independent and identical draws from $\nu$.
Then the posterior kernel $\km$ leads to the \emph{Projected Bayes Regressor} \citep{Trecate.1999}.
\end{restatable}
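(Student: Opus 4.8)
To prove \cref{thm:pbr} the plan is to specialise the generic posterior mean \cref{eq:km} to the likelihood of \cref{ex:rks} and show that the result is the rank-$P$ truncation of the Mercer series of $k$, which is exactly the degenerate kernel underlying the estimator of \citet{Trecate.1999}. Throughout I write $\vec\phi(\vec c)\ce(\phi_1(\vec c),\dots,\phi_P(\vec c))\Trans$ and $\vec\Lambda\ce\diag[\lambda_1,\dots,\lambda_P]$, and I repeatedly use the two Mercer identities from \cref{ex:rks}: $\int k(\vec x,\vec u)\phi_i(\vec u)\nu(\vec u)\dx{\vec u}=\lambda_i\phi_i(\vec x)$ and $\int\phi_i(\vec u)\phi_j(\vec u)\nu(\vec u)\dx{\vec u}=\delta_{ij}$.

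\emph{Step 1 (cross-covariance).} With $\kw=k$ and $p_i(\vec u)=\phi_i(\vec u)\nu(\vec u)$, the defining identity \cref{eq:w_requirement} of $\ksig$ gives that the covariance between the target value $k(\vec x,\vec z)$ and the $(k,l)$-th observation equals $[\,\T[\ksig(k(\vec x,\vec z),k(\,\cdot\,,\,\cdot\,))]\,]_{kl}=\iint\tfrac12\bigl(k(\vec x,\vec u)k(\vec z,\vec v)+k(\vec x,\vec v)k(\vec z,\vec u)\bigr)p_k(\vec u)p_l(\vec v)\dx{\vec u}\dx{\vec v}$; applying the first Mercer identity to each of the two inner integrals turns this into $\tfrac12\lambda_k\lambda_l\bigl(\phi_k(\vec x)\phi_l(\vec z)+\phi_l(\vec x)\phi_k(\vec z)\bigr)$, a symmetrised rank-one object $\tfrac12\vec\Lambda\bigl(\vec\phi(\vec x)\vec\phi(\vec z)\Trans+\vec\phi(\vec z)\vec\phi(\vec x)\Trans\bigr)\vec\Lambda$ --- as anticipated by the Kronecker structure of the prior. \emph{Step 2 (Gram matrix).} Hitting this with $\T$ once more in the two remaining arguments and invoking the orthonormality identity collapses the integrals to Kronecker deltas,
\[
[\T(\T\ksig)\Trans]_{(ij),(kl)}=\tfrac12\lambda_i\lambda_j\bigl(\delta_{ik}\delta_{jl}+\delta_{il}\delta_{jk}\bigr),
\]
which is exactly the symmetric Kronecker matrix $\vec\Lambda\sk\vec\Lambda$ of Appendix~\ref{app:kronecker}.

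\emph{Step 3 (inversion and contraction).} The matrix $\vec\Lambda\sk\vec\Lambda$ is rank-deficient on $\Re^{P^2}$, but on vectorised \emph{symmetric} $P\times P$ matrices it acts as the entry-wise map $\vec M\mapsto(\lambda_i\lambda_j\vec M_{ij})_{ij}$, hence is a bijection there whenever all $\lambda_i>0$, with inverse given by entry-wise multiplication by $1/(\lambda_i\lambda_j)$. The observation matrix $\Y$ from \cref{ex:rks} equals the diagonal $\vec\Lambda$, so its vectorised form $\vecm{\Y}$ lies in that symmetric subspace and $(\T(\T\ksig)\Trans)^{-1}\vecm{\Y}$ has entries $\delta_{ij}/\lambda_i$. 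Contracting this against the cross-covariance of Step 1 (only the $k=l$ terms survive) gives
\[
\km(\vec x,\vec z)=\sum_{k=1}^{P}\frac{1}{\lambda_k}\cdot\tfrac12\lambda_k^2\bigl(\phi_k(\vec x)\phi_k(\vec z)+\phi_k(\vec z)\phi_k(\vec x)\bigr)=\sum_{k=1}^{P}\lambda_k\,\phi_k(\vec x)\phi_k(\vec z).
\]
When $\lambda_1,\dots,\lambda_P$ are the $P$ largest eigenvalues, this is the optimal rank-$P$ truncation of the Mercer expansion of $k$, i.e. the degenerate kernel $\vec\phi(\vec x)\Trans\vec\Lambda\,\vec\phi(\vec z)$ of the form \cref{eq:deg_kern} with $\vec\Sigma=\vec\Lambda^{-1}$. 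Substituting it into the low-rank predictive equations \cref{eq:mean_2}--\cref{eq:llh_2} reproduces the estimator of \citet{Trecate.1999}, the i.i.d.-from-$\nu$ hypothesis serving to align the feature map $\vec\phi$ used here with the finite-dimensional projection employed in that work.

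The genuinely delicate point is Step 3: making the inverse in \cref{eq:km} rigorous even though $\T(\T\ksig)\Trans$ is singular on $\Re^{P^2}$. One must argue that the symmetrisation built into $\ksig$ keeps the inference well-posed --- the range of $\T$, and in particular $\vecm{\Y}$, lands in the symmetric subspace on which the Gram operator is a bijection --- and that the (pseudo-)inverse one uses is the one consistent with the definition \cref{eq:km}. By comparison, the computations of Steps~1--2 are routine once \cref{eq:w_requirement} is combined with the two Mercer identities, the only remaining care being the index bookkeeping through $\vecm{\cdot}$/$\tomat{\cdot}$ and the symmetric-Kronecker algebra of Appendix~\ref{app:kronecker}; and matching $\km$ to the precise object \citet{Trecate.1999} call the Projected Bayes Regressor (normalisation, and whether they employ the exact Mercer eigenfunctions or a data-based surrogate, which is where the i.i.d.\ assumption really enters) is bookkeeping rather than a mathematical obstacle.
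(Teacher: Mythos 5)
Your proposal is correct and follows essentially the same route as the paper's own proof (Appendix~\ref{app:pbr} and Lemma~\ref{lem:t_eigenfunction}): compute $\T k=\vecm{\vec\Lambda}$, the cross-covariance $\tfrac12\vec\Lambda(\vec\phi(\vec x)\vec\phi(\vec z)\Trans+\vec\phi(\vec z)\vec\phi(\vec x)\Trans)\vec\Lambda$, the Gram matrix $\vec\Lambda\sk\vec\Lambda$, invert via the symmetric-Kronecker identity to obtain $\km(\vec x,\vec z)=\vec\phi(\vec x)\Trans\vec\Lambda\vec\phi(\vec z)$, and substitute into the predictive mean to match Definition~1 of \citet{Trecate.1999}. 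Your explicit discussion of why the rank-deficient Gram operator is invertible on the symmetric subspace containing $\vecm{\Y}$ is a welcome elaboration of the caveat the paper attaches to \cref{eq:sk_inverse}.
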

The proof is part of \cref{app:pbr}.


\section{Conjugate Gradients for Kernel Machines}
\label{sec:intuition}
The previous section introduced a probabilistic estimation rule for the kernel $k$.
This section presents another data-collection approach using conjugate gradients that leads to a new approximation algorithm: \pname{kernel machine conjugate gradients} (\kmcg{}).

The interest to use conjugate gradients for kernel machines goes back to more than 25 years \citep{skilling1993numerical} and is still continuing \citep{davies2015effectiveGPimplementation, filippone2015ulisse}.
Albeit quadratic costs per step, CG has advantages over many of the approximation methods referenced in the introduction. 
CG has only one parameter, the desired precision, which is more natural than \eg{}the number of inducing inputs for inducing point methods \citep{quinonero2005unifying}.
This means, the computational budget of CG is not fixed in advance but varies as necessary for the problem at hand.

\subsection{Conjugate Gradients}
\label{sec:cg}
Conjugate gradients (Algorithm \ref{algo:cg}) is an iterative solver for linear equation systems $\vec A\vec x=\vec b$ where $\vec A\in\Re^{N\times N}$ is a real, symmetric and positive definite matrix \citep{hestenes1952methods}.
In theory, CG returns the exact solution $\vec x$ after $N$ steps.
In practice, CG is used as approximate solver and can provide good approximations to $\vec x$ in significantly less than $N$ steps.

The costs of running CG are dominated by a matrix-vector multiplication in each step which in general has complexity $\O(N^2)$.
The number of necessary steps depends on the eigenvalues $\lambda_1>...>\lambda_N$ of $\vec A$.
The following summary of the properties of CG is an excerpt from \citet[Chapter 5.1]{nocedal1999numerical}.
We use the notation $\|\vec x\|_{\vec \Lambda}^2\ce\vec x\Trans\vec \Lambda\vec x$ for any symmetric and positive definite matrix $\vec \Lambda$. 
The $\vec A$-error of CG decreases in each step with
\begin{align}
\|\vec x_{k+1}-\vec x\|_{\vec A}^2\leq \left(\frac{\lambda_{N-k}-\lambda_1}{\lambda_{N-k}+\lambda_1}\right)^2||\vec x_0-\vec x||_{\vec A}^2
\end{align}
and one can show that if $\vec A$ has at most $r$ distinct eigenvalues, then CG terminates after $r$ steps with the exact solution.
Thus conjugate gradients is particularly advantageous if the eigenvalues of $\vec A$ are clustered or decay rapidly.

\newlength{\maxwidth}
\newcommand{\algalign}[2]
{\makebox[\maxwidth][r]{$#1{}$}$\gets {}#2$}
\begin{algorithm}
\caption{Conjugate Gradients}
\label{algo:cg}
\begin{algorithmic}[1]
\settowidth{\maxwidth}{$\vec r_0$}
\Procedure{ConjugateGradients}{$\vec A$, $\vec b$, $\vec x_0$, $\epsilon$}
\State \algalign{\vec r_0}{\vec A \vec x_0 - \vec b} \Comment{The initial residual ...}
\State \algalign{\vec s_0}{-\vec r_0} \Comment{... is the first search direction.}
\State \algalign{i}{0} 
\While{$||\vec r_i||_2 > \epsilon$}
\settowidth{\maxwidth}{$\vec x_{i+1}$}
\State \algalign{\vec z_i}{\vec A \vec s_i} \Comment{the most expensive step: $\O(N^2)$ matrix-multiplication}
\State \algalign{\alpha_i}{\frac{\vec r_i\Trans\vec r_i}{\vec s_i\Trans \vec z_i}} \Comment{optimal linesearch along $\vec s_i$ for $\phi(\vec x)\ce\vec x\Trans \vec A\vec x- 2\vec x\Trans \vec b$}
\State \algalign{\vec x_{i+1}}{\vec x_{i}+\alpha_i \vec s_i} \Comment{update to the solution}
\State \algalign{\vec r_{i+1}}{\vec r_{i} + \alpha_i\vec z_i} \Comment{analogue update to the residual}
\State \algalign{\vec s_{i+1}}{-\vec r_{i+1}+\frac{\vec r_{i+1}\Trans\vec r_{i+1}}{\vec r_i\Trans \vec r_i}\vec s_i} \Comment{Gram-Schmidt applied to the new residual}
\State \algalign{i}{i+1} 
\EndWhile
\State \Return $\vec x_i$ 
\EndProcedure
\end{algorithmic}
\end{algorithm}

\subsection{Kernel-machine Conjugate Gradients}
\label{sec:kmcg}
Our approach is to run conjugate gradients for $P$ steps on a kernel matrix of size $M$ and to treat the matrix multiplications ($\vec z_i$ in Algorithm \ref{algo:cg}) as observations in the model presented in Section \ref{sec:model}. 
Formally the likelihood is defined similar to the \SoR{} likelihood (Example \ref{ex:dtc}) albeit scaled.

\begin{definition}[Conjugate-gradients likelihood]
Choose a subset $\vec X_M$ of size $M$ from $\vec X$ and denote as $\vec y_M\in \Re^M$ the vector that contains the corresponding entries of $\vec y$.
Run conjugate gradients (Algorithm \ref{algo:cg} on p.~\pageref{algo:cg}) with $\vec x_0\ce \vec 0$, $\vec A=k(\vec X_M, \vec X_M)$, $\vec b=\vec y_M$ and $\epsilon\ce0.01||\vec b||_2$.
In Equation \eqref{eq:observation_operator} set
\begin{align}
\label{eq:lk_kmcg}
p_{i}(\vec x)\ce&\sum_{j=1}^M s_j \delta(\vec x - \vec x_j)
\end{align}
where $s_j$ is the $j$-th entry of vector $\vec s_i$ in iteration $i$ of Algorithm \ref{algo:cg}. 
\end{definition}
\begin{remark}
\kmcg{} uses only the CG search directions $\vec s_1, ..., \vec s_P$ and \emph{not} the solution $\hat{\vec x}$. 
\end{remark}
\pn{
The relationship between the CG $\hat{\vec x}$ and the \kmcg{} $\hat{\vec x}$.
\begin{align}
\hat{\vec x}_{CG}&=\vec S(\vec S\Trans (\vec K + \noise \vec I)\vec S)^{-1}\vec S\Trans\vec y
\\\hat{\vec x}_{\kmcg{}}&=(\vec K\vec S(\vec S\Trans \vec K\vec S)\vec S\Trans \vec K+\noise \vec I)^{-1}\vec y
\\&=\frac{1}{\noise}\left(\vec y-\vec K\vec S(\vec S\Trans \vec K\vec K\vec S+\noise \vec S\Trans \vec K\vec S)^{-1}\vec S\Trans \vec K\vec y\right)
\end{align}
and the $\vec S$ differ in between equations.
}
\newcommand{\Gram}{\vec R\Trans \vec R+\noise \S\Trans\Km\S}
Using this likelihood, the resulting approximate kernel (Eq.~\eqref{eq:km}) and approximate Equations are (\cref{thm:kernel_post}, p.~\pageref{thm:kernel_post}):
\begin{align}
\label{eq:kmcg_kernel}
\hat{k}_M(\vec x_{*}, \vec x_{**})&=k(\vec x_*, \vec X_M)\vec S(\vec S\Trans \Km \vec S)^{-1}\vec S\Trans k(\vec X_M, \vec x_{**})
\\\hat{f}(\vec x_*) &=  k(\vec x_*, \vec X_M)\vec S(\Gram{})^{-1}\vec R\Trans \vec y \label{eq:kmcg_mean}
\\\label{eq:kmcg_var}
\hat{c}(\vec x_*, \vec x_{**}) &=  k(\vec x_*, \vec x_{**})-k(\vec x_*, \vec X_M)\S(\S\Trans\Km\S)^{-1}\S\Trans k(\vec X_M, \vec x_{**})
\\&\quad +\noise k(\vec x_*, \vec X_M)\S\left(\Gram{}\right)^{-1}\S\Trans k(\vec X_M, \vec x_{**}) \notag
\\\label{eq:kmcg_nlZ}
\ln \hat{Z}&=\frac{1}{2\sigma^2}(\vec y\Trans \vec y-\vec y\Trans \vec R(\Gram{})^{-1}\vec R\Trans \vec y)
\\&\quad +\frac{1}{2}\ln|\Gram{}|-\frac{1}{2}|\S\Trans \Km\S|\notag
\\&\quad+\frac{1}{2}(N-P)\ln \sigma^2 +\frac{1}{2}N\ln 2\pi \notag
\end{align}
where $\S\ce \begin{bmatrix}\vec s_1 & ... & \vec s_P\end{bmatrix}, \vec R\ce k(\vec X_M, \vec X)\S$ and $P$ is the number of CG-iterations.
We call this approximation \pname{kernel machine conjugate gradients} (\kmcg{}).
\begin{algorithm}
\caption{Kernel Machine Conjugate Gradients}
\label{algo:kmcg}
\algnewcommand{\LeftComment}[1]{\State \(\triangleright\) #1}
\begin{algorithmic}[1]
\Procedure{KMCG}{$k$, $\vec X$, $\vec y$, $\sigma^2$, $\epsilon$}
\LeftComment{We assume (w.l.o.g.) that the inducing inputs are a subset of $\vec X$, denoted by $\vec X_M$.}
\LeftComment{Let $\vec y_M$ the be  corresponding entries of $\vec y$.}
\State Conjugate Gradients$(k(\vec X_M, \vec X_M), \vec y_M, \epsilon)$ \Comment{ignore solution $\hat{\vec x}$}
\State $\vec S\gets [\vec s_1, ..., \vec s_P]$ \Comment{collect CG search directions}
\State $\vec Z\gets [\vec z_1, ..., \vec z_P]$ \Comment{$\vec Z = \vec K_M \vec S$}
\If{$M<N$}
\State $\vec R \gets k(\vec X, \vec X_M)\vec S$ 
\Else
\State $\vec R \gets \vec Z$ \Comment{When $\vec X_M=\vec X$ above matrix multiplication is not necessary.}
\EndIf
\State $\vec L_1 \gets \operatorname{chol}(\vec S\Trans \vec Z)$ \Comment{precompute required Choleskies}
\State $\vec L_2 \gets \operatorname{chol}(\sigma^2\vec S\Trans \vec Z+\vec R\Trans \vec R)$
\State evaluate Eqs.~\eqref{eq:kmcg_mean} to \eqref{eq:kmcg_nlZ}
\EndProcedure
\end{algorithmic}
\end{algorithm}
\subsection{Properties}
Figure \ref{fig:kmcg_progression} shows how the approximation to the kernel progresses for the toy example from Figure \ref{fig:toy_example}.
Computing the Cholesky of $\Gram$ costs $\O(NMP)$.
After that evaluating the mean prediction is possible in $\O(M)$ and the variance in $\O(MP)$.

In case $P=M$, \kmcg{} reduces to \SoR{} since all occurrences of $\S$ in Equation \eqref{eq:kmcg_kernel} cancel and what remains is the \SoR{} kernel (Equation \ref{eq:k_sor}).
If $\Km$ has a favorable distribution of eigenvalues such that conjugate gradients terminates in less than $M$ steps (\cf{}\cref{sec:cg}), \kmcg{} can be used to speed up \SoR{}.\footnote{The same applies to related methods such as \pname{DTC} \citep{quinonero2005unifying} and Titsias' method \citep{titsias09:_variat_learn_induc_variab_spars_gauss_proces}.}
In practice, this kind of advantage can only be expected to be beneficial when realized in low-level code.
The level of efficiency of existing low-level linear algebra routines makes it challenging to evaluate this area.

Recall that the computational complexity of CG for the solution of Eq.~\eqref{eq:prior} in $P$ iterations is $\O(N^2P)$, 
that of inducing point methods with $M$ inducing inputs is $\O(NM^2)$, and \kmcg{} running for $P$ iterations on $M$ inducing points has complexity $\O(NMP)$. 
While the main point of the present paper is to ``fix'' problems of CG in kernel machines, this structure hints at an interesting side-observation: 
Restricting the number of steps $P$ in advance can then allow to increase the number of inducing points $M$ beyond what would otherwise be feasible with standard inducing input methods. 
The subsequent evaluation section is dedicated to the case $M=N$, \ie{}using the whole dataset which places \kmcg{} in direct competition to plain conjugate gradients.

\subsubsection{Relationship to the Nadaraya-Watson estimator}
Taking only one step ($P=1$) implies $\vec S=\vec y_M$ and Equation \eqref{eq:kmcg_mean} takes the following form
$$\hat{f}(\vec x_*)=\alpha \sum_{m=1}^M k(\vec x_m, \vec x_*) y_m$$
where $\alpha=\frac{\vec y_M\Trans \vec K_M\vec y_M}{\noise \vec y_M\Trans \vec K_M\vec y_M + \vec y_M\Trans \vec K_M\vec K_M\vec y_M}$.
The equation bears resemblance to the Nadaraya-Watson estimator \cite[p.~301f]{bishop2006pattern}:
a sum over all training targets weighted by the similarity of the corresponding input to the test input.
However, the scaling-factor $\alpha$ is different.
Since conjugate gradients solves the linear system for the mean prediction, it is to be expected that this might incur a trade-off to the approximation of the variance.
See Section \ref{sec:exp_cg} for an empirical evaluation of the quality of the variance estimate.

\pn{
The mean of a Gaussian is the best least-squares approximation to ... in the span of the search directions.
But this still does not answer why we should choose the search direction.
Of CG we know that $\vec{\hat{x}}$ minimizes the polynomial...
This does not apply here, because we throw it away.
The analysis of above property is complicated.
The aim of this work is to explore empirically if there could be something interesting.
}

\subsubsection{Uncertainty}
\label{sec:uncertainty}
\pn{
We provide an approach to derive an uncertainty. 
This arises naturally from our setting and that is the reason why we mention it here even though it might not be usable.
Maybe the approach is helpful for someone else.
}
\pn{
Things I want to/ should write here.
\begin{itemize}
\item This section extends the analysis contained in Hennig2015 an Bartels2015
\item The uncertainty reduces to zero for known entries. (Well calibratedness!)
\item How would one actively select where to make observations?
\item Gamma prior
\item scale on the diagonal
\item bags of underconfidence where there is no data 
\begin{itemize}
  \item How does this deviate from what has been said in \cite{Hennig_LinearSolvers2015}?
  \item Can extend it by looking at posterior mean and not just prior.
\end{itemize}
\item Need to do a case distinction
\begin{itemize}
  \item SoR
  \item Spectral Methods?
\end{itemize}
\item What about non-stationary kernels? Could try periodic times polynomial.
\item 
\end{itemize}
}
In addition to the posterior mean $\km$, the Gaussian formulation of the approximation problem also provides a posterior variance $\ksigm$. 
It is a natural question to which degree this object can be interpreted as a notion of uncertainty or, more specifically, as an estimate of the square error $(k - \km)^2$. 
This section provides an analysis of this covariance for \kmcg{}, showing it to be an outer bound on the true error.
Figure \ref{fig:kmcg_progression} visualizes this for the toy dataset from Figure \ref{fig:toy_example}.

\begin{restatable}[relative error bound]{proposition}{VARprop}
The relative size of estimation error and error estimate is bounded from above by 2.
\begin{align}
\label{eq:sq_err_over_var}
\frac{(k(\vec x, \vec z)-\km(\vec x, \vec z))^2}{\ksigm(k(\vec x, \vec z), k(\vec x, \vec z))}\leq 2
\end{align}
\end{restatable}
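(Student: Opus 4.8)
The plan is to recast the whole computation inside the RKHS of $k$, where — because $\kw=k$ — the symmetric--Kronecker prior of Section~\ref{sec:prior} is nothing but a ``white'' Gaussian on symmetric two-tensors, so that the posterior mean and variance of Section~\ref{sec:posterior} become an orthogonal projection and its defect. Write $k(\vec x,\vec z)=\langle\phi(\vec x),\phi(\vec z)\rangle_{\H}$ for the canonical feature map into the RKHS $\H$ of $k$; only finitely many features occur (namely $\phi(\vec x),\phi(\vec z)$ and the $\phi$ of the inducing inputs $\vec X_M$), so we may work inside a finite-dimensional $U\subseteq\H$ containing them. For $u,v\in U$ let $u\vee v$ denote the symmetrised tensor with $\langle u\vee v,\,u'\vee v'\rangle=\tfrac12\bigl(\langle u,u'\rangle\langle v,v'\rangle+\langle u,v'\rangle\langle v,u'\rangle\bigr)$. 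Two identities drive everything: with $w=k$ the covariance \eqref{eq:w_requirement} is exactly $\ksig\bigl(k(\vec a,\vec b),k(\vec c,\vec d)\bigr)=\langle\phi(\vec a)\vee\phi(\vec b),\,\phi(\vec c)\vee\phi(\vec d)\rangle$, and $k(\vec x,\vec z)=\langle\Pi_U,\,\phi(\vec x)\vee\phi(\vec z)\rangle$ where $\Pi_U$ is the orthogonal projection onto $U$ (playing the role of the identity operator). Hence, using $\kmu=0$, the conditioning formulas \eqref{eq:km}--\eqref{eq:ksigm} with $\ksig$ identified with this inner product are just the standard Gaussian update for covariance the identity: setting $\psi_i\ce\sum_{m}(\vec s_i)_m\,\phi(\vec x_m)$, $W\ce\operatorname{span}\{\psi_1,\dots,\psi_P\}$ and $V\ce\operatorname{Sym}(W\otimes W)$ — which is precisely the span of the observation features $\{\psi_i\vee\psi_j\}_{i,j}$ since the $\psi_i$ span $W$ — one gets
\[
\km(\vec x,\vec z)=\bigl\langle\Pi_U,\ \Pi_V\bigl(\phi(\vec x)\vee\phi(\vec z)\bigr)\bigr\rangle,
\qquad
\ksigm\bigl(k(\vec x,\vec z),k(\vec x,\vec z)\bigr)=\bigl\lVert\Pi_{V^\perp}\bigl(\phi(\vec x)\vee\phi(\vec z)\bigr)\bigr\rVert^2 .
\]

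Next I would split $\phi(\vec x)=\vec a+\vec a'$ and $\phi(\vec z)=\vec b+\vec b'$ with $\vec a\ce\Pi_W\phi(\vec x)$, $\vec a'\ce\phi(\vec x)-\vec a\in W^{\perp}$, and likewise $\vec b,\vec b'$, and expand $\phi(\vec x)\vee\phi(\vec z)=\vec a\vee\vec b+\vec a\vee\vec b'+\vec a'\vee\vec b+\vec a'\vee\vec b'$. Inserting $\vec w\vee\vec w'$ with $\vec w,\vec w'\in W$ into the bilinear form above shows $\vec a\vee\vec b\in V$ while each of the other three summands is orthogonal to $V$; therefore $\Pi_V(\phi(\vec x)\vee\phi(\vec z))=\vec a\vee\vec b$, so $\km(\vec x,\vec z)=\langle\vec a,\vec b\rangle=\langle\Pi_W\phi(\vec x),\Pi_W\phi(\vec z)\rangle$ — the Nyström/\SoR{}-type form one expects from \eqref{eq:kmcg_kernel}. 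Using $\vec a\perp\vec b'$ and $\vec a'\perp\vec b$ this gives the clean error identity $k(\vec x,\vec z)-\km(\vec x,\vec z)=\langle\vec a',\vec b'\rangle$. For the denominator, a short computation with the same bilinear form (the three cross-tensors are pairwise orthogonal except $\langle\vec a\vee\vec b',\vec a'\vee\vec b\rangle=\tfrac12\langle\vec a,\vec b\rangle\langle\vec a',\vec b'\rangle$, and $\langle\vec a,\vec b'\rangle=\langle\vec a',\vec b\rangle=0$) yields
\[
\ksigm\bigl(k(\vec x,\vec z),k(\vec x,\vec z)\bigr)=\tfrac12\lVert\vec a\rVert^2\lVert\vec b'\rVert^2+\tfrac12\lVert\vec a'\rVert^2\lVert\vec b\rVert^2+\langle\vec a,\vec b\rangle\langle\vec a',\vec b'\rangle+\tfrac12\lVert\vec a'\rVert^2\lVert\vec b'\rVert^2+\tfrac12\langle\vec a',\vec b'\rangle^2 .
\]

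Finally I would substitute these two expressions into \eqref{eq:sq_err_over_var}: after cancelling a common $\langle\vec a',\vec b'\rangle^2$, the claimed bound $(k-\km)^2\le 2\,\ksigm$ is equivalent to
\[
0\ \le\ \lVert\vec a\rVert^2\lVert\vec b'\rVert^2+\lVert\vec a'\rVert^2\lVert\vec b\rVert^2+2\langle\vec a,\vec b\rangle\langle\vec a',\vec b'\rangle+\lVert\vec a'\rVert^2\lVert\vec b'\rVert^2 ,
\]
which holds because $2\langle\vec a,\vec b\rangle\langle\vec a',\vec b'\rangle\ \ge\ -2\lVert\vec a\rVert\lVert\vec b\rVert\,\lVert\vec a'\rVert\lVert\vec b'\rVert\ \ge\ -\bigl(\lVert\vec a\rVert^2\lVert\vec b'\rVert^2+\lVert\vec a'\rVert^2\lVert\vec b\rVert^2\bigr)$ by Cauchy--Schwarz and then AM--GM, leaving the nonnegative remainder $\lVert\vec a'\rVert^2\lVert\vec b'\rVert^2$. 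The routine parts are then over; the real work — and the place a normalisation or sign slip is most likely to hide — is the first paragraph: verifying that the symmetric--Kronecker prior is the canonical Gaussian on $\operatorname{Sym}(U\otimes U)$ (this is where Appendix~\ref{app:kronecker} enters), that the true kernel is represented by $\Pi_U$, and then carefully tracking which of the $\vee$-cross-terms survive in $\Pi_{V^\perp}(\phi(\vec x)\vee\phi(\vec z))$. Once the three displayed identities for $\km$, $k-\km$ and $\ksigm$ are in hand, the inequality is a one-liner.
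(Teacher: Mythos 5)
Your proof is correct, and it takes a genuinely different route from the paper's. The paper works directly with the closed-form expressions: it writes $\ksigm = \nicefrac{1}{2}\bigl(k(\vec x,\vec x)k(\vec z,\vec z)+k(\vec x,\vec z)^2-\km(\vec x,\vec x)\km(\vec z,\vec z)-\km(\vec x,\vec z)^2\bigr)$, drops the first pair of terms using $k(\vec x,\vec x)\geq\km(\vec x,\vec x)$, and then cancels a factor of $k(\vec x,\vec z)-\km(\vec x,\vec z)$ against the denominator $k(\vec x,\vec z)^2-\km(\vec x,\vec z)^2$. You instead lift everything to the symmetric tensor square of the RKHS of $k$ (legitimate because, with $\kw=k$, the covariance \eqref{eq:w_requirement} is exactly the induced inner product on symmetric two-tensors, and the conditioning formulas \eqref{eq:km}--\eqref{eq:ksigm} only ever touch Gram matrices of the representers $\psi_i\vee\psi_j$), identify $\km$ and $\ksigm$ with a projection and its defect, and reduce the claim to Cauchy--Schwarz plus AM--GM after the orthogonal decomposition $\phi(\vec x)=\vec a+\vec a'$, $\phi(\vec z)=\vec b+\vec b'$. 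I checked your three displayed identities against the paper's closed forms ($\km(\vec x,\vec z)=\langle\vec a,\vec b\rangle$ matches \eqref{eq:kmcg_kernel}, and your expansion of $\ksigm$ matches the paper's \eqref{eq:scg_var} term by term); they are all correct, as is the final inequality, which leaves the nonnegative slack $\lVert\vec a'\rVert^2\lVert\vec b'\rVert^2+\bigl(\lVert\vec a\rVert\lVert\vec b'\rVert-\lVert\vec a'\rVert\lVert\vec b\rVert\bigr)^2+2\bigl(\lVert\vec a\rVert\lVert\vec b\rVert\lVert\vec a'\rVert\lVert\vec b'\rVert+\langle\vec a,\vec b\rangle\langle\vec a',\vec b'\rangle\bigr)$. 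What your approach buys is robustness and an exact error identity: the paper's cancellation step implicitly divides by $k(\vec x,\vec z)^2-\km(\vec x,\vec z)^2$ and bounds $|k-\km|/(k+\km)$ by one, which presumes the off-diagonal quantities $k(\vec x,\vec z)$ and $\km(\vec x,\vec z)$ are nonnegative with $k\geq\km$ --- true on the diagonal but not guaranteed off it --- whereas your argument needs no sign assumptions at all and moreover exhibits $k-\km=\langle\vec a',\vec b'\rangle$ explicitly. What it costs is the setup overhead of the first paragraph (verifying the tensor-space identification), which the paper's five-line algebraic manipulation avoids.
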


\begin{proof}
To simplify notation, define $\vec k_x\Trans\ce k(\vec x, \vec X)$ and $\vec G \ce \vec S(\vec S\Trans \vec K\vec S)^{-1}\vec S\Trans$.
For \kmcg{} posterior mean and variance evalute to (Appendix \ref{app:dtc}):
\begin{align}
\km(\vec x, \vec z)&=\vec k_x\Trans\vec G\vec k_z,
\\\ksigm(k(\vec x, \vec z), k(\vec x, \vec z))
&=\nicefrac{1}{2}\left(k(\vec x, \vec x)k(\vec z, \vec z) + k(\vec x, \vec z)^2-\vec k_x\Trans\vec G \vec k_x \vec k_z\Trans\vec G \vec k_z-(\vec k_x\Trans\vec G \vec k_z)^2\right)\label{eq:scg_var}
\\&=\nicefrac{1}{2}\left(k(\vec x, \vec x)k(\vec z, \vec z) + k(\vec x, \vec z)^2-\km(\vec x, \vec x)\km(\vec z, \vec z)-\km(\vec x, \vec z)^2\right).
\end{align}
As a variance $\ksigm(k(\vec x, \vec x), k(\vec x, \vec x))$ is always larger than $0$ which implies $k(\vec x, \vec x)\geq \km(\vec x, \vec x)$ for all $\vec x$.
This allows to lower bound $\ksigm(k(\vec x, \vec z), k(\vec x, \vec z))$ by $\frac{1}{2} k(\vec x, \vec z)^2-\frac{1}{2}\km(\vec x, \vec z)^2$ leading to
\begin{align}
\frac{(k(\vec x, \vec z)-\km(\vec x, \vec z))^2}{\ksigm(k(\vec x, \vec z), k(\vec x, \vec z))}
\leq&2\frac{(k(\vec x, \vec z)-\km(\vec x, \vec z)^2}{k(\vec x, \vec z)^2-\km(\vec x, \vec z)^2}
\\&=2\frac{(k(\vec x, \vec z)-\km(\vec x, \vec z)^2}{(k(\vec x, \vec z)-\km(\vec x, \vec z))(k(\vec x, \vec z)+\km(\vec x, \vec z))}
\\&=2\frac{|k(\vec x, \vec z)-\km(\vec x, \vec z)|}{k(\vec x, \vec z)+\km(\vec x, \vec z)}
\\\leq & 2.
\end{align}
\pn{I would not be surprised if this bound was 1.}
\end{proof}

\subsection{Related Work} 
\label{sec:related_work}
\qtp{All of the papers cited here only sound like could be related but they are not.
I feel that pointing out why certain works are NOT related is equally important to a scientific work.
It is an aspect I love about your Entropy Search paper.}

In terms of using conjugate gradients for kernel machines there is related work by \citet{filippone2015ulisse}.
Their algorithm ULISSE is aimed at the estimation of the marginal likelihood $p(\vec\theta\mid \vec y)$ where $\vec \theta$ are hyper-parameters of the kernel $k$.
They use a randomized conjugate gradients to estimate gradients of the log-marginal likelihood (Eq.~\eqref{eq:llh}) which in combination with Stochastic Gradient Langevin Dynamics (SGLD) \citep{Welling2011sgld} allows to sample from $p(\vec\theta\mid \vec y)$.
Our work is complementary to ULISSE.
While running CG the matrix multiplications the inference perspective in Section \ref{sec:model} can be used to build a low-rank approximation of the kernel matrix which can serve as preconditioner for the next SGLD step.


\qtp{KISS-GP is not really related to our work. The purpose of this section is more to show that I had a look for related literature and this is what I found.}
Using the Kronecker product for efficient inference has been explored before for example in the KISS-GP framework \citep{wilson2015kissGP}.
The difference to this work is that \citet{wilson2015kissGP} factorize the kernel matrix $\vec K$ into a Kronecker-product where here it is the covariance matrix of the prior $\ksig(\vec K, \vec K)$ over the kernel that has Kronecker structure (cf.~Eq.~\ref{eq:prior}).
A synergy between their and our approach is hard to imagine.
However, the follow-up work by \cite{Pleiss2018LOVE} uses Lanczos iteration to build a low-rank approximation of a kernel matrix $\vec C$ for the variance prediction.
Presumably, one could use instead \kmcg{}.


\section{Empirical Comparison of Conjugate Gradients and Kernel Machine Conjugate Gradients}
\label{sec:exp_cg}

This section elaborates the conceptual differences between CG and \kmcg{} and then compares both algorithms with numerical experiments.
Consider Equation \eqref{eq:mean}, restated below for convenience. 
\begin{align}
\eqrepeat{eq:mean}
\end{align}
CG computes an approximation to $(\vec K+\noise I)^{-1}\vec y$ and uses the exact $\vec k_*$.
In contrast, \kmcg{} computes an approximation to $k$ and substitutes $\vec k_*$ as well.
That the systematic replacement of the kernel can be of importance has been noted before by \citet[p. 177]{RasmussenWilliams} when comparing \SoR{} and the Nystr\"om method \citep{williams2001nystroem}.
The \SoR{} method approximates $k$ with the kernel in Equation \eqref{eq:k_sor}.
In contrast Nystr\"om uses the exact $\vec k_*$ such that the predictive variance (Eq.~\ref{eq:var}) can become negative.
They further observed that for large $M$, Nystr\"om and \SoR{} have a similar performance, yet for small $M$ Nystr\"om performs poorly.
We made the same observations for CG and \kmcg{} in the following comparison on
common regression problems.

Classical conjugate gradients is used to solve the equations $(\vec K +\noise\vec I)\vec \alpha = \vec y$.
In contrast, since the goal of \kmcg{} is to learn an approximation to the kernel, the algorithm runs conjugate gradients on $\vec K \vec \alpha = \vec y$, \ie{}without noise term.
Both methods were evaluated in terms of the average relative error 
\begin{align}
\label{eq:relerr}
\relerr \ce \frac{1}{n_*}\sum_{k=1}^{n_*}\left|\frac{\bar{f}(\vec x_{*,k})-\hat{f}(\vec x_{*,k})}{\bar{f}(\vec x_{*,k})}\right|,
\end{align}
where $\vec x_{*,k}$ is a test input not part of the training set. 

The text book version of conjugate gradients in Algorithm \ref{algo:cg} is known to be numerically unstable\footnote{see additional results in Appendix \ref{app:experiments_instability}} \cite[p.~635]{golub1996matrix} and there exist different strategies to cope with this problem.
We refer the interested reader to \citet[p.~565f]{golub1996matrix} and the references therein.
To explore the potential of our method, we bypass this implementation issue using the slowest\footnote{Computing the exact solution is actually faster.} yet most stable solution: complete reorthogonalization\footnote{We experimented with selective reorthogonalization \citep{Simon1984Reorthogonalization} but found it in our experiments to be slower than full reorthogonalization.} \cite[p.~564]{golub1996matrix} and the explicit projection-method formulation \cite[p.~135 Eq.~(5.7)]{saad2003iterative} to compute $\vec \alpha$.
Therefore the following comparison will be conceptually, \ie{}over the number of conjugate gradient steps.
For completeness, Appendix \ref{app:experiments_realtime} contains results how \kmcg{} performs in wall-clock time.
Often the \baseline{} methods converge faster since block-matrix multiplication is faster than looped matrix-vector multiplication.

Baseline methods are the \pname{Fully Independent Training Conditional} (\FITC{}) approximation \citep{quinonero2005unifying}  and the \pname{Variational Free Energy} (\VFE{}) method \citep{Titsias.2009}, with inducing inputs randomly selected from the dataset as recommended by \cite{chalupka2013comparison}.
The \baseline{} runs were repeated \repetitions{} times and besides the average, each figure shows also the progressive minimum and maximum over all runs, to take into account for more elaborate inducing input selection schemes.

In all our experiments, we used two popular stationary kernel functions: automatic relevance determination (ARD) Squared Exponential and ARD Mat\'ern $\nicefrac{5}{2}$ \citep[p. 83f, p. 106]{RasmussenWilliams}, 
\begin{align}
\label{eq:sq_exp}
k_{SE}(d(\vec x, \vec z;\vec \Lambda))&= \theta_f \exp{\left(-\frac{1}{2}d^2\right)}
\\\label{eq:matern}
k_{52}(d(\vec x,\vec z; \vec \Lambda))&=\theta_f\left(1+\sqrt{5}d+\frac{5}{3}d^2\right)\exp{\left(-\sqrt{5}d\right)}
\end{align}
where $d=d(\vec x, \vec z; \vec \Lambda)=||\vec x-\vec z||_{{\vec \Lambda}}$ and $\vec\Lambda$ is a diagonal matrix.
All experiments were executed with Matlab R2019a on an Intel i7 CPU with 32 Gigabytes of RAM running Ubuntu 18.04. 

\subsection{Common Regression Datasets}
\label{sec:exp_common_benchmark}
\pn{
One of the reasons why I abandoned the hyper-parameter tuning experiments and instead reported these experiments was the nasty copying in Matlab.
However, I have the mex implementations still available and should report the timings of these algorithms as well.
}
The datasets chosen 
are small such that computation of the exact GP is still feasible.
Origin and purpose the datasets can be found in \cref{sec:datasets}.
Each dataset has been shuffled and split into two sets, using one for training and the other for testing. 
For each dataset, we optimized the kernel parameters running Carl Rasmussen's \texttt{minimize} function\footnote{This method is part of the GPML toolbox \citep{rasmussen2010gpml}, see \url{http://www.gaussianprocess.org/gpml/code/matlab/doc}.} for 100 optimization-steps,
where initially all kernel hyper-parameters are set to 1.

\pn{If I would do a different split for each seed I would have to train hyper-parameters again.
Doing only one split is fairly representative.
}

\cref{fig:cg_hybrid_error_se} shows how the average relative error develops for the described setup\footnote{Since the Mat\'ern kernel experiments look very similar, these results are in \cref{app:experiments_matern}}.
The number of inducing inputs $M$ was set to $M=\sqrt{NP}$ such that $\O$-notation costs are equivalent to \kmcg{}: Since \kmcg{} uses multiplications with $\vec K$ for observations, the costs per CG-step are $\O(N^2P)$.
The upper x-axis displays the number of conjugate gradients steps, the lower x-axis, the number of inducing inputs.
During early iterations the performance of CG is not as reliable as \kmcg{} and the latter also improves more consistently. 
In comparison to the \baselines{}, \kmcg{} often provides a worse approximation to start with but exhibits a faster convergence rate.

In contrast to plain conjugate gradients, \kmcg{} naturally provides estimates for variance (Eq.~\ref{eq:var}) and evidence (Eq.~\ref{eq:llh}). 
Define the average relative errors $\relerrVar{}$ and $\relerrNLZ{}$ analogously to Equation \eqref{eq:relerr}, respectively.
Figure \ref{fig:cg_hybrid_error_se_llh} and \ref{fig:cg_hybrid_error_se_var} show the average relative error of these estimates in comparison to the \baselines{}.
For all datasets one can observe that the approximation quality of \kmcg{} for the evidence (Eq.~\eqref{eq:llh}) is improving at first and then worsening.
\kmcg{} is better at approximating the quadratic form than the determinant.
Therefore, the approximation often `overshoots'.

The \baselines{} clearly outperform \kmcg{} in these experiments.
A possible explanation is that the \baselines{} provide a better overall-approximation to the kernel matrix:
After $P$ CG-steps, the \kmcg{} kernel is of rank $P$ whereas using $M$ inducing inputs, the \VFE{} kernel is of rank $M$ (so is the \FITC{} kernel, putting the diagonal correction aside).
Since $M=\sqrt{NP}$, the \baselines{} can afford more inducing inputs $M$ than \kmcg{} can afford CG-steps $P$.
Overall, when it comes to real-time, the \baselines{} are preferable over \kmcg{}.
The picture changes when matrix-multiplication is less expensive than $\O(N^2)$ which is investigated in the next section.

%
\begin{figure}\centering
\resultFigureTable{cg_hybrid_comparison}{covSEard}{relative_error}{\legendFull}
\caption{progression of the relative error $\relerr$ as a function of the number of iterations of CG and \kmcg{} for different datasets using the squared-exponential kernel (Eq.~\ref{eq:sq_exp}).
\commonCaption{}
}
\label{fig:cg_hybrid_error_se}
\end{figure}

\begin{figure}\centering
\resultFigureTable{cg_hybrid_comparison}{covSEard}{relative_error_of_var}{\legendWithoutCG}
\caption{progression of the relative error of the variance $\relerrVar$ as a function of the number of iterations of \kmcg{} and \baseline{} for different datasets using the squared-exponential kernel (Eq.~\ref{eq:sq_exp}).
\commonCaption{}
}
\label{fig:cg_hybrid_error_se_var}
\end{figure}

\begin{figure}\centering
\resultFigureTable{cg_hybrid_comparison}{covSEard}{relative_nlZ_error}{\legendWithoutCG}
\caption{
progression of the relative error of the evidence $\relerrNLZ$ as a function of the number of iterations of \baseline{} and \kmcg{} for different datasets using the squared-exponential kernel (Eq.~\ref{eq:sq_exp}).
\commonCaption{}
The small spikes in the plots where \kmcg{} appears to be close to the solution correspond to changes of the estimate from too small to too large.
}
\label{fig:cg_hybrid_error_se_llh}
\end{figure}

\subsection{Grid-structured Datasets}
\label{sec:exp_cg_grid}
\pn{Measuring the time does not make much sense here: DTC can make use of the fast MVM as well. (I should include it for the sake of completeness.)
However, DTC performs far worse. 
}
In the previous section 
the \baselines{} are the preferable estimators over \kmcg{}.  
This changes when matrix-multiplication costs less than $\O(N^2)$.
For example when the kernel is a product kernel (such as squared exponential) and the dataset has grid-structure, the costs for matrix-multiplication are almost linearly in the number of data-points \citep{wilson2014multidimensional}
such that the number of CG-steps \kmcg{} can take, matches the number of \baseline{} inducing inputs.

\subsubsection{Artificial Datasets}
\pn{
It is instructional to look at 10x10 first as the picture generalizes for the larger datasets.
Each algorithm can execute approximately 500 steps/inducing inputs before running out of memory.
Between 100 and 500 steps there is not much happening and we therefore show here the first 100 steps.
We show here the first 100 steps and in Appendix ... the same plots over time.
The picture is the same as for the 10x10 dataset.
\kmcg{} is slower yet better at all times.
}
\pn{
Options to deal with the bad results:
}
\pn{{
\begin{itemize}
\item report as is

MatlabHook crashes too early and FOM becomes too slow

Disadvantage: the reviewers will be probably not accept this.
In a scenario that is designed for CG it has to perform!

\item improve KMCG

How? Could try to ignore search directions from PCG that kill the Cholesky.
However, this is probably too expensive and will not work since PCG probably also just reports always the same directions as TextbookCG does.

\item precondition

Use KMCG as a way to improve a FITC/DTC solution.
It was one of the original motivations anyway.
However, this would be opening a whole can of worms.

\item partial reorthogonalization

Did not work so well.
Needs to perform an eigenvalue decomposition (of a triadiagonal matrix) in each step.

\item modify the problem

Could remove the noise.
The number of Eigenvalues (of the full) kernel matrix should then be very clustered.
The missing entries might still screw this up.

Without noise, the kernel matrix is Kronecker Toeplitz which allows to make it huge!
Could that be the salvation?

\item Things would work maybe with an adaptive Cholesky decomposition that ignores rows that make things numerically unstable.
\end{itemize}
Probable course of action:

increase dataset-size. Explain to the reviewers that original datasets where to small for timing experiments.
Hopefully on the larger datasets, the mean of KMCG converges faster than DTC. The rest does not matter.
}}
The datasets considered in the following are artificial multi-dimensional grids.\footnote{Computing the exact solution is feasible exploiting the Kronecker structure of the kernel matrix which we use to evaluate the quality of the approximation methods. However, we may imagine datapoints missing, s.t.~matrix-vector multiplication is fast but computing the exact solution is not.}
For the training set, along each axis $G$ points are equally spaced in $[-\nicefrac{G}{4}, \nicefrac{G}{4}]$ distorted by Gaussian noise $\N(0, 10^{-3})$.
One-hundred test inputs are uniformly distributed over the $[-\nicefrac{G}{4}, \nicefrac{G}{4}]$ cube.
Targets are drawn from a Gaussian process with squared exponential kernel (length scales and amplitude equal to 1).
The number of inducing inputs had to be capped at 500 due to memory limitations.

Figure \ref{fig:cg_hybrid_grid} shows how the approximation error to mean, variance and likelihood term evolves, zoomed in on the first 100 steps.
For reference, \cref{fig:cg_hybrid_grid} also shows a $10\times10$ dataset to give an idea how each method would evolve when investing more computational power would be feasible.
In the appendix, Figure \ref{fig:cg_hybrid_grid_time} shows the same comparison over time for the whole 500 steps, stopping \kmcg{} when it becomes slower than the baselines.
\pn{
Just describe what you have.
Report here that KMCG reaches the training time limit due to that strange Matlab thing (maybe it's MKL...).
Show the same time plots in the Appendix.
Argue that though KMCG is slower, it's still better.
Also, \baseline{} can not run longer due to memory constraints whereas KMCG can!
}

On these datasets \kmcg{} dominates the \baseline{} methods.
After already one hundred CG-steps \kmcg{} provides a useful approximation to the posterior mean whereas the \baselines{} hardly show any progress.
For the variance, the same computational effort is not enough. 
Though the \baselines{} find better solutions, all methods essentially fail to arrive at a satisfactory solution of a relative error below one.
The issue is that all methods overestimate the posterior variance by two orders of magnitude. 
The picture is similar for the evidence, albeit the approximations are closer to the truth and \kmcg{} performs slightly better on average.
\pn{
As Matlab does not support views but always triggers a copy when accessing subarrays, the full reorthogonalization becomes expensive.
However, even these few steps are sufficient for \kmcg{} to outperform the baselines.
It is puzzling that for the three dimensional dataset, the baseline methods need so much more time than on the other.
We tracked the problem down to the $\O(NM^2)$ matrix-multiplication $\vec \Phi \vec \Phi\Trans$ (cf.~Eqs.~\eqref{eq:mean_2} to \eqref{eq:llh_2}).
It seems, that the kernel matrix for this particular dataset has properties that trigger Matlab to treat the multiplication differently such that it takes longer to compute than for the other datasets.
It appears, that the smaller entries in matrices are, the longer it takes to multiply with them.
However, it all methods are affected the same way and the issue does not concern the validity of the results.
}
\pn{That the approximation to the evidence is better is explained partially due to the good mean approximation.
The $\vec y\Trans \vec K^{-1}\vec y$ term is well approximated.
I guess that \baseline{} gives the better approximation to the determinant.
}

\begin{figure}
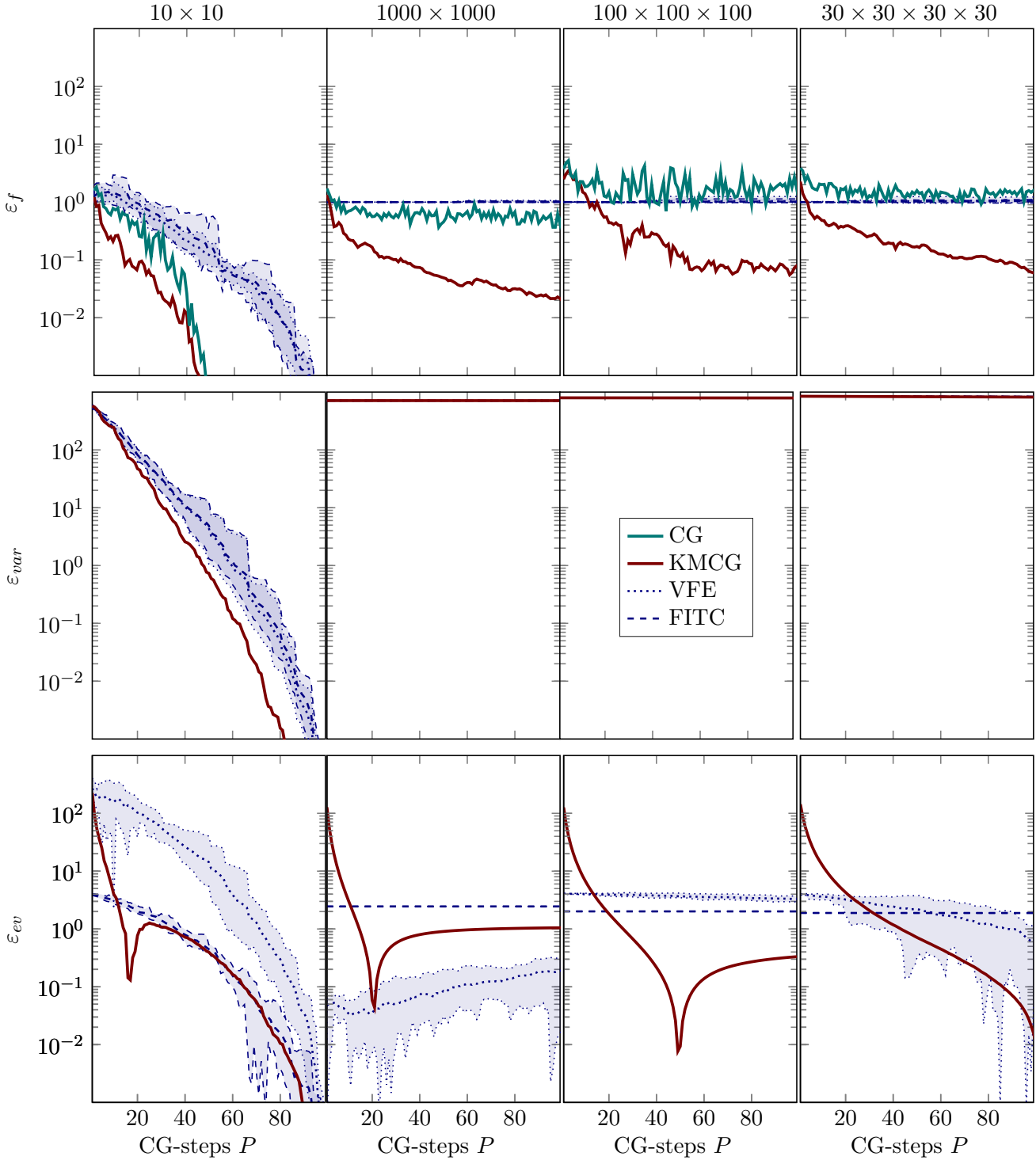

\centering
\gridResultFigureTable{cg_hybrid_grid}
\caption{comparison of \baseline{} and \kmcg{} on grid-structured datasets using the squared exponential kernel (Eq.~\ref{eq:sq_exp}).
The shaded area visualizes minimum and maximum over all \baseline{} runs.}
\label{fig:cg_hybrid_grid}
\end{figure}

\subsubsection{Natural Sound Modeling}
For a real-world example of a grid-structured dataset, we repeat the Natural Sound Modeling experiment considered by \cite{RichTurnerPhDThesis, wilson2015kissGP} and \cite{dong2017scalable}.
Given the intensity of a sound signal recorded over time, the objective is to recover the signal in missing regions.
All inputs (\ie{}including missing) are equidistant and hence the kernel matrix (over all inputs) is Toeplitz for stationary kernel.
The kernel matrix over the given inputs is not Toeplitz, which forbids to use this structure for exact inference.
Nevertheless matrix-vector-multiplication can be performed in linear time.
We use the squared-exponential kernel with the hyper-parameters used by \cite{dong2017scalable}.
Since the exact posterior is infeasible to compute, we report only the standardized mean squared-error:
\begin{align}
\smse\ce \frac{1}{\Var[\vec y]}\sum_{j=1}^{N_*}(\vec y_{*,j}-\hat{f}(\vec x_{*,j}))^2.
\end{align}
To conform with the original experiment, we added for each baseline method a run the inducing inputs where chosen to be on a regular grid.
The result of this run correspond to the minimum.
Figure \ref{fig:sound} confirms the observations from the previous section that \kmcg{} arrives at satisfactory solutions faster than \baseline{}, if matrix-vector multiplication is not an issue.

\begin{figure}
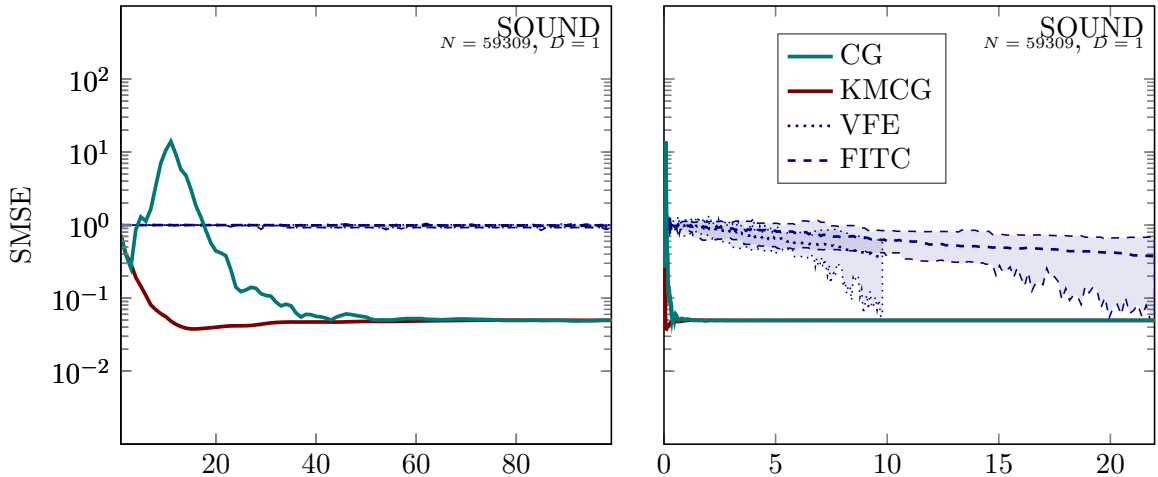

\setlength{\figheight}{0.27\textheight}
\setlength{\figwidth}{0.45\textwidth}
\hbox{ \hspace{-\tableoffset}
\begin{tabular}{cc}
\adjustbox{valign=t}{\input{tikz/cg_hybrid_comparison_SOUND_folds_2_folds_visible_1_covSEard__title_smse__seed_12345.tikz}}&
\adjustbox{valign=t}{\input{tikz/cg_hybrid_comparison_time_SOUND_folds_2_folds_visible_1_covSEard__title_smse__seed_12345.tikz}}
\raisebox{0.3\figheight}{
\makebox[0pt]{
\hspace{-1.25\figwidth}
\legendFull
}}
\end{tabular}
}
\caption{comparison of \kmcg{} and CG on the SOUND dataset.}
\label{fig:sound}
\end{figure}

\section{Conclusion}
\label{sec:conclusions}
We have presented a new approximate inference method for kernel machines that showed how linear solvers can be used in combination with low-rank kernel approximations.
The approach is based on a probabilistic numerics viewpoint: the kernel $k$ is treated as a latent quantity and a linear solver is used for collecting observations of $k$.
By design, the resulting approximate kernel is of low rank and is plugged into the nonparametric least-squares problem.
The approach is not restricted to least-squares problems but applicable in any scenario where the bottleneck is the inversion of a large kernel matrix, as \eg{}GP classification.

Our \emph{kernel machine conjugate gradients} (\kmcg{}), consistently outperforms plain conjugate gradients in numerical experiments. 
This does not change the fact that standard dense kernel least-squares problems are often more efficiently solved by inducing point methods. 
However, as demonstrated in \cref{sec:exp_cg_grid}, in the settings which allow fast multiplication with the kernel matrix, the new algorithm can improve upon the state of the art.

\pn{
framework can be used to 
\begin{itemize}
\item iteratively improve upon approximations
\item combine different approximation
\item yet nothing really worked out
\item iterative improvement of DTC with CG is no better than using more inducing inputs
\item combination of CG and DTC is no better than using more inducing inputs
\end{itemize}
}

\pn{
\begin{itemize}
\item How would I design experiments if I had more time?
\item Why would I need to run CG to convergence? I can stop before it becomes too slow and see if it is a good approximation. (Sometimes it probably is. But I guess I can not say that in general.)
\item What is a good stopping criterion?
\item The reason I do not really want to present the experiments is that they are out-dated. I think the noise should be incorporated. Still I could report the results where I used constant times norm and machine precision.
\end{itemize}
}


\acks{The authors thank Alexandra Gessner, Hans Kersting, Agustinus Kristiadi, Frederik Kunstner, Filip de Roos and Matthias Werner for helpful discussions and proof-reading.}



\appendix
%

%
%
%
%
%

\section{Properties of the Symmetric Kronecker Product}
\label{app:kronecker}
The Kronecker product and its symmetric version have been studied, among others, by \textcite{loan2000kronecker} and \textcite{magnus1980symmetric}.
The definitions used in this work slightly differ from the authors above and instead follow \textcite{Hennig_LinearSolvers2015}.
The Kronecker product for two arbitrary matrices $\vec A\in\Re^{N_1\times N_2}$, $\vec B\in \Re^{N_3\times N_4}$ is defined as 
$$[\vec A\kr\vec B]_{ij,kl}\ce\vec A_{ik}\vec B_{jl}$$
where $i\in \{1, ..., N_1\}$, $j\in \{1, ..., N_3\}$, $k\in\{1, ..., N_2\}$ and $l\in\{1, ..., N_4\}$, and $ij$ is not a product but a double-index. 
The following identities about Kronecker products and the vectorization operator can be found in \textcite{hennig13:_quasi_newton_method}, and are restated here for the convenience of the reader:
\begin{align}
\tag{K1} (\mat A \kr \mat B) \vecm{\mat C} = & \vecm{\mat A \mat C \mat B\Trans} \label{eq:K1}
\\\tag{K2} (\mat A \kr \mat B) (\mat C \kr \mat D) = & (\mat A\mat C) \kr (\mat B\mat D) \label{eq:K2}
\\\tag{K3} (\mat A \kr \mat B)^{-1} = & \mat A^{-1} \kr \mat B^{-1} \label{eq:K3}
\\\tag{K4} (\mat A \kr \mat B)^\top = & \mat A\Trans \kr \mat B\Trans \label{eq:K4}
\\\tag{K5} (\mat A + \mat B) \kr \mat C = & \mat A\kr \mat C + \mat B\kr \mat C \label{eq:K5}
\end{align}
where\footnote{The conditions can be more general but for ease of exposition, we assume all matrices are square and of equal size.} $\mat A, \mat B, \mat C, \mat D\in \Re^{N\times N}$, and $\mat A$ and $\mat B$ are assumed to be invertible.

An appealing property of Kronecker-structured matrices is their interaction with vectorized matrices.
For a square matrix $\mat A =\begin{bmatrix} \vec{a}_1 & \ldots & \vec{a}_N\end{bmatrix}\Trans\in \Re^{N\times N}$,
the \emph{vectorization operator} $\vecm{~} : \Re^{N\times N} \to \Re^{N^2}$ stacks the rows\footnote{Stacking the columns is equivalently possible and common. It is associated with a permutation in the definition of the Kronecker product, but the resulting inferences are equivalent.} 
of $\mat A$ into one vector:
\begin{equation}
\vecm{\mat A} = \begin{bmatrix} \vec a_1 \\ \vdots \\ \vec a_N \end{bmatrix},\quad\text{with}\quad \left[\vecm{\mat A}\right]_{(ij)} = [\mat A]_{ij} 
\end{equation}
and $\tomat{~}$ transforms an $N^2$ vector into an $N\times N$ matrix, s.t.~$\tomat{\vecm{\mat A}}=\mat A$.
A vector product of vectorized matrices corresponds to the trace of their product:
\begin{align}
\tag{V1} \label{eq:vec_trace}
\vecm{\vec A}\Trans \vecm{\vec B}= &\tr{\vec A \vec B\Trans}.
\end{align}
\begin{proof}
	\begin{align}
	\tr{\vec A\vec B\Trans}&=\sum_{i}[\vec A\vec B\Trans]_{ii}
	\\&=\sum_{i, j}\vec A_{ij}\vec B\Trans_{ji}
	\\&=\sum_{i, j}\vec A_{ij}\vec B_{ij}
	\\&=\vecm{\vec A}\Trans \vecm{\vec B}
	\end{align}
\end{proof}

The \emph{symmetric} Kronecker product for two square matrices $\mat A,\mat B\in \Re^{N\times N}$ of equal size is defined as
$$\mat A\sk\mat B\ce \mat \Gamma_N (\mat A\kr \mat B)\mat \Gamma_N$$
where $[\mat \Gamma_N]_{ij,kl}\ce\nicefrac{1}{2}\delta_{ik}\delta_{jl}+\nicefrac{1}{2}\delta_{il}\delta_{jk}$ satisfies $$\mat \Gamma\vecm{\mat C}=\nicefrac{1}{2}\vecm{\mat C}+\nicefrac{1}{2}\vecm{\mat C\Trans}$$ for all square-matrices $\mat C\in\Re^{N\times N}$.
Equivalently, one can write 
$$(\vec A \sk \vec B)_{ij,kl}=\frac{1}{4}\left(\vec A_{ik}\vec B_{jl}+\vec A_{il}\vec B_{jk}+\vec B_{ik}\vec A_{jl}+\vec B_{il}\vec A_{jk}\right).$$
The symmetric Kronecker product inherits some of the desirable properties of the Kronecker product.
Some of the following identities can, again, be found in \textcite{Hennig_LinearSolvers2015}, some are due to \textcite{loan2000kronecker} and \textcite{magnus1980symmetric} and some are novel.
The proof gives exact credit.
\begin{proposition}
	\label{prop:sym_kron}
	Let $\vec V, \vec W\in\Re^{N\times N}$ be square matrices and $\vec A\Trans, \vec B \in \Re^{N\times M}$ be rectangular.
	\begin{align}
	\label{eq:gamma_one}\tag{SK1}
	\vec W\sk \vec W&=\vec\Gamma_N(\vec W\kr\vec W) \\ 
	\label{eq:gamma_kron}\tag{SK2}
	\vec \Gamma_{M} (\vec A\kr \vec A)&=(\vec A\kr\vec A)\vec \Gamma_{N} \\ 
	\label{eq:sk_sym}\tag{SK3}
	\vec V \sk \vec W&=\vec W \sk \vec V
	\\\label{eq:sk_distributive}\tag{SK4}
	(\vec A \kr \vec A)(\vec W \sk \vec W)(\vec B \kr \vec B)&=(\vec A\vec W\vec B)\sk(\vec A\vec W\vec B)
	\\\label{eq:sk_factorization}\tag{SK5}
	\vec W\sk \vec W-\vec V\sk \vec V&=(\vec W+\vec V)\sk(\vec W-\vec V)
	\\\label{eq:sk_inverse}\tag{SK6}
	(\vec W \sk \vec W)^{-1} &= (\vec W^{-1}\sk \vec W^{-1}).
	\end{align}
	The interpretation of \cref{eq:sk_inverse} requires some care: symmetric Kronecker product matrices are rank deficient.
	\cref{eq:sk_inverse} is to be read in the sense that for symmetric $\mat Y\in \Re^{N\times N}$, \ie{}$\mat Y=\mat Y\Trans$, $\mat X\ce \tomat{(\inv{\mat W}\sk\inv{\mat W})\vecm{\mat Y}}$ satisfies $\vecm{\mat Y}=(\mat W\sk \mat W)\vecm{\mat X}$ and $\mat X$ is the unique symmetric solution.
\end{proposition}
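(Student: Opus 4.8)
The plan is to reduce every one of \eqref{eq:gamma_one}--\eqref{eq:sk_inverse} to a statement about linear maps on $\Re^{N^2}$ (or $\Re^{M^2}$) and to verify it by evaluating on an arbitrary vectorized matrix $\vecm{\vec C}$, using the defining action $\vec\Gamma_N\vecm{\vec C}=\tfrac12\vecm{\vec C}+\tfrac12\vecm{\vec C\Trans}$ together with the Kronecker identities \eqref{eq:K1}--\eqref{eq:K5}; since the vectors $\vecm{\vec C}$ span $\Re^{N^2}$, agreement on all of them forces equality of the maps. Two elementary facts are used throughout: $\vec\Gamma_N$ is idempotent, $\vec\Gamma_N^2=\vec\Gamma_N$, which is immediate from applying $\vec\Gamma_N\vecm{\vec C}=\tfrac12\vecm{\vec C}+\tfrac12\vecm{\vec C\Trans}$ twice; and $\sk$ is bilinear, which follows from the bilinearity of $\kr$ (\eqref{eq:K5} and its mirror image $\vec C\kr(\vec A+\vec B)=\vec C\kr\vec A+\vec C\kr\vec B$) and the linearity of left/right multiplication by $\vec\Gamma$.

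The workhorse is \eqref{eq:gamma_kron}. To prove $\vec\Gamma_M(\vec A\kr\vec A)=(\vec A\kr\vec A)\vec\Gamma_N$ I would apply both sides to $\vecm{\vec C}$ with $\vec C\in\Re^{N\times N}$: by \eqref{eq:K1} the left side is $\vec\Gamma_M\vecm{\vec A\vec C\vec A\Trans}=\tfrac12\vecm{\vec A\vec C\vec A\Trans}+\tfrac12\vecm{\vec A\vec C\Trans\vec A\Trans}$, while the right side is $(\vec A\kr\vec A)\bigl(\tfrac12\vecm{\vec C}+\tfrac12\vecm{\vec C\Trans}\bigr)$, which equals the same expression once more by \eqref{eq:K1}. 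Then \eqref{eq:gamma_one} is immediate: $\vec W\sk\vec W=\vec\Gamma_N(\vec W\kr\vec W)\vec\Gamma_N=\vec\Gamma_N\vec\Gamma_N(\vec W\kr\vec W)=\vec\Gamma_N(\vec W\kr\vec W)$, using \eqref{eq:gamma_kron} with $\vec A=\vec W$ to commute $\vec\Gamma_N$ past $\vec W\kr\vec W$ and then idempotency. For \eqref{eq:sk_sym} I would expand $(\vec V\sk\vec W)\vecm{\vec C}$ directly: pushing the two $\vec\Gamma_N$ factors through via \eqref{eq:K1} and the transpose rule produces the four-term sum $\tfrac14\bigl(\vecm{\vec V\vec C\vec W\Trans}+\vecm{\vec W\vec C\Trans\vec V\Trans}+\vecm{\vec V\vec C\Trans\vec W\Trans}+\vecm{\vec W\vec C\vec V\Trans}\bigr)$, which is visibly invariant under swapping $\vec V\leftrightarrow\vec W$, so $\vec V\sk\vec W=\vec W\sk\vec V$.

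The remaining three then follow by assembling what is already proved. For \eqref{eq:sk_distributive}: by \eqref{eq:gamma_one}, $(\vec A\kr\vec A)(\vec W\sk\vec W)(\vec B\kr\vec B)=(\vec A\kr\vec A)\vec\Gamma_N(\vec W\kr\vec W)(\vec B\kr\vec B)$; move $(\vec A\kr\vec A)\vec\Gamma_N=\vec\Gamma_M(\vec A\kr\vec A)$ by \eqref{eq:gamma_kron}, collapse the three Kronecker products with \eqref{eq:K2} into $\vec\Gamma_M\bigl((\vec A\vec W\vec B)\kr(\vec A\vec W\vec B)\bigr)$, and recognize this as $(\vec A\vec W\vec B)\sk(\vec A\vec W\vec B)$ by \eqref{eq:gamma_one}. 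For \eqref{eq:sk_factorization}: by bilinearity, $(\vec W+\vec V)\sk(\vec W-\vec V)=\vec W\sk\vec W-\vec W\sk\vec V+\vec V\sk\vec W-\vec V\sk\vec V$, and the two cross terms cancel by \eqref{eq:sk_sym}. Finally, for \eqref{eq:sk_inverse}, the key observation is that for \emph{symmetric} $\vec X$ one has $(\vec W\sk\vec W)\vecm{\vec X}=\vec\Gamma_N\vecm{\vec W\vec X\vec W\Trans}=\vecm{\vec W\vec X\vec W\Trans}$, the last step because $\vec W\vec X\vec W\Trans$ is symmetric, and likewise with $\vec W$ replaced by $\inv{\vec W}$. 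Hence for symmetric $\vec Y$, setting $\vec X=\tomat{(\inv{\vec W}\sk\inv{\vec W})\vecm{\vec Y}}=\inv{\vec W}\vec Y\vec W\iTrans$, which is itself symmetric, we get $(\vec W\sk\vec W)\vecm{\vec X}=\vecm{\vec W\inv{\vec W}\vec Y\vec W\iTrans\vec W\Trans}=\vecm{\vec Y}$; and if $\vec X'$ is any symmetric solution then $\vec W\vec X'\vec W\Trans=\vec Y$ forces $\vec X'=\inv{\vec W}\vec Y\vec W\iTrans=\vec X$, giving uniqueness.

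I expect the main obstacle to be expository rather than computational: making \eqref{eq:sk_inverse} precise, since $\vec W\sk\vec W$ is genuinely rank deficient (it annihilates the skew-symmetric subspace), so the "inverse" must be phrased as the bijection it induces between symmetric matrices, with both existence and uniqueness checked inside that subspace — which is exactly what the reduction $(\vec W\sk\vec W)\vecm{\vec X}=\vecm{\vec W\vec X\vec W\Trans}$ on symmetric $\vec X$ accomplishes. The only spot where index bookkeeping could become fiddly is the four-term expansion behind \eqref{eq:sk_sym}, but careful use of \eqref{eq:K1} and $(\vec P\vec Q)\Trans=\vec Q\Trans\vec P\Trans$ keeps it routine.
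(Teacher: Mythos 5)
Your proposal is correct and follows the same overall strategy as the paper's proof: verify each identity as an equality of linear maps by evaluating on an arbitrary $\vecm{\vec C}$, using the action $\vec\Gamma\vecm{\vec C}=\tfrac12\vecm{\vec C}+\tfrac12\vecm{\vec C\Trans}$ together with the Kronecker identities; your arguments for \eqref{eq:sk_sym}, \eqref{eq:sk_distributive} and \eqref{eq:sk_factorization} match the paper's almost line for line. The one substantive difference is at the start: the paper does not prove \eqref{eq:gamma_one} and \eqref{eq:gamma_kron} itself but cites Theorems 13(a) and 9(a) of Magnus and Neudecker, whereas you prove \eqref{eq:gamma_kron} directly from \eqref{eq:K1} and then obtain \eqref{eq:gamma_one} from \eqref{eq:gamma_kron} together with the idempotency of $\vec\Gamma$ --- a clean, self-contained derivation the paper delegates to the literature. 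Your treatment of \eqref{eq:sk_inverse} is also marginally more complete: the paper only shows that any symmetric solution must equal $\tomat{(\inv{\vec W}\sk\inv{\vec W})\vecm{\vec Y}}$, while your reduction $(\vec W\sk\vec W)\vecm{\vec X}=\vecm{\vec W\vec X\vec W\Trans}$ for symmetric $\vec X$ delivers existence (the candidate $\inv{\vec W}\vec Y\vec W\iTrans$ is symmetric and solves the system) as well as uniqueness. I see no gaps.
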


\begin{proof}
	The proofs for \cref{eq:gamma_one,eq:gamma_kron} can be found in \cite{magnus1999matrix}[p.~46-50].
	In the notation of \cite{magnus1999matrix} $\vec \Gamma=N_n=D_nD_n^+$ and $K=2\vec \Gamma-2\vec I$.
	\cref{eq:gamma_one} is Theorem 13 (a).
	\cref{eq:gamma_kron} follows from Theorem 9 (a).
	
	To show $(\vec W\sk \vec V)=(\vec V \sk \vec W)$, let $\vec X\in \Re^{N\times N}$ be an arbitrary matrix.
	\begin{align}
	(\vec V \sk \vec W)\vecm{\vec X}&=\vec \Gamma (\vec V\kr\vec W)\vec \Gamma \vecm{\vec X}
	\\&=\frac{1}{2}\vec \Gamma (\vec V\kr \vec W)\vecm{\vec X+\vec X\Trans}
	\\&=\frac{1}{2}\vec \Gamma\vecm{\vec V(\vec X+\vec X\Trans)\vec W\Trans}
	\\&=\frac{1}{4}\vecm{\vec V(\vec X+ \vec X\Trans)\vec W\Trans + \vec W(\vec X+\vec X\Trans)\vec V\Trans}
	\\&=\frac{1}{2}\vec \Gamma \vecm{\vec W(\vec X+\vec X\Trans)\vec V\Trans}
	\\&=\frac{1}{2}\vec \Gamma (\vec W\kr \vec V)\vecm{\vec X+\vec X\Trans}
	\\&=\vec \Gamma (\vec W\kr\vec V)\vec \Gamma \vecm{\vec X}
	\\&=(\vec W \sk \vec V)\vecm{\vec X}
	\end{align}
	
	To show \cref{eq:sk_distributive}, use \eqref{eq:gamma_kron}.
	\begin{align}
	(\vec A \kr \vec A)(\vec W \sk \vec W)(\vec B \kr \vec B)&=(\vec A \kr \vec A)\vec{\Gamma}(\vec W \kr \vec W)\vec \Gamma(\vec B \kr \vec B)
	\\&=\vec \Gamma (\vec A \kr \vec A)(\vec W \kr \vec W)(\vec B \kr \vec B)\vec \Gamma
	\\&=\vec \Gamma (\vec A\vec W\vec B\kr \vec A\vec W\vec B)\vec \Gamma
	\\&=\vec A\vec W\vec B \sk \vec A\vec W\vec B
	\end{align}
	
	The proof of \cref{eq:sk_factorization} uses \eqref{eq:sk_sym}.
	\begin{align}
	(\vec A + \vec B)\sk (\vec A - \vec B)&=\vec \Gamma (\vec A + \vec B)\kr (\vec A - \vec B) \vec \Gamma
	\\&=\vec \Gamma(\vec A\kr \vec A - \vec A\kr \vec B+\vec B\kr\vec A - \vec B\kr \vec B)\vec \Gamma
	\\&=\vec A\sk \vec A - \vec A\sk \vec B+\vec B\sk\vec A - \vec B\sk \vec B
	\\&=\vec A\sk \vec A - \vec B\sk \vec A+\vec B\sk\vec A - \vec B\sk \vec B
	\\&=\vec A\sk \vec A - \vec B\sk \vec B
	\end{align}
	
	It remains to prove \cref{eq:sk_inverse}. 
	Assume $\mat Z$ satisfies $(\mat W\sk\mat W)\vecm{\mat Z}=\vecm{\mat Y}$ and $\mat Z = \mat Z\Trans$.
	Then, 
	\begin{align}
	\vecm{\mat Y}&=(\mat W\sk\mat W)\vecm{\mat Z}
	\\&=(\mat W \kr \mat W)\mat \Gamma_N \vecm{\mat Z}\eqcomment{using \cref{eq:gamma_one} and \cref{eq:gamma_kron}}
	\\&=(\mat W \kr \mat W) \vecm{\mat Z} \eqcomment{since $\mat Z=\mat Z\Trans$}
	\end{align}
	and hence, $\mat Z = \inv{(\mat W \kr \mat W)}\vecm{\mat Y}$.
	Using \cref{eq:K3} and again \cref{eq:gamma_one}, 
	$$\mat Z = {(\inv{\mat W} \sk \inv{\mat W})}\vecm{\mat Y}$$
	which is the definition of $\mat X$.
\end{proof}

\subsection{Sampling from a Gaussian with Symmetric Kronecker Covariance matrix}
\label{app:sampling}
To sample matrices from the \kmcg{} posterior (Eq.~\ref{eq:km}) the following proposition will be useful.
\begin{proposition}
Let $\vec W,\vec W_M\in \Re^{N\times N}$ be symmetric and positive semi-definite matrices s.t.~$\vec W-\vec W_M$ is symmetric positive-semidefinite as well.
Further let $\vecm{\vec Y}\sim \N(\vec 0, \vec W\sk \vec W-\vec W_M\sk \vec W_M)$, denote with $\vec L_+$ the Cholesky of $\vec W+\vec W_M$, with $\vec L_-$ the Cholesky of $\vec W-\vec W_M$ and let $\vecm{\vec X}\sim \N(\vec 0, \vec I_{N^2})$, then $\vec \Gamma (\vec L_1 \kr  \vec L_2)\vecm{\vec X}$ and $\vecm{\vec Y}$ have the same distribution.

Remark: This shows that $\vec Y$ is symmetric due to the $\vec \Gamma$-operator.
\end{proposition}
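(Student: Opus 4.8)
The plan is to read the claim as the standard fact that a linear image of a standard Gaussian is Gaussian with the ``sandwiched'' covariance, and then to collapse that covariance using the symmetric-Kronecker algebra of \cref{prop:sym_kron}. Concretely, set $\vec M\ce\vec\Gamma_N(\vec L_1\kr\vec L_2)$, where (matching the statement) $\vec L_1$ is the Cholesky factor of $\vec W+\vec W_M$ and $\vec L_2$ that of $\vec W-\vec W_M$, so $\vec L_1\vec L_1\Trans=\vec W+\vec W_M$ and $\vec L_2\vec L_2\Trans=\vec W-\vec W_M$; these factors exist because both matrices are positive semi-definite under the hypotheses, taking the lower-triangular factor with nonnegative diagonal. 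Since $\vecm{\vec X}\sim\N(\vec 0,\vec I_{N^2})$ gives $\vec M\vecm{\vec X}\sim\N(\vec 0,\vec M\vec M\Trans)$ and a Gaussian is pinned down by its mean and covariance, everything reduces to verifying
\[
\vec M\vec M\Trans=\vec W\sk\vec W-\vec W_M\sk\vec W_M .
\]

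For that verification I would first note that $\vec\Gamma_N$ is symmetric (immediate from $[\vec\Gamma_N]_{ij,kl}=\tfrac12\delta_{ik}\delta_{jl}+\tfrac12\delta_{il}\delta_{jk}$), so $\vec M\vec M\Trans=\vec\Gamma_N(\vec L_1\kr\vec L_2)(\vec L_1\kr\vec L_2)\Trans\vec\Gamma_N$. Applying \eqref{eq:K4} to the transpose and then \eqref{eq:K2} to the product of two Kronecker products turns the middle factor into $(\vec L_1\vec L_1\Trans)\kr(\vec L_2\vec L_2\Trans)=(\vec W+\vec W_M)\kr(\vec W-\vec W_M)$. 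Hence $\vec M\vec M\Trans=\vec\Gamma_N\big((\vec W+\vec W_M)\kr(\vec W-\vec W_M)\big)\vec\Gamma_N$, which is exactly $(\vec W+\vec W_M)\sk(\vec W-\vec W_M)$ by the definition of the symmetric Kronecker product, and \eqref{eq:sk_factorization} rewrites this as $\vec W\sk\vec W-\vec W_M\sk\vec W_M$, completing the argument. The remark then comes for free: the sample equals $\vec\Gamma_N\big((\vec L_1\kr\vec L_2)\vecm{\vec X}\big)$, and since $\vec\Gamma_N\vecm{\vec C}=\tfrac12\vecm{\vec C+\vec C\Trans}$ is always the vectorization of a symmetric matrix, the drawn $\vec Y$ is symmetric.

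The hard part is essentially nonexistent here -- it is bookkeeping with identities already established in the appendix. The only points needing a moment of care are (i) justifying the existence of the Cholesky factors of $\vec W\pm\vec W_M$ from the positive-semidefiniteness built into the hypotheses (rather than positive definiteness), and (ii) matching the order of the two factors, so that $\vec L_1$ sits in the left slot of the Kronecker product and $\vec L_2$ in the right slot; then \eqref{eq:sk_factorization}, read as $(\vec A+\vec B)\sk(\vec A-\vec B)=\vec A\sk\vec A-\vec B\sk\vec B$ with $\vec A=\vec W$ and $\vec B=\vec W_M$, produces the signs exactly as claimed.
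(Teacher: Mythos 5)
Your proof is correct and follows essentially the same route as the paper's: both compute the covariance of the linear image $\vec\Gamma(\vec L_+\kr\vec L_-)\vecm{\vec X}$, collapse it to $(\vec W+\vec W_M)\sk(\vec W-\vec W_M)$ via the Kronecker identities and the definition of $\sk$, and finish with \eqref{eq:sk_factorization}. Your only additions are explicit care about the existence of the semi-definite Cholesky factors and the resolution of the statement's $\vec L_1,\vec L_2$ versus $\vec L_+,\vec L_-$ notation, both of which you handle correctly.
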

\begin{proof}
As $\vecm{\vec X}$ is standard normal, $\vec{\Gamma}(\vec L_+\kr\vec L_-)\vecm{\vec X}$ is distributed Gaussian with mean $\vec 0$ and covariance matrix $\vec{\Gamma}(\vec L_+\kr\vec L_-)(\vec{\Gamma}(\vec L_+\kr\vec L_-))\Trans$.
\begin{align}
\vec{\Gamma}(\vec L_+\kr\vec L_-)\left[\vec{\Gamma}(\vec L_+\kr\vec L_-)\right]\Trans&=\vec{\Gamma}(\vec L_+\kr\vec L_-)(\vec L_+\Trans\kr\vec L_-\Trans)\vec\Gamma
\\&=(\vec L_+\vec L_+\Trans)\sk(\vec L_-\vec L_-\Trans)
\\&=(\vec W+\vec W_M)\sk(\vec W-\vec W_M)
\end{align}
According to Equation \eqref{eq:sk_factorization}: $(\vec W+\vec W_M)\sk(\vec W-\vec W_M)=\vec W\sk\vec W-\vec W_M\sk\vec W_M$.
\end{proof}

\section{Inducing Input Methods}
\label{app:dtc}
This section contains the proof of Proposition \ref{thm:dtc}, introduced on page \pageref{thm:dtc}, and, for the readers convenience, restated below, along with the referenced equations.
\DTCprop*
The mentioned equations are 
\begin{align}
\eqrepeat{eq:prior}\text{,}\\
\eqrepeat{eq:w_requirement}\text{,}\\
\eqrepeat{eq:observation_operator}\text{,}\\
\text{and }\eqrepeat{eq:lk_dtc}.
\end{align}
\cref{thm:dtc} follows from the more general \cref{thm:kernel_post}, below.

\newcommand{\wXu}{\vec{W}_{\!\!\!M}}
\newcommand{\kXu}{\Km}
\begin{proposition}
\label{thm:kernel_post}
Consider the prior of \cref{eq:prior} (without the restriction $\kw=k$) and the likelihood defined in \cref{ex:dtc}.
The posterior over $k$ is $p(k\mid \vec Y = \T k)=\N(k; \km, \ksigm)$ with posterior mean
\begin{align}
\km(\vec a, \vec b)&=\kmu(\vec a, \vec b)+\kw(\vec a, \vec X_U)\S(\S\Trans \wXu\S)^{-1}\S\Trans \kXu \vec S(\S\Trans \wXu\S)^{-1}\S\Trans\kw(\vec X_U, \vec b) \label{eq:ind_post_mean}
\\&-\kw(\vec a, \vec X_U)\S(\S\Trans \wXu\S)^{-1}\S\Trans \kmu(\vec X_U, \vec X_U)\vec S(\S\Trans \wXu\S)^{-1}\S\Trans\kw(\vec X_U, \vec b)\notag
\end{align}
and posterior variance
\begin{align}
\ksigm(k(\vec a,& \vec b), k(\vec c, \vec d))=\frac{1}{2}\kw(\vec a, \vec c)\kw(\vec b,\vec d)+\frac{1}{2}\kw(\vec a, \vec d)\kw(\vec b, \vec c)\label{eq:ind_post_var}
\\&-\frac{1}{2}\kw(\vec a, \vec X_U)\S(\S\Trans \wXu \S)^{-1}\S\Trans \kw(\vec X_U, \vec c)\kw(\vec b, \vec X_U)\S(\S\Trans \wXu \S)^{-1}\S\Trans \kw(\vec X_U, \vec d)\notag 
\\&-\frac{1}{2}\kw(\vec a, \vec X_U)\S(\S\Trans \wXu \S)^{-1}\S\Trans \kw(\vec X_U, \vec d)\kw(\vec b, \vec X_U)\S(\S\Trans \wXu \S)^{-1}\S\Trans \kw(\vec X_U, \vec c)\notag 
\end{align}
where $\wXu=\kw(\vec X_U, \vec X_U)$.
\end{proposition}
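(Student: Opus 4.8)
The plan is to specialize the generic conditioning identities \eqref{eq:km}--\eqref{eq:ksigm} to the operator $\T$ of \cref{ex:dtc} and then collapse every symmetric-Kronecker expression via \cref{prop:sym_kron}, using throughout the single fact that $\vec\Gamma_P$ is the orthogonal projector onto vectorised symmetric matrices, so that $\vec\Gamma_P^2=\vec\Gamma_P$ and $\vec\Gamma_P$ acts as the identity on any such vector. First I would record how $\T$ acts: by \cref{ex:dtc} and \eqref{eq:K1}, $\Tm{k}=\vec S\Trans k(\vec X_U,\vec X_U)\vec S$, hence $\T k=(\vec S\Trans\kr\vec S\Trans)\vecm{k(\vec X_U,\vec X_U)}$, the vectorisation of a symmetric $P\times P$ matrix. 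Reading the prior and cross-prior covariances off \eqref{eq:w_requirement} and abbreviating $\vec u_{\vec a}\ce\vec S\Trans\kw(\vec X_U,\vec a)$, $\vec H\ce(\vec S\Trans\wXu\vec S)^{-1}$ and $\vec G\ce\vec S\vec H\vec S\Trans$, the three ingredients of \eqref{eq:km}--\eqref{eq:ksigm} are
\begin{align*}
\tomat{(\T\ksig)(k(\vec a,\vec b))}&=\tfrac12\bigl(\vec u_{\vec a}\vec u_{\vec b}\Trans+\vec u_{\vec b}\vec u_{\vec a}\Trans\bigr)=\tomat{\vec\Gamma_P(\vec u_{\vec a}\kr\vec u_{\vec b})},\\
\T(\T\ksig)\Trans&=(\vec S\Trans\kr\vec S\Trans)(\wXu\sk\wXu)(\vec S\kr\vec S)=(\vec S\Trans\wXu\vec S)\sk(\vec S\Trans\wXu\vec S),\\
\vecm{\Y}-\T\kmu&=(\vec S\Trans\kr\vec S\Trans)\vecm{k(\vec X_U,\vec X_U)-\kmu(\vec X_U,\vec X_U)},
\end{align*}
where the first line is just $\T$ applied to the two product kernels in \eqref{eq:w_requirement} and the second uses \eqref{eq:sk_distributive}.

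Since the Gram matrix is $(\vec S\Trans\wXu\vec S)\sk(\vec S\Trans\wXu\vec S)$ and every vector it is applied to in \eqref{eq:km}--\eqref{eq:ksigm} is a vectorised symmetric matrix, \eqref{eq:sk_inverse} --- read exactly in the symmetric-subspace sense of the remark after \cref{prop:sym_kron} --- lets me substitute $\vec H\sk\vec H$ for its inverse. Note that $\vec u_{\vec a}\Trans\vec H\vec u_{\vec b}=\kw(\vec a,\vec X_U)\vec G\,\kw(\vec X_U,\vec b)$ is precisely the repeated block appearing in \eqref{eq:ind_post_mean}--\eqref{eq:ind_post_var}.

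It then remains to assemble the two formulas by pushing $\vec\Gamma_P$ through $\kr$ and $\sk$ with \eqref{eq:gamma_one}, \eqref{eq:gamma_kron}, \eqref{eq:K1} and \eqref{eq:K2}. For the mean, $\vec H\sk\vec H$ applied to the (symmetric) residual yields $\vecm{\vec H\vec S\Trans\bigl(k(\vec X_U,\vec X_U)-\kmu(\vec X_U,\vec X_U)\bigr)\vec S\vec H}$, again a vectorised symmetric matrix, on which the $\vec\Gamma_P$ coming from $(\T\ksig)\Trans$ acts trivially; contracting with $\vec u_{\vec a},\vec u_{\vec b}$ leaves the single term $\kw(\vec a,\vec X_U)\vec G\bigl(k(\vec X_U,\vec X_U)-\kmu(\vec X_U,\vec X_U)\bigr)\vec G\,\kw(\vec X_U,\vec b)$, i.e.\ \eqref{eq:ind_post_mean}. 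For the variance I would keep the prior term in its symmetrised two-summand form \eqref{eq:w_requirement}, and for the correction $(\T\ksig)\Trans(\T(\T\ksig)\Trans)^{-1}(\T\ksig)$ expand the $\vec\Gamma_P$ sitting inside $(\T\ksig)(\vec c,\vec d)$; exactly one symmetrisation survives, because it is already present in $(\T\ksig)$, giving $\tfrac12(\vec u_{\vec a}\Trans\vec H\vec u_{\vec c})(\vec u_{\vec b}\Trans\vec H\vec u_{\vec d})+\tfrac12(\vec u_{\vec a}\Trans\vec H\vec u_{\vec d})(\vec u_{\vec b}\Trans\vec H\vec u_{\vec c})$, which is the subtrahend of \eqref{eq:ind_post_var}. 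Finally \cref{thm:dtc} drops out on setting $\kmu\equiv0$, $\kw=k$, $\vec S=\vec I$: then $\vec G=k(\vec X_U,\vec X_U)^{-1}$ and the mean telescopes to $k(\vec a,\vec X_U)k(\vec X_U,\vec X_U)^{-1}k(\vec X_U,\vec b)=k_{\SoR{}}$.

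\textbf{Where the work is.} The Kronecker/vec manipulations are routine; the only place demanding care is the bookkeeping of the symmetrisers $\vec\Gamma_P$ --- invoking \eqref{eq:sk_inverse} solely on the symmetric subspace, and tracking why \emph{no} spurious factor $\tfrac12$ survives in the mean whereas \emph{one} symmetrisation survives in the variance. I would isolate this in a one-line lemma: for symmetric $\vec M$ one has $(\vec v\kr\vec w)\Trans\vec\Gamma_P\vecm{\vec M}=\vec v\Trans\vec M\vec w$.
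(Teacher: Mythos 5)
Your proposal is correct and follows essentially the same route as the paper: you re-derive inline exactly the two computations the paper isolates in \cref{lem:t} (namely $\T\ksig(k(\vec a,\vec b),\cdot)$ as a symmetrised rank-one matrix and $\T(\T\ksig)\Trans$ as a symmetric Kronecker product), then invert via \eqref{eq:sk_inverse} on the symmetric subspace and contract, just as the paper does with \eqref{eq:vec_trace}. The only difference is presentational bookkeeping --- your explicit tracking of $\vec\Gamma_P$ and the rank-one specialisation of the trace identity versus the paper's direct use of \eqref{eq:vec_trace} --- and your accounting of the surviving factors of $\nicefrac{1}{2}$ in mean and variance checks out.
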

\begin{proof}
The proof is tedious linear algebra.
If prior and likelihood are Gaussian, so is the posterior with mean and variance: 
\begin{align}
\km(\vec a, \vec b)&=\kmu(\vec a, \vec b) - (\T\ksig(k(\vec a, \vec b), \cdot))\Trans(\T(\T\kw)\Trans)^{-1}\vecm{\vec Y-\vec S\Trans\kmu(\vec X_U, \vec X_U)\vec S},
\\\ksigm(k(\vec a, \vec b), k(\vec c, \vec d))&=\ksig((\vec a, \vec b), (\vec c, \vec d))-(\T\ksig((\vec a, \vec b), (\cdot, \cdot)))\Trans(\T(\T\ksig)\Trans)^{-1}\T\ksig(\vec c, \vec d), (\cdot, \cdot))
\end{align}
With \cref{lem:t} and \cref{eq:sk_inverse}, we can write
\begin{align}
&(\T\ksig(k(\vec a, \vec b), \cdot))\Trans(\T(\T\kw)\Trans)^{-1}
\\&=\frac{1}{2}\vecmtrans{\vec S\Trans \kw(\vec X_U, \vec a)\kw(\vec b, \vec X_U)+ \kw(\vec X_U, \vec b)\kw(\vec a, \vec X_U))\vec S}\left((\vec S\Trans\wXu\vec S)^{-1}\sk (\vec S\Trans \wXu \vec S)^{-1}\right)
\\&=\frac{1}{2}\vecmtrans{(\vec S\Trans \wXu \vec S)^{-1}\vec S\Trans \kw(\vec X_U, \vec a)\kw(\vec b, \vec X_U)+ \kw(\vec X_U, \vec b)\kw(\vec a, \vec X_U))\vec S(\vec S\Trans \wXu \vec S)^{-1}}
\end{align}
and, using \cref{eq:vec_trace}, obtain for \cref{eq:ind_post_mean}:
\begin{align}
\km(\vec a, \vec b)
&=\kmu(\vec a, \vec b)
\\&+\frac{1}{2}\tr{(\vec S\Trans \wXu \vec S)^{-1}\vec S\Trans \kw(\vec X_U, \vec a)\kw(\vec b, \vec X_U)\vec S(\vec S\Trans \wXu \vec S)^{-1}(\Y-\kmu(\vec X_U, \vec X_U))} \notag
\\&+\frac{1}{2}\tr{(\vec S\Trans \wXu \vec S)^{-1}\vec S\Trans \kw(\vec X_U, \vec b)\kw(\vec a, \vec X_U)\vec S(\vec S\Trans \wXu \vec S)^{-1}(\Y-\kmu(\vec X_U, \vec X_U))} \notag
\\&=\kmu(\vec a, \vec b)+\kw(\vec a, \vec X_U)\S(\S\Trans \wXu\S)^{-1}\S\Trans\kXu\vec S(\S\Trans \wXu\S)^{-1}\S\Trans\kw(\vec X_U, \vec b)
\\&-\kw(\vec a, \vec X_U)\S(\S\Trans \wXu\S)^{-1}\S\Trans\kmu(\vec X_U, \vec X_U)\vec S(\S\Trans \wXu\S)^{-1}\S\Trans\kw(\vec X_U, \vec b)\notag
\end{align}
The derivation for Eq.~\eqref{eq:ind_post_var} follows analogously.
\end{proof}
\begin{lemma}
\label{lem:t}
Let $\T$ be defined by \cref{eq:lk_dtc}.
\begin{align}
\T \kw(k(\vec a, \vec b), \cdot)&=\frac{1}{2}\vecm{\vec S\Trans \left(\kw(\vec X_U, \vec a)\kw(\vec b, \vec X_U)+\kw(\vec X_U, \vec b)\kw(\vec a, \vec X_U)\right)\vec S} \label{eq:Tw}
\\\T (\T \kw(\cdot, \cdot))\Trans&=(\vec S\Trans \wXu\vec S)\sk (\vec S\Trans \wXu \vec S) \label{eq:TTw}
\end{align}
\end{lemma}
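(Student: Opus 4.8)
The plan is to reduce both identities to the single elementary fact that, because the densities $p_i$ in \eqref{eq:lk_dtc} are weighted sums of Dirac deltas supported on the inducing inputs, the operator $\T$ acts on \emph{any} bivariate function $g$ by sampling it on the finite grid $\vec X_U\times\vec X_U$ and conjugating with $\vec S$, i.e.\ $\Tm{g}=\vec S\Trans g(\vec X_U,\vec X_U)\vec S$ and hence $\T g=\vecm{\vec S\Trans g(\vec X_U,\vec X_U)\vec S}$. This is exactly the computation of \cref{ex:dtc}, now used with an integrand other than $k$, and I would state it once at the start. Throughout, the symbol appearing in \eqref{eq:Tw}--\eqref{eq:TTw} is the prior covariance $\ksig$ of \eqref{eq:w_requirement} (built from $\kw$), applied in one or both of its pair-of-inputs slots.

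For \eqref{eq:Tw} I would apply this sifting formula to the bivariate function $g(\vec c,\vec d)\ce\ksig(k(\vec a,\vec b),k(\vec c,\vec d))$, which by \eqref{eq:w_requirement} equals $\tfrac12\kw(\vec a,\vec c)\kw(\vec b,\vec d)+\tfrac12\kw(\vec a,\vec d)\kw(\vec b,\vec c)$. Evaluating at the grid turns the scalar factors into the outer products $\kw(\vec X_U,\vec a)\kw(\vec b,\vec X_U)$ and $\kw(\vec X_U,\vec b)\kw(\vec a,\vec X_U)$, so $g(\vec X_U,\vec X_U)=\tfrac12\big(\kw(\vec X_U,\vec a)\kw(\vec b,\vec X_U)+\kw(\vec X_U,\vec b)\kw(\vec a,\vec X_U)\big)$; conjugating by $\vec S$ and vectorizing is then precisely \eqref{eq:Tw}, using symmetry of $\kw$ to identify $\kw(\vec x_{u_k},\vec a)$ with $\kw(\vec a,\vec x_{u_k})$.

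For \eqref{eq:TTw} I would write the $\big((ij),(kl)\big)$ entry of $\T(\T\ksig)\Trans$ as the quadruple integral $\iiiint\ksig\big(k(\vec x_1,\vec x_2),k(\vec x_3,\vec x_4)\big)\,p_i(\vec x_1)p_j(\vec x_2)p_k(\vec x_3)p_l(\vec x_4)$, substitute the deltas, and expand $\ksig$ via \eqref{eq:w_requirement}. Each of the two resulting terms factorizes the quadruple sum into a product of two double sums, and each double sum collapses to an entry of $\vec C\ce\vec S\Trans\wXu\vec S$, giving $\tfrac12\big(\vec C_{ik}\vec C_{jl}+\vec C_{il}\vec C_{jk}\big)$. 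Comparing with the elementwise formula $(\vec C\sk\vec C)_{ij,kl}=\tfrac14\big(\vec C_{ik}\vec C_{jl}+\vec C_{il}\vec C_{jk}+\vec C_{ik}\vec C_{jl}+\vec C_{il}\vec C_{jk}\big)=\tfrac12\big(\vec C_{ik}\vec C_{jl}+\vec C_{il}\vec C_{jk}\big)$ from \cref{app:kronecker} (set $\vec A=\vec B=\vec C$) finishes it.

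The only genuine care needed is bookkeeping: keeping the double indices $(ij)$ straight under the $\vecm{}$ operator, respecting the convention that $[\vec S]_{\cdot i}$ holds the weights of $p_i$, and checking that the symmetrization built into $\ksig$ matches the symmetrization in the definition of $\sk$. I expect the index-matching in \eqref{eq:TTw} to be the fiddliest step, but it is routine; alternatively one can bypass indices altogether by restricting $\ksig$ to $\vec X_U\times\vec X_U$, recognizing it as $\wXu\sk\wXu$, and pushing $\vec S\kr\vec S$ through using \eqref{eq:gamma_one} and \eqref{eq:sk_distributive} --- but the elementwise route is the most transparent and is what I would present.
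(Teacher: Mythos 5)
Your proof is correct. For \eqref{eq:Tw} it is essentially the paper's own argument: substitute the Dirac mixture \eqref{eq:lk_dtc} into the double integral, expand $\ksig$ via \eqref{eq:w_requirement}, and collapse the double sum into $\frac{1}{2}\vec S\Trans\left(\kw(\vec X_U,\vec a)\kw(\vec b,\vec X_U)+\kw(\vec X_U,\vec b)\kw(\vec a,\vec X_U)\right)\vec S$; you also correctly read the symbol in the lemma statement as the prior covariance $\ksig$ built from $\kw$, which is how the paper uses it.

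For \eqref{eq:TTw} your primary route is a genuine (if minor) variation. The paper rewrites the intermediate quantity in Kronecker form, $(\vec S\Trans\kr\vec S\Trans)\vec\Gamma\,(\kw(\vec X_U,\vec a)\kr\kw(\vec X_U,\vec b))$, so that a second application of $\T$ gives $(\vec S\Trans\kr\vec S\Trans)\vec\Gamma\,(\wXu\kr\wXu)\,\vec\Gamma\,(\vec S\kr\vec S)$, and then invokes \eqref{eq:sk_distributive} to obtain $(\vec S\Trans\wXu\vec S)\sk(\vec S\Trans\wXu\vec S)$. You instead evaluate the quadruple integral entrywise: each of the two addends of \eqref{eq:w_requirement} factorizes the quadruple sum into two collapsed double sums, each an entry of $\vec C\ce\vec S\Trans\wXu\vec S$, yielding $\frac{1}{2}(\vec C_{ik}\vec C_{jl}+\vec C_{il}\vec C_{jk})$, which you match against the componentwise formula for $\vec C\sk\vec C$ from \cref{app:kronecker}. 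Both are sound, and your index bookkeeping checks out. Your elementwise route is more elementary --- it needs only the componentwise definition of $\sk$ and not the identities \eqref{eq:gamma_kron} and \eqref{eq:sk_distributive} --- at the cost of heavier double-index manipulation; the paper's route keeps everything at the level of matrix algebra and reuses the symmetric-Kronecker machinery it has already established. Since you explicitly note the Kronecker-algebra route as an alternative, nothing is missing from your argument.
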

\begin{proof}
Denote with $\tomat{~}$ the complement of the vectorization operator, \ie{}$\tomat{\vecm{\vec A}}=\vec A$.
Define the matrix $\vec S\in\Re^{N\times M}$ as $\vec S_{ij}=s_{ij}$ and denote with $\vec S_l$ the $l$-th column of $\vec S$. 
Also recall that by Eq. \eqref{eq:w_requirement} $\ksig(k(\vec a, \vec b), k(\vec x, \vec z))=\frac{1}{2}(\kw(\vec a, \vec x)\kw(\vec b, \vec z)+\kw(\vec a, \vec z)\kw(\vec b, \vec x))$.
\begin{align}
&[\tomat{\T[\ksig(k(\vec a, \vec b), k(\cdot, \cdot))]}]_{ij}
\\&=\iint\! \ksig (k(\vec a, \vec b), k(\vec x, \vec z)) \left(\sum_{l=1}^M s_{il}\delta(\vec x - \vec u_l)\right)\left(\sum_{l=1}^M s_{jl}\delta(\vec z - \vec u_l)\right) \dx{\vec x}\dx{\vec z}
\\&=\iint\! \frac{1}{2}(\kw(\vec a, \vec x)\kw(\vec b, \vec z)+\kw(\vec a, \vec z)\kw(\vec b, \vec x)) \left(\sum_{l=1}^M s_{il}\delta(\vec x - \vec u_l)\right)\left(\sum_{l=1}^M s_{jl}\delta(\vec z - \vec u_l)\right) \dx{\vec x}\dx{\vec z}
\\&={\frac{1}{2}\sum_{m=1}^M\sum_{l=1}^M \vec S_{im}\vec S_{jl}(\kw(\vec a, \vec u_m)\kw(\vec b, \vec u_l)+\kw(\vec a, \vec u_l)\kw(\vec b, \vec u_m))}
\\&=\frac{1}{2}\left[\vec S\Trans \kw(\vec X_U, \vec a)\kw(\vec b, \vec X_U)\vec S+\vec S\Trans \kw(\vec X_U, \vec b)\kw(\vec a, \vec X_U)\vec S\right]_{ij}
\\&=\frac{1}{2}\left[\vec S\Trans (\kw(\vec X_U, \vec a)\kw(\vec b, \vec X_U)+\kw(\vec X_U, \vec b)\kw(\vec a, \vec X_U))\vec S\right]_{ij} 
\\& \text{which shows Eq.~}\eqref{eq:Tw} \notag
\\&=\left[\tomat{(\vec S\Trans \kr \vec S\Trans)\vec \Gamma \vecm{\kw(\vec X_U, \vec a)\kw(\vec b, \vec X_U)}}\right]_{ij}
\\&=\left[\tomat{(\vec S\Trans \kr \vec S\Trans)\vec \Gamma (\kw(\vec X_U, \vec a)\kr \kw(\vec X_U, \vec b))}\right]_{ij}
\end{align}
Repeating above derivations shows the second statement, \cref{eq:TTw}:
\begin{align}
\T (\T \ksig)\Trans&=(\vec S\Trans \kr \vec S\Trans)\vec \Gamma (\kw(\vec X_U, \vec X_U)\kr\kw(\vec X_U, \vec X_U))\vec \Gamma (\vec S\kr\vec S)
\\&=(\vec S \kr \vec S)\Trans(\kw(\vec X_U, \vec X_U)\sk\kw(\vec X_U, \vec X_U))(\vec S\kr\vec S)
\\&=(\S\Trans \wXu \S)\sk (\S\Trans \wXu \S) & \text{Equation \eqref{eq:sk_distributive}}
\end{align}
\end{proof}

\begin{proposition}
\label{thm:spd_mean}
If $\kmu=0$, $\S$ has rank $M$, and $k$ and $\kw$ are positive definite kernel functions then the posterior mean in \cref{eq:ind_post_mean} is symmetric and positive semi-definite.
\end{proposition}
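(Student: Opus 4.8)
The plan is to specialize \cref{eq:ind_post_mean} to the case $\kmu=0$, rewrite the surviving expression as a finite-rank kernel $\km(\vec a,\vec b)=\bphi(\vec a)\Trans\vec M\,\bphi(\vec b)$ with an explicit feature map and an explicit ``middle'' matrix $\vec M$, and then read off symmetry and positive semi-definiteness directly from the structure of $\vec M$.

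First I would substitute $\kmu=0$ into \cref{eq:ind_post_mean}. The second line vanishes identically, leaving
\begin{align}
\km(\vec a,\vec b)=\kw(\vec a,\vec X_U)\,\vec B\,\kXu\,\vec B\,\kw(\vec X_U,\vec b),\qquad \vec B\ce\S(\S\Trans\wXu\S)^{-1}\S\Trans,
\end{align}
where $\wXu=\kw(\vec X_U,\vec X_U)$ and $\kXu=k(\vec X_U,\vec X_U)$ is the exact Gram matrix of $k$ on the inducing set. Well-definedness is the one step that uses the hypotheses on $\S$ and $\kw$: since $\kw$ is a positive definite kernel, $\wXu\succ0$, and the rank hypothesis on $\S$ then makes $\S\Trans\wXu\S$ positive definite, hence invertible. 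Because $\S\Trans\wXu\S$ is symmetric, so is its inverse, and therefore $\vec B=\vec B\Trans$.

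Next I would set $\vec M\ce\vec B\kXu\vec B$ and verify its two relevant properties. Symmetry is immediate from $\vec B=\vec B\Trans$ and $\kXu=\kXu\Trans$: $\vec M\Trans=\vec B\Trans\kXu\Trans\vec B\Trans=\vec B\kXu\vec B=\vec M$. For positive semi-definiteness, recall that $k$ being a positive definite kernel means $\kXu\succeq0$, so for every vector $\vec v$ one has $\vec v\Trans\vec M\vec v=(\vec B\vec v)\Trans\kXu(\vec B\vec v)\geq0$; thus $\vec M\succeq0$. Writing $\bphi(\vec a)\ce\kw(\vec X_U,\vec a)\in\Re^M$ and using $\kw(\vec a,\vec X_U)=\bphi(\vec a)\Trans$ (symmetry of $\kw$), this yields the finite-rank representation $\km(\vec a,\vec b)=\bphi(\vec a)\Trans\vec M\,\bphi(\vec b)$ as in \cref{eq:deg_kern}, with the middle matrix only guaranteed semi-definite (which is exactly why the conclusion is ``semi-definite'' rather than definite).

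Finally I would deduce the stated claim. Symmetry of $\km$ as a function follows since $\km(\vec b,\vec a)=\bphi(\vec b)\Trans\vec M\bphi(\vec a)$ is a scalar, hence equal to its transpose $\bphi(\vec a)\Trans\vec M\Trans\bphi(\vec b)=\bphi(\vec a)\Trans\vec M\bphi(\vec b)=\km(\vec a,\vec b)$. For positive semi-definiteness, take arbitrary inputs $\vec z_1,\dots,\vec z_n\in\mathbb X$ and set $\bPhi\ce[\,\bphi(\vec z_1)\ \cdots\ \bphi(\vec z_n)\,]\in\Re^{M\times n}$; then the Gram matrix of $\km$ at these points is $[\km(\vec z_i,\vec z_j)]_{ij}=\bPhi\Trans\vec M\bPhi$, and for any $\vec c\in\Re^n$ we get $\vec c\Trans\bPhi\Trans\vec M\bPhi\vec c=(\bPhi\vec c)\Trans\vec M(\bPhi\vec c)\geq0$ because $\vec M\succeq0$. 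Hence every Gram matrix of $\km$ is symmetric positive semi-definite, which is the assertion. There is no genuine obstacle beyond bookkeeping; the only points meriting care are the well-definedness argument (the rank hypothesis on $\S$ together with $\kw\succ0$ making $\S\Trans\wXu\S$ invertible) and the observation that the ``sandwich'' $\vec B\kXu\vec B$ with $\vec B$ symmetric and $\kXu\succeq0$ is precisely what forces symmetry and positive semi-definiteness at the same time.
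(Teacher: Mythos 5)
Your argument is correct and follows essentially the same route as the paper's proof: specialize to $\kmu=0$, observe that the remaining expression is a sandwich $\bphi(\vec a)\Trans\vec M\bphi(\vec b)$ with $\vec M=\S(\S\Trans\wXu\S)^{-1}\S\Trans\kXu\S(\S\Trans\wXu\S)^{-1}\S\Trans$ symmetric and positive semi-definite, and conclude. Your write-up is somewhat more explicit about why the rank hypothesis on $\S$ is needed for invertibility and about the final Gram-matrix step $\bPhi\Trans\vec M\bPhi\succeq 0$, which the paper leaves implicit.
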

\begin{proof}
%
%
With $\kmu=0$ the expression for $\km$ from \cref{thm:kernel_post} simplifies to 
\begin{align}
\km(\vec x, \vec z)=&\kw(\vec x, \vec X_U)\vec S(\S\Trans \wXu\S)^{-1}\S\Trans \kXu \S (\S\Trans \wXu\S)^{-1}\S\Trans \kw(\vec X_U, \vec z).
\end{align}
The function $\km$ is symmetric since $k$ is symmetric.
The bivariate function $\km$ is said to be positive (semi-)definite iff for all $n\in \mathbb{N}$ and for all $\vec Z \in \mathbb{X}$, $\km(\vec Z, \vec Z)$ is a positive (semi-)definite matrix.
Since $k(\vec X_U, \vec X_U)$ is a symmetric and positive definite (s.p.d.) matrix, so is $\vec S\Trans k(\vec X_U, \vec X_U)\vec S$ for arbitrary $\vec S$.
The same argument holds for $\S\Trans \wXu \S$.
Since $\S$ is rank $M$, $(\S\Trans \wXu \S)^{-1}$ exists and the inverse of an s.p.d.~matrix is s.p.d.~as well.
Therefore $\vec S(\S\Trans \wXu\S)^{-1}\S\Trans \kXu \S (\S\Trans \wXu\S)^{-1}\S$ is symmetric and positive semi-definite.
This completes the proof.
\end{proof}

\section{Projected Bayes Regressor}
\label{app:pbr}
This section contains the proof of Proposition \ref{thm:pbr} restated below.
\PBRprop*
\begin{proof}
Given \cref{lem:t_eigenfunction} below, all that remains is to substitute $\km$ in \cref{eq:mean} which evaluates to 
\begin{align}
\label{eq:trecate}
\phi(\vec x_*)\Trans(\vec \Phi\vec \Phi\Trans + \noise \vec\Lambda^{-1})\vec \Phi\Trans\vec y.
\end{align}
Comparing $b(\vec x)$ in Definition 1 in \cite{Trecate.1999} and \cref{eq:trecate} one observes that both are equivalent.
\end{proof}


\begin{lemma}
\label{lem:t_eigenfunction}
Let $\phi_i$ $i=1, ..., P$ be orthogonal Eigenfunctions of $k$ with respect to a density $\nu$ on $\mathbb{X}$, \ie{}
\begin{align}
\int k(\vec x, \vec z)\phi_i(\vec z)\nu(\vec z)\dx{\vec z}&=\lambda_i\phi_i(\vec x)
\\\int \phi_i(\vec z)\phi_j(\vec z)\nu(\vec z)\dx{\vec z}&=\delta_{ij}
\end{align}
where $\lambda_i\in \Re$ and $\delta_{ij}$ is the Kronecker delta.
Under the prior of \cref{eq:prior} with $\kmu\ce 0$ and $w\ce k$ and the likelihood defined in \cref{ex:rks} with $p_i(\vec x)=\phi_i(\vec x)\nu(\vec x)$, the approximate kernel (\cref{eq:km}) evaluates to
\begin{align}
\km(\vec x, \vec z)&=\sum_{j=1}^M\lambda_j \phi_j(\vec x)\phi_j(\vec z)=\vec \phi(\vec x)\Trans \vec \Lambda \vec \phi(\vec z)
\end{align}
where $[\vec \phi(\vec x)]_i=\phi_i(\vec x)$ and $\vec \Lambda_{ij}\ce \delta_{ij}\lambda{i}$.
\end{lemma}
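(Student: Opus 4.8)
The plan is to follow the same blueprint as the proof of \cref{thm:kernel_post}: substitute the eigenfunction likelihood from \cref{ex:rks} into the general posterior-mean formula \eqref{eq:km}, with $\kmu\ce 0$ and $\kw\ce k$, and then simplify using the symmetric-Kronecker calculus from \cref{prop:sym_kron}. Concretely, the first task is to establish the analogue of \cref{lem:t} for the operator $\T$ built from $p_i(\vec x)=\phi_i(\vec x)\nu(\vec x)$, i.e.\ to evaluate $\T\ksig(k(\vec a,\vec b),\cdot)$ and $\T(\T\ksig)\Trans$ explicitly.

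First I would compute $\T\ksig(k(\vec a,\vec b),\cdot)$. Inserting \eqref{eq:w_requirement} with $\kw=k$ into \eqref{eq:observation_operator}, the integrand factorizes over the two integration variables, and each one-dimensional integral collapses via the Mercer relation $\int k(\vec a,\vec x)\phi_i(\vec x)\nu(\vec x)\dx{\vec x}=\lambda_i\phi_i(\vec a)$. This gives the $ij$-entry $\tfrac12\lambda_i\lambda_j\big(\phi_i(\vec a)\phi_j(\vec b)+\phi_j(\vec a)\phi_i(\vec b)\big)$, that is,
\[
\Tm{\ksig(k(\vec a,\vec b),\cdot)}=\tfrac12\,\vec\Lambda\big(\vec\phi(\vec a)\vec\phi(\vec b)\Trans+\vec\phi(\vec b)\vec\phi(\vec a)\Trans\big)\vec\Lambda ,
\]
a symmetric matrix. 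Applying $\T$ a second time and using the orthonormality $\int\phi_i\phi_j\nu=\delta_{ij}$, each of the two addends of $\ksig$ contributes a product of Kronecker deltas, and one obtains $\T(\T\ksig)\Trans=\vec\Lambda\sk\vec\Lambda$ — the direct counterpart of \eqref{eq:TTw}, now with the diagonal eigenvalue matrix $\vec\Lambda$ in place of $\vec S\Trans\wXu\vec S$. Since $\vec\Lambda$ is diagonal with positive entries (eigenvalues of a positive-definite kernel), \eqref{eq:sk_inverse} yields $(\T(\T\ksig)\Trans)^{-1}=\vec\Lambda^{-1}\sk\vec\Lambda^{-1}$, read in the symmetric-solution sense of \cref{prop:sym_kron}.

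Finally I would substitute into \eqref{eq:km}. With $\kmu=0$ and the observations $\Y=\Tm k=\vec\Lambda$ from \cref{ex:rks}, we get $\km(\vec a,\vec b)=\vecm{\Tm{\ksig(k(\vec a,\vec b),\cdot)}}\Trans(\vec\Lambda^{-1}\sk\vec\Lambda^{-1})\vecm{\vec\Lambda}$. Because $\vec\Lambda$ is symmetric, the symmetric-Kronecker analogue of \eqref{eq:K1} gives $(\vec\Lambda^{-1}\sk\vec\Lambda^{-1})\vecm{\vec\Lambda}=\vecm{\vec\Lambda^{-1}\vec\Lambda\vec\Lambda^{-1}}=\vecm{\vec\Lambda^{-1}}$, and then \eqref{eq:vec_trace} turns the remaining expression into $\tfrac12\tr{\vec\Lambda\big(\vec\phi(\vec a)\vec\phi(\vec b)\Trans+\vec\phi(\vec b)\vec\phi(\vec a)\Trans\big)}=\vec\phi(\vec a)\Trans\vec\Lambda\vec\phi(\vec b)=\sum_j\lambda_j\phi_j(\vec a)\phi_j(\vec b)$, which is the claim; \cref{thm:pbr} then follows by plugging this $\km$ into \eqref{eq:mean} and matching with the definition of the Projected Bayes Regressor in \citet{Trecate.1999}.

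The step I expect to be the main obstacle is the bookkeeping around the rank deficiency of the symmetric Kronecker product: one must verify that every vectorized matrix fed into $\vec\Lambda^{-1}\sk\vec\Lambda^{-1}$ is genuinely symmetric, so that \eqref{eq:sk_inverse} and the SK-version of \eqref{eq:K1} apply in the precise sense spelled out in \cref{prop:sym_kron}, and one must be careful with the factors of $\tfrac12$ coming from the symmetrization in \eqref{eq:w_requirement} when collapsing the eigenfunction integrals. Everything else is routine linear algebra mirroring \cref{lem:t} and \cref{thm:kernel_post}.
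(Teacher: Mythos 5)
Your proposal is correct and follows essentially the same route as the paper's own proof: compute $\Tm{k}=\vec\Lambda$, show $\Tm{\ksig(k(\vec a,\vec b),\cdot)}=\tfrac12\vec\Lambda(\vec\phi(\vec a)\vec\phi(\vec b)\Trans+\vec\phi(\vec b)\vec\phi(\vec a)\Trans)\vec\Lambda$ and $\T(\T\ksig)\Trans=\vec\Lambda\sk\vec\Lambda$, then invert via \eqref{eq:sk_inverse} and collapse with the vec--trace identity \eqref{eq:vec_trace}. The only (immaterial) difference is that you apply $(\vec\Lambda^{-1}\sk\vec\Lambda^{-1})$ to $\vecm{\vec\Lambda}$ while the paper applies it to the left factor; both land on $\vec\phi(\vec a)\Trans\vec\Lambda\vec\phi(\vec b)$, and your caution about symmetry of the vectorized matrices is exactly the right check.
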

\begin{proof}
With a zero prior-mean, the posterior over $k$ (\cref{eq:km}) simplifies to
\begin{align}
\km(\vec x, \vec z)&=(\T \ksig(k(\vec x, \vec z), \cdot)\Trans(\T(\T\ksig)\Trans)^{-1}\T k.
\end{align}
Differing from the proof of Proposition \ref{thm:kernel_post} the observation operator $\T$ (Eq.~\ref{eq:observation_operator}) is of the form:
\begin{align}
[\Tm{k}]_{ij}&=\iint\! k(\vec x, \vec z) \phi_i(\vec x) \phi_j(\vec z) \ \nu(\mathrm{d} \vec x) \nu(\mathrm{d} \vec z)
\\&=\lambda_i\int \phi_i(\vec z) \phi_j(\vec z)\ \nu(\mathrm{d}\vec z)
\\&=\lambda_i \delta_{ij}
\\&=\vec\Lambda_{ij}.
\end{align}
The observation operator $\T$ applied to the covariance function $\kw$ evaluates to:
\begin{align}
[\Tm{\ksig (k(\vec a, \vec b), k(\cdot, \cdot\cdot ))}]_{ij}&=\left[\Tm{\left[\frac{1}{2}k(\vec a, \cdot)k(\vec b, \cdot\cdot)+\frac{1}{2}k(\vec a, \cdot\cdot)k(\vec b, \cdot)\right]}\right]_{ij}
\\&=\frac{1}{2}\iint\! k(\vec a, \vec x)k(\vec b, \vec z)\phi_i(\vec x)\phi_j(\vec z)\nu(\mathrm{d} \vec x) \nu(\mathrm{d} \vec z)
\\&+\frac{1}{2}\iint\! k(\vec a, \vec x)k(\vec b, \vec z)\phi_j(\vec x)\phi_i(\vec z)\nu(\mathrm{d} \vec x) \nu(\mathrm{d} \vec z) \notag
\\\label{eq:app_rks_h1}
&=\frac{1}{2}\lambda_i\lambda_j(\phi_i(\vec a)\phi_j(\vec b)+\phi_i(\vec b)\phi_j(\vec a))
\\&=\frac{1}{2}[\vec\Lambda(\vec\phi(\vec a)\vec \phi(\vec b)\Trans+\vec\phi(\vec b)\vec\phi(\vec a)\Trans)\vec\Lambda]_{ij}.
\end{align}
Applying $\T$ again, leads to
\begin{align}
[\T(\T\ksig)\Trans]_{ij,gh}&=\iint \! [\T\ksig(k(\vec x, \vec z), k(\cdot, \cdot\cdot)]\Trans_{gh}\phi_i(\vec x)\phi_j(\vec z)\ \nu(\mathrm{d}\vec x)\nu(\mathrm{d}\vec z)
\\&\text{using Equation \eqref{eq:app_rks_h1}}
\\&=\frac{1}{2}\lambda_g\lambda_h\iint \! (\phi_g(\vec x)\phi_h(\vec z)+\phi_g(\vec z)\phi_h(\vec x))\phi_i(\vec x)\phi_j(\vec z) \ \nu(\mathrm{d}\vec x)\nu(\mathrm{d}(\vec z)
\\&=\frac{1}{2}\lambda_g\lambda_h\int\!(\delta_{ig}\phi_h(\vec z)+\delta_{ih}\phi_g(\vec z))\phi_j(\vec z)\ \nu(\mathrm{d}(\vec z)
\\&=\frac{1}{2}\lambda_g\lambda_h(\delta_{ig}\delta_{jh}+\delta_{ih}\delta_{jg})
\\&=[{\vec\Lambda}\sk{\vec\Lambda}]_{ij,gh}
\end{align}
where the last equation follows from the definition of the symmetric Kronecker product.
This implies for the posterior mean over the kernel:
\begin{align}
\km(\vec a, \vec b)&=(\T \ksig(k(\vec a, \vec b), \cdot)\Trans(\T(\T\ksig)\Trans)^{-1}\T k
\\&=\frac{1}{2}\vecm{{\vec\Lambda}(\vec\phi(\vec a)\vec \phi(\vec b)\Trans+\vec\phi(\vec b)\vec\phi(\vec a)\Trans){\vec\Lambda}}\Trans({\vec\Lambda}\sk{\vec\Lambda})^{-1}
\vecm{{\vec\Lambda}}
\\&=\frac{1}{2}\vecm{(\vec\phi(\vec a)\vec \phi(\vec b)\Trans+\vec\phi(\vec b)\vec\phi(\vec a)\Trans)}\Trans \vecm{\vec\Lambda}
\\&\text{applying Equation \eqref{eq:vec_trace}:}
\\&=\frac{1}{2}\tr{{\vec\Lambda}(\vec\phi(\vec a)\vec \phi(\vec b)\Trans+\vec\phi(\vec b)\vec\phi(\vec a)\Trans)}
\\&=\vec\phi(\vec a)\Trans{\vec\Lambda}\vec\phi(\vec b).
\end{align}
\end{proof}

\section{Benchmark Datasets}
\label{sec:datasets}
\cref{tbl:dataset_credits} describes purpose and origin of standard benchmark datasets used for Gaussian process regression.
More information on PRECIPITATION can be found at \url{http://www.image.ucar.edu/Data/US.monthly.met/}.
It appears that the datasets AILERONS, ELEVATORS and POLETELECOMM are no longer available under the link \url{https://www.dcc.fc.up.pt/~ltorgo/Regression/DataSets.html}.
However, all files are part of this submission.
\begin{table}
	\centering
	\begin{tabular}{lp{.15\textwidth}p{.25\textwidth}p{.35\textwidth}}
		\hline
		name & reference & url & description \\
		\hline
		ABALONE & \textcite{Nash1994abalone, Waugh1995abalone, Dua2019uci} & \url{https://archive.ics.uci.edu/ml/datasets/Abalone} &  age prediction of abalone from physical measurements \\
		AILERONS & \textcite{Camachol1998AileronsElevators}  & n/a & control action prediction on the ailerons of an F16 aircraft \\
		ELEVATORS & \textcite{Camachol1998AileronsElevators} & n/a & control action prediction on the elevators of an F16 aircraft\\
		MPG & \textcite{Quinlan1993mpg, Dua2019uci} & \url{https://archive.ics.uci.edu/ml/datasets/auto+mpg} & fuel consumption prediction in miles per gallon for different attributes of cars\\
		POLETELECOMM & \textcite{Weiss1995Poletelecomm} & n/a & commercial telecommunication application--no further information \\
		PRECIPITATION & \textcite{Vanhatalo2008precipitation} & \url{github.com/gpstuff-dev/gpstuff/blob/master/gp/demo_regression_ppcs.m} & US annual precipitation prediction for the year 1995 \\
		PUMADYN & \textcite{Snelson2006pseudo} &  \url{ftp://ftp.cs.toronto.edu/pub/neuron/delve/data/tarfiles/pumadyn-family/pumadyn-32nm.tar.gz} & acceleration prediction one of the arm links given angles,
		positions and velocities of other links of a \pname{Puma560} robot\\
		SOUND & \textcite{RichTurnerPhDThesis, wilson2015kissGP} & \url{https://github.com/kd383/GPML_SLD/blob/master/demo/sound/audio_data.mat} & sound intensity prediction of a signal recorded over time for missing regions\\
		TOY & introduced in this work & n/a & targets are a draw from a zero-mean Gaussian process with squared exponential kernel (\cref{eq:sq_exp} with $\vec \Lambda=0.25$ and $\theta_f=2$), inputs stem in equal parts from a Gaussian mixture ($\N(0, 1)+\N(1, 0.1)+\N(-0.5, 0.05)$) and the uniform distribution over $[0,1]$\\
		\hline
	\end{tabular}
	\caption{descriptions and sources for all datasets considered in this work.}
	\label{tbl:dataset_credits}
\end{table}

\section{Additional Experiments and Results}
This section consists of figures showing the results of \cref{sec:exp_common_benchmark} for the Mat\'ern kernel, real-time experiments and experiments with the textbook version of conjugate gradients.
\subsection{Real-time Results}
\label{app:experiments_realtime}
This section shows the same results as in \cref{sec:exp_common_benchmark} but over training-time instead of CG-steps.
All figures have been trimmed to the slowest \baseline{} method.
\cref{fig:cg_hybrid_error_se_time} shows how the relative error $\relerr$ develops over time for the squared exponential kernel and \cref{fig:cg_hybrid_grid_time} shows the same for experiments over grid-structured datasets from \cref{sec:exp_cg_grid}. 
For the x-axis values we took the median of all measurements and fitted a quadratic function to these.
\begin{figure}
\resultFigureTable[\\time in $s$&time in $s$&time in $s$]{cg_hybrid_comparison_time}{covSEard}{relative_error}{\legendFull}
\caption{progression of the relative error $\relerr$ over training time for different datasets using the squared-exponential kernel (Eq.~\ref{eq:sq_exp}).
\commonCaption{}
}
\label{fig:cg_hybrid_error_se_time}
\end{figure}

\begin{figure}
\thispagestyle{empty} 
\centering
\gridResultFigureTable[time in $s$]{cg_hybrid_grid_time}
\caption{progression of the relative error $\relerr$ over training time for different datasets using the squared-exponential kernel (Eq.~\ref{eq:sq_exp}).
\commonCaption{}
It may seem surprising that the runs on the $100\times100\times100$ dataset take more than twice as long.
By chance, the dataset contains more extreme values in the kernel matrix, \ie{}smaller than $1e^{-50}$.
Multiplication with these elements takes more time.
}
\label{fig:cg_hybrid_grid_time}
\end{figure}

\subsection{Mat\'ern Kernel Results}
\label{app:experiments_matern}
The figures in this section show the results for the Mat\'ern $\nicefrac{5}{2}$ kernel (\cref{eq:matern}) for the experiment setup described \cref{sec:exp_common_benchmark}.
\cref{fig:cg_hybrid_error_matern} shows the results for the relative error $\relerr$, \cref{fig:cg_hybrid_error_matern_var} and \cref{fig:cg_hybrid_error_matern_llh} the results for $\relerrVar$ and $\relerrNLZ$, respectively.
\cref{fig:cg_hybrid_error_matern_time} displays the relative error over time. 

\begin{figure}\centering
\resultFigureTable{cg_hybrid_comparison}{covMaternard_5}{relative_error}{\legendFull}
\caption{
progression of the relative error $\relerr$ as a function of the number of iterations of \baseline{} and \kmcg{} for different datasets using the Mat\'ern kernel (Eq.~\ref{eq:matern}).
\commonCaption{}
}
\label{fig:cg_hybrid_error_matern}
\end{figure}

\begin{figure}\centering
\resultFigureTable{cg_hybrid_comparison}{covMaternard_5}{relative_error_of_var}{\legendWithoutCG}
\caption{progression of the relative error of the variance $\relerrVar$ as a function of the number of iterations of \baseline{} and \kmcg{} for different datasets using the Mat\'ern kernel (Eq.~\ref{eq:matern}).
\commonCaption{}
}
\label{fig:cg_hybrid_error_matern_var}
\end{figure}

\begin{figure}\centering
\resultFigureTable{cg_hybrid_comparison}{covMaternard_5}{relative_nlZ_error}{\legendWithoutCG}
\caption{
progression of the relative error of the evidence $\relerrNLZ$ as a function of the number of iterations of \baseline{} and \kmcg{} for different datasets using the Mat\'ern kernel (Eq.~\ref{eq:matern}).
\commonCaption{}
}
\label{fig:cg_hybrid_error_matern_llh}
\end{figure}

\begin{figure}
\resultFigureTable[\\time in $s$&time in $s$&time in $s$]{cg_hybrid_comparison_time}{covMaternard_5}{relative_error}{\legendFull}
\caption{progression of the relative error $\relerr$ over training time for different datasets using the Mat\'ern kernel (Eq.~\ref{eq:matern}).
\commonCaption{}
}
\label{fig:cg_hybrid_error_matern_time}
\end{figure}

\subsection{Instability of Textbook Conjugate Gradients}
\label{app:experiments_instability}
The experiments in \cref{sec:exp_cg}, where carried out by running conjugate gradients with full reorthogonalization.
\cref{fig:cg_unstable} demonstrates that for the problems under consideration, the textbook version of conjugate gradients is not sufficiently numerically stable.
With vanilla conjugate gradients in the background, \kmcg{} can run only for a couple of steps before the necessary Cholesky decompositions fail to be computable.
Furthermore, conjugate gradients itself does not converge.
\begin{figure}
\renewcommand{\cgImpl}[1]{(#1)}
\renewcommand{\FOM}{FOM}
\resultFigureTable{cg_unstable}{covSEard}{relative_error}{\legendAllCG}
\caption{progression of the relative error $\relerr$ over 100 CG-steps for different datasets using the squared exponential kernel (\cref{eq:sq_exp}), comparing CG and FOM.
\commonCaption{}
}
\label{fig:cg_unstable}
\end{figure}

\bibliography{bibfile}
\end{document}